\documentclass{article}
% DECIDE NOTATION!
% i'dwrite L^2 and {\cal L}^2
% Expand canonical example in sec 2 throughout the paper
% work out example and the figure in sec 2 too
% structure
%1 abstract+intro      ---> 1.5-2 pages
%2 background          --->  1 page
%3 learning framework  ---> 1.5-2 pages
%4 methods (complexity consideration + DMD): 1-1.5 page 
%5 bounds: 1.5-2 pages.    
%6 experiments - 1.5-2 pages
%7 discussion/related work+perspective of the work: 0.5-1 page.   
%%%%%%%%%%%%%
% if you need to pass options to natbib, use, e.g.:

\PassOptionsToPackage{numbers, sort&compress}{natbib}
% before loading neurips_2022

\usepackage{natbib}
% ready for submission
%\usepackage{neurips_2022}

% to compile a preprint version, e.g., for submission to arXiv, add add the
% [preprint] option:
\usepackage[final]{neurips_2022}

% to compile a camera-ready version, add the [final] option, e.g.:
%     \usepackage[final]{neurips_2022}

% to avoid loading the natbib package, add option nonatbib:
%    \usepackage[nonatbib]{neurips_2022}
\usepackage{graphicx,graphics,amsmath,amssymb,amsfonts,verbatim,bm,color,mathrsfs, enumerate,amsthm,dsfont, subcaption}
\usepackage[normalem]{ulem}
\usepackage[shortlabels]{enumitem}
\usepackage[breaklinks=true,colorlinks,citecolor=blue,linkcolor=blue,urlcolor=blue]{hyperref}

\usepackage{thmtools,thm-restate}

\usepackage[utf8]{inputenc} % allow utf-8 input
\usepackage[T1]{fontenc}    % use 8-bit T1 fonts
\usepackage{hyperref}       % hyperlinks
\usepackage{url}            % simple URL typesetting
\usepackage{booktabs}       % professional-quality tables
\usepackage{amsfonts}       % blackboard math symbols
\usepackage{nicefrac}       % compact symbols for 1/2, etc.
\usepackage{microtype}      % microtypography
\usepackage{xcolor}         % colors

\usepackage{times}
\usepackage{amsmath, nccmath, amssymb}
\usepackage{amsthm}
\usepackage{geometry}
\usepackage{comment}
\usepackage{enumitem}
\usepackage{color}
\usepackage{graphicx}
\usepackage{bbold}
\usepackage{xfrac}
\usepackage{thmtools}
\usepackage[ruled]{algorithm2e}
\usepackage{algorithmic}
\usepackage{mathtools}

%%% CUSTOM NOTATION GOES HERE %%%%

\newtheorem{example}{Example} 
\newtheorem{theorem}{Theorem}
\newtheorem{lemma}{Lemma}
\newtheorem{remark}{Remark}
\newtheorem{definition}{Definition}
\newtheorem{proposition}{Proposition}
\newtheorem{corollary}{Corollary}

%%% MASSI NOTATIONS
\newcommand{\G}{{\cal G}}

%\newcommand{\EZ}{{\hat Z}}
%\newcommand{\ES}{{\hat S}}

%%% END 

%\title{Learning Dynamical Systems via Low Rank Representations of Koopman Operators in Reproducing Kernel Hilbert Spaces}

\title{Learning Dynamical Systems via Koopman Operator Regression in Reproducing Kernel Hilbert Spaces}

% The \author macro works with any number of authors. There are two commands
% used to separate the names and addresses of multiple authors: \And and \AND.
%
% Using \And between authors leaves it to LaTeX to determine where to break the
% lines. Using \AND forces a line break at that point. So, if LaTeX puts 3 of 4
% authors names on the first line, and the last on the second line, try using
% \AND instead of \And before the third author name.

%% OPERATORS 

\DeclareMathOperator*{\argmin}{\ensuremath{\text{\rm arg\,min}}}

\DeclareMathOperator*{\range}{\ensuremath{\text{\rm Im}}}

\DeclareMathOperator*{\cl}{\ensuremath{\text{\rm cl}}}

\DeclareMathOperator{\Ker}{\ensuremath{\text{\rm Ker}}}

\DeclareMathOperator{\tr}{\ensuremath{\text{\rm tr}}}

\DeclareMathOperator*{\rank}{\ensuremath{\text{\rm rank}}}
\DeclareMathOperator*{\Span}{\ensuremath{\text{\rm span}}}

\DeclareMathOperator*{\Spec}{\ensuremath{\text{\rm Sp}}}
\DeclareMathOperator*{\sep}{\ensuremath{\text{\rm sep}}}

%% COMANDS
\providecommand{\norm}[1]{\lVert#1\rVert}
\providecommand{\abs}[1]{\lvert#1\rvert}
\newcommand{\scalarp}[1]{{\langle #1\rangle}_{\RKHS}}
\newcommand{\R}{\mathbb R}
\newcommand{\C}{\mathbb C}
\newcommand{\N}{\mathbb N}

\newcommand{\EE}{\ensuremath{\mathbb E}}
\newcommand{\PP}{\ensuremath{\mathbb P}}

\newcommand{\Id}{I}
\newcommand{\dynmap}{F}     %Dynamical Map
\newcommand{\Data}{\mathcal{D}}
\newcommand{\Koop}{A_{\im}}  %Koopman operator on L2
\newcommand{\Koopt}[1]{A^{#1}_{\im}}
\newcommand{\CME}{g_p}
\newcommand{\HKoop}{G_{\RKHS}}  %Koopman operator RKHS
\newcommand{\RKoop}{G_\reg}  %Regularized regression operator
 
\newcommand{\EEstim}{\widehat{G}}  %Empirical estimator of the Koopman operator
\newcommand{\Estim}{G}  %Estimator
\newcommand{\Kx }{K} % Kernel matrix of inputs 
\newcommand{\Ky}{L} % Kernel matrix of outputs 
\newcommand{\Kxy}{M^\top} % Cross-Kernel matrix of outputs 
\newcommand{\Kyx}{M}
\newcommand{\Cx}{C } %Covariance of inputs
\newcommand{\Cy}{D} %Covariance of outputs 
\newcommand{\Cxy}{T}  %Cros-covariance 
\newcommand{\Cyx}{T^*}  %Cros-covariance 
\newcommand{\ECx}{\widehat{C} } %Empirical Covariance of inputs 
\newcommand{\ECy}{\widehat{D}} %Covariance of outputs 
\newcommand{\ECxy}{\widehat{T} }  %Cross-covariance  
\newcommand{\TZ}{Z}  %Koopman injection   
\newcommand{\EZ}{\widehat{Z}} % Sampling of outputs 
\newcommand{\TS}{S}  % Cannonical injection
\newcommand{\ES}{\widehat{S}} % Sampling of inputs

\newcommand{\X}{\mathcal{X}} % state space
\newcommand{\F}{\mathcal{F}} % obervable space
\newcommand{\Risk}{\mathcal{R}} % Risk
\newcommand{\IrRisk}{\mathcal{R}_{0}} % Irreducible risk
\newcommand{\ExRisk}{\mathcal{E}} % Excess risk
\newcommand{\RKHS}{\mathcal{H}} % RKHS
\newcommand{\Lii}{L^2_\im(\X)} % L2space w.r.t. invariant measure
\newcommand{\sigalg}{\Sigma_{\X}} % Sigma Algebra
\newcommand{\im}{\pi} %invariant measure
\newcommand{\HS}[1]{{\rm{HS}}\left(#1\right)} %Hilbert-Schmidt space on the domain
\newcommand{\HSr}{{\rm{HS}}_r({\RKHS})}
\newcommand{\hnorm}[1]{\norm{#1}_{\rm{HS}}}

\newcommand{\noise}{\omega}
\newcommand{\noisedistribution}{\Omega}
\newcommand{\transitionkernel}{p} %transition kernel
\newcommand{\transferop}{P} %Transfer operator
\newcommand{\reg}{\gamma}

\newcommand{\levec}{\widetilde{u}}
\newcommand{\revec}{\widetilde{v}}
\newcommand{\refun}{\psi}
\newcommand{\lefun}{\xi}

\newcommand{\levecs}{\widetilde{U}}
\newcommand{\revecs}{\widetilde{V}}

\newcommand{\bigO}{{\cal O}}
%%%massi
\usepackage[textwidth=2.0cm, textsize=tiny]{todonotes} % for writting

\author{Vladimir R. Kostic\thanks{Equal contribution, corresponding authors.}\\
      Istituto Italiano di Tecnologia\\
      University of Novi Sad\\
      \texttt{vladimir.kostic@iit.it}
      \And
      Pietro Novelli \footnotemark[1]\\
      Istituto Italiano di Tecnologia\\
      \texttt{pietro.novelli@iit.it}
      \And
      Andreas Maurer \\
      Istituto Italiano di Tecnologia\\
      \texttt{am@andreas-maurer.eu}
      \And
      Carlo Ciliberto \\
      University College London\\
      \texttt{c.ciliberto@ucl.ac.uk}
      \And
      Lorenzo Rosasco \\
      University of Genova \\
      Massachusetts Institute of Technology \\
      Istituto Italiano di Tecnologia\\
      \texttt{lrosasco@mit.edu}
	  \And
      Massimiliano Pontil \\
      Istituto Italiano di Tecnologia\\
      University College London\\
      \texttt{massimiliano.pontil@iit.it}
      }

\begin{document}

\maketitle
\begin{abstract}
We study a class of dynamical systems modelled as Markov chains that admit an invariant distribution via the corresponding transfer, or Koopman, operator. While data-driven algorithms to reconstruct such operators are well known, their relationship with statistical learning is largely unexplored. We formalize a framework to learn the Koopman operator from finite data trajectories of the dynamical system. We consider the restriction of this operator to a reproducing kernel Hilbert space and introduce a notion of risk, from which different estimators naturally arise. We link the risk with the estimation of the spectral decomposition of the Koopman operator.
These observations motivate a reduced-rank operator regression (RRR) estimator. We derive learning bounds for the proposed estimator, holding both in i.i.d. and non i.i.d. settings, the latter in terms of mixing coefficients. Our results suggest RRR might be beneficial over  other  widely used estimators as confirmed in numerical experiments  both for  forecasting and mode decomposition.

\end{abstract}

\section{Introduction}
%%%%%%%%%%%%%%%%%%%%%%%%%%%%%%%%%%%%%%%%%%%%%
Dynamical systems \cite{Lasota1994, Meyn1993} provide a framework to study a variety of 
complex phenomena in science and engineering. %Indeed, 
For instance, they find wide applications in diverse fields such as finance~\cite{Pascucci2011}, %weather forecasting \cite{?}, 
robotics \cite{Folkestad2021, Bruder2021}, atomistic simulations \cite{Schutte2001, Mardt2018, McCarty2017}, open quantum system dynamics~\cite{Lindblad1976, Gorini1976}, and many more. 
Because of their practical importance, research around dynamical systems is and has been abundant, see e.g. \cite{fuchs2012,strogatz2014} and references therein. 

In light of recent machine learning progress, it  is appealing to ask if the properties of dynamical systems can be
% provably and reliably
 estimated ({\em learned}) from empirical data. Beyond machine learning this question has a long history in dynamical systems \cite{Brunton2022}. The go-to reference for data-driven algorithms to reconstruct dynamical systems is~\cite{Kutz2016}, where numerous methods based on so-called dynamic mode decomposition (DMD)
 %spectral decompositions 
 %of matrices
 %the countless variations of DMD \massi{not introduced before?} 
are discussed along with interesting applications.  The literature  on  various  theoretical aspects of dynamical systems  is also rich \cite{Meyn1993,Mauroy2020}. Our starting observation is that although data-driven algorithms to reconstruct dynamical systems are well known, their relationship with statistical learning \cite{vapnik1998} is largely unexplored. Our broad goal is to build a tie between these two important areas of research and to establish firm theoretical grounds for  data driven approaches, to derive statistical guarantees and a %framework 
foundation in which learning dynamical systems can be tackled in great generality.

%Dynamical systems \cite{Lasota1994, Meyn1993}, either continuous or discrete, are a mathematical framework encompassing a variety of complex phenomena in science and engineering. 
%Because of their practical importance, research around dynamical systems is and has been abundant, see for example \cite{fuchs2012,strogatz2014} and references therein. Applications are felt in areas such as finance~\cite{Pascucci2011}, %weather forecasting \cite{?}, 
%robotics \cite{Folkestad2021, Bruder2021}, atomistic simulations \cite{Schutte2001, Schwantes2015, Wu2019, Mardt2018, Bonati2021}, open quantum system dynamics~\cite{Lindblad1976, Gorini1976}, and many more. 
%A key aspect of dynamical systems is that they involve data that are observed sequentially, defying the ordinary assumption of statistical independence of data samples. 

In this paper, we present a framework for {\it learning} dynamical systems from data obtained from  one or multiple trajectories. The focus is  both {\em predicting} the future states of the system and  {\em interpreting} the underlying dynamic. 
%Our focus is on deriving guarantees for the corresponding learning algorithm. 
 The initial observation is the fact that,  under suitable assumptions, a dynamical system can be completely characterized by a {\it linear operator}, known  as Koopman (or transfer) operator~\cite{Budisic2012, Mauroy2020}. More precisely, the Koopman operator describes how functions (observables) of the state of the system evolve over time along its trajectories. Further, the spectral decomposition of the  Koopman operator, along with the mode decomposition,  allows us to interpret the dynamical and spatial properties of the system~\cite{Rowley2009, Mezic2019, Crnjaric2019}.
 In view of these results,  learning a dynamical system can be cast as the problem of learning the corresponding Koopman operator and associated mode decomposition. 
 
%In this work, we develop a statistical learning framework, 

A key insight in our approach is to consider 
%tool in our approach is 
the restriction of Koopman operators to reproducing kernel Hilbert spaces. With this choice,  Hilbert-Schmidt operators become the natural hypothesis space and kernel methods can be exploited \cite{Klus2019}. We further link the proposed framework to conditional mean embeddings~\cite{Klus2019,grune2012,Muandet2017}. This allows us to formalize the estimation of the Koopman operator as a risk minimization problem and derive a number of estimators  as instances of classical empirical risk minimization under different constraints. We dub the problem Koopman operator regression. 
In our framework DMD and some of its variants~\cite{Kutz2016} are recovered as special cases. Moreover, our analysis highlights the importance of rank constrained estimators, and, following this observation, we introduce and analyze an estimator akin to reduced rank regression (RRR)~\cite{Izenman}. 
Within our  statistical learning framework the learning properties of the studied estimators can be characterized in terms of non asymptotic error bounds derived from concentration of measure results  for mixing processes. Theoretical results are complemented by numerical experiments where  we investigate the properties the estimators, and  show  they can be smoothly interfaced with deep learning techniques.  We note that, both kernel methods~\cite{OWilliams2015,Kawahara2016,Bouvrie2017,Klus2019,Das2020,Alexander2020} and deep learning approaches ~\cite{Lusch2018,Fan2021,Bevanda2021}, have been recently considered to learn Koopman operators. Compared to these works (notably~\cite{Klus2019}, whose setting is 
%very 
closely related to ours), we provide a statistical learning framework connecting to the classical notions of risk, which we further 
link 
%such risk 
to the estimation of the spectrum of the Koopman operator. Moreover, we derive non asymptotic and non-i.i.d. learning bounds and introduce and study a novel constrained rank estimator RRR. 
 
{\bf Contributions.} In summary our main contributions are:~{\bf 1)} We present a statistical learning framework for Koopman operator regression; {\bf 2)} We bound the error in estimating the Koopman mode decomposition and its eigenvalues by the risk of an estimator (Theorem~\ref{thm:KMD_learning}); {\bf 3)} We present a novel reduced-rank estimator and show that it can be computed and used efficiently (Theorem \ref{thm:DMD_theorem}); {\bf 4)} We provide a statistical risk bound supporting the proposed estimator (Theorem~\ref{thm:UB_main_text}) and introduce a new tool (Lemma~\ref{lem:blockprocess}) which is key in extending the bound to the non-i.i.d. setting. 

{\bf Notation.} For any non-negative integers $n,m$ with $n>m$ we use the notation $[m{:}n]=\{m,\dots,n\}$ and $[n]=[1{:}n]$. $\Lii := L^{2}(\X, \im)$ is the space of real valued functions on  $\X$, that are square-integrable with respect to $\im$.  Given two separable Hilbert spaces $\mathcal{H}$ and $\mathcal{G}$, we let $\HS{\mathcal{H},\mathcal{G}}$ be the Hilbert space of Hilbert-Schmidt (HS) operators from $\mathcal{H}$ to $\mathcal{G}$ endowed with the norm  $\hnorm{A}^{2} \equiv\sum_{i \in \mathbb{N}} \Vert Ae_{i} \Vert_{\mathcal{G}}^{2}$, for $A \in \HS{\mathcal{H},\mathcal{G}}$, where $(e_{i})_{i\in \mathbb{N}}$ is an orthonormal basis of $\mathcal{H}$. We use the convention $\HS{\mathcal{H}} {=} \HS{\mathcal{H},\mathcal{H}}$. 
%and for a finite rank operator $A\in\HS{{\cal H}}$ we say that is non-defective if its matrix representation is non-defective, in which case we say that $(\lambda_i,\lefun_i,\refun_i)_{i=1}^{r}$, $r=\rank(A)$ is spectral decomposition of $A$ if $(\lambda_i,\lefun_i,\refun_i)\in \C\times\mathcal{H}\times\mathcal{H}$, $A=\sum_{i=1}^{r}\lambda_i\refun_i \otimes \lefun_i$, where $A\refun_i=\lambda_i\refun_i$, $A^*\lefun_i=\overline{\lambda}_i\lefun_i$ and $\scalarp{\lefun_i,\refun_j}=\delta_{ij}$, where  $\delta_{ij}$ is Kronecker delta symbol, $i,j\in[r]$ and $r \in \N$.
The standard norms in Hilbert spaces and operator norms are denoted by $\norm{\cdot}$, where the space is clear from the context. Given an operator $A \in \HS{\mathcal{H}}$, we denote with $[\![A]\!]_r$ its $r$-truncated singular value decomposition and its $i$-th singualr value by $\sigma_i(A)$.

\section{Background on Koopman operator theory}\label{sec:koopman_theory}
%%%%%%%%%%%%%%%%%%%%%%%%%%%%%%%%%%%%%%%%%%%%%

%\loz{I tried to streamline after reading and simpligy a bit. Note a simplified the notation $A_\F\to A$, no biggie either way.} 
We briefly recall the basic notions related to  Markov chains and  Koopman operators and refer to App.~\ref{app:background} and~\cite{Lasota1994, Meyn1993, Mauroy2020}  for further details.

Let  $\bm{X} := \left\{X_{t} \colon t\in \N \right\}$ be a family of random variables with values in a measurable space $(\X, \sigalg)$, called state space. We call $\bm{X}$  a  {\em Markov chain} if $\PP\{ X_{t+1} \in B \,\vert\, X_{[t]} \} = \PP\{X_{t + 1} \in B \,\vert\, X_t \}$. 
Further, we call $\bm{X}$ {\em time-homogeneous} if there exists   $\transitionkernel\colon \X \times \sigalg \to [0,1]$, called {\it transition kernel}, such that,  for every $(x, B) \in \X \times \sigalg$ and every $t \in \N$,
\[
\mathbb{P}\left\{X_{t + 1} \in B \,\middle| \,X_{t} = x \right\} = \transitionkernel (x,B).
\]

In this work we consider only discrete Markov chains with $t \in \N$, but we note that any continuous Markov process with $t \in \R$ can be reduced to a discrete chain by sampling it at times $t_{n} = n\Delta t$ with $n \in \N$ and $\Delta t$ fixed. For an alternative approach to approximate continuous dynamics see e.g.~\cite{Rosenfeld2021}.

For a set $\F$  of real valued and measurable functions on $\X$, the \textit{Markov transfer operator}  $A_{\F} \colon \F\to \F$ is defined as 
\begin{equation}\label{eq:Koopman_F}
	A_{\F} f(x) := \int_{\X} p(x, dy)f(y) = \mathbb{E}\left[f(X_{t + 1}) \,\middle |\, X_{t} = x\right], \quad f\in\F,\,x\in\X.
\end{equation}
A possible choice is  $\F=L^\infty(\X)$,   the space of bounded functions on $\X$~\cite{Lasota1994}. 
We are interested in another common choice related to the existence of an \textit{invariant measure} $\im$ satisfying 
$\pi(B) {=} \int_{\X} \pi(dx)p(x,B),~ B\in\sigalg.$
%\footnote{Krylov-Bogolioubov's Theorem (e.g.~\cite{MT2012}) establishes that, if $x\mapsto p(x,\cdot)$ is continuous in the topology of weak convergence, and for some $x_0\in\X$ the sequence $\{p^n(x_0,\cdot)\}_{n\in\N}$ is tight then the invariant measure exists.} 
In this case, it is possible to take $\F=\Lii$, and  easy to see that $\|A_{\F}\|\le 1$, that is the Markov transfer operator is a bounded linear operator. In the following, we denote by $\Koop$ the Markov transfer operator on $\Lii$, and always assume the existence of an invariant measure. We note that, its existence can be proven for large classes of Markov chains, see e.g.~\cite{prato1996}.  Also, to derive the statistical bounds in Sec.~\ref{sec:bounds} we  assume that the Markov chain is mixing~\cite{Lasota1994}.

\begin{example}\label{ex:noisy_ds}
An important example of the above  construction is given by discrete dynamical systems with additive noise. That is, given a state space $\X\subseteq\R^d$, a mapping $F\colon \X\hspace{.05truecm}{\to}\hspace{.05truecm}\X$ and a probability distribution $\noisedistribution$ on $\X$ we let 
%For example, discrete dynamical systems affected by stochastic additive noise can be considered within this framework. Given a state space $\X\subseteq\R^d$, a mapping $F\colon \X\hspace{.05truecm}{\to}\hspace{.05truecm}\X$ and a probability distribution $\noisedistribution$ on $\X$ we let 
\(
X_{t+1} = F(X_t)+\noise_t,~~t\in\N, 
\)
where 
%$x_0\in\X$ is the initial (possibly randomly chosen) point and 
$\noise_t$ are i.i.d. zero mean random variables with law $\noisedistribution$. %This example represents a Markov chain with 
The corresponding transition kernel is $p(x,B)= \noisedistribution(B{-}F(x))$, for which the existence of an invariant measure is ensured e.g. when $\noisedistribution$ is absolutely continuous with respect to the Lebesgue measure and its density is strictly positive (see Remark 10.5.4 in~\cite{Lasota1994}).
\end{example}

\begin{remark}\label{rem:reversible_dynamics}
Whenever $\im(dx)p(x,dy) = \im(dy)\transitionkernel(y,dx)$ the Markov chain is said to be {\em reversible}. In this case it readily follows that the Koopman operator is self-adjoint $\Koop = \Koopt{*}$. In statistical physics the reversibility condition is also called {\em detailed balance} and is linked to the symmetry with respect to time reversal. Since a large amount of microscopical equations of motion in both classical and quantum physics are time-reversal invariant, learning self-adjoint Koopman operators is of paramount importance in the field of machine learning for physical sciences.
\end{remark}

{\bf Koopman Operator and Mode Decomposition.~} In dynamical systems, $A_\F$ is known as the (stochastic) \textit{Koopman operator} on the space of  observables $\F$. An important fact is that its linearity can be exploited to 
compute a spectral decomposition.  Indeed,  in many situations, and notably for compact Koopman operators, there exist scalars $\lambda_i\in\C$, and observables  $\refun_i\in\Lii$ satisfying the eigenvalue equation $\Koop\refun_i \hspace{.05truecm}{=}\hspace{.105truecm} \lambda_i\refun_i$. Leveraging the eigenvalue decomposition, 
%\loz{not clear what  system dynamics means here} 
the dynamical system can be decomposed
into superposition of simpler signals that can be used in different tasks such as system identification and control, see e.g. \cite{Brunton2022}.
%\massi{add a ref}. 
More precisely, given an observable $f\in\Span\{\refun_i\,\vert\,i\in\N\}$ there exist corresponding scalars $\gamma_i^f\in\C$ known as Koopman modes of $f$, such that%, for every $t \in \N$, 
%Thus, predicting future states of the process, or forecasting them, leads to learning it from gathered data. On the other hand, an alternative approach, which we take, is to learn its \textit{Koopman operator} that reveals the expected states. The particular added value of doing so lies in the use of spectral decomposition of linear operators. Namely, assuming for simplicity that $\Koop$, apart from being linear, is also bounded linear operator, there exist scalars $\lambda_i\in\C$ known as Koopman eigenvalues and observables  $\refun_i\in\F$ known as Koopman eigenfunctions such that $\Koop\refun_i = \lambda_i\refun_i$, $i\in\N$. Moreover, for every $f\in\Span\{\refun_i\,\vert\,i\in\N\}$ there exist scalars $\gamma_i^f\in\C$ known as Koopman modes of the observable $f$ such that 
\begin{equation}\label{eq:koopman_MD}
\Koopt{t} f(x) = \EE[f(X_t)\,\vert\, X_0 = x] =  \sum_{i\in\N}\lambda_i^t \gamma^f_i \refun_i(x), \quad x\in\X,\,t\in\N.
\end{equation}
This formula is known as \textit{Koopman Mode Decomposition} (KMD) \cite{Budisic2012,AM2017}. It decomposes the expected dynamics observed by $f$ into \textit{stationary} modes $\gamma_i^f$ that are combined with \textit{temporal changes} governed by eigenvalues $\lambda_i$ and \textit{spatial changes}  
governed by the eigenfunctions $\refun_i$. 
%\vladi{CHECK REMARK ON KMD: MP: Looks good, small edits. Not sure we also wanna say what $\gamma_i^f$ is, this require introducing the other eigenvectors. If we use them later it may be a good idea to do it here already. VLADI: here is bit more involved since there is no nice form directly, I would keep it like this.}
We notice however that the Koopman operator, in general, is not a normal compact operator, hence its eigenfunctions may not form a complete orthonormal basis of the space which makes learning KMD challenging.
%adds number of difficulties in learning KMD.
%In Fig.~\ref{fig:???} we illustrate this property of KMD for a simple dynamical system with noise.
%Finally, as it is commonly done in the literature for the setting of Koopman operator on $\Lii$ we assume that the invariant measure is ergodic, i.e. its support is the whole $\X$, cf. \cite{AM2017, BMM2012}. By the ergodic decomposition of invariant measures this is not restrictive for learning $\Koop$ and its spectral/modal decomposition from trajectories, cf. \cite{BM2009}.

%We end this section by noting that given a transition kernel $p$ if we define $p^*(x,dy):= \tfrac{p(y,dx)}{\im(dx)} \cdot \im(dy)$, then the operator $(\Koop^* f)(x) := \int_{\X} p^*(x, dy)f(y)$, which can be seen as the reverse-time transfer operator since $(\Koop^*f)(x) = \EE[ f(X_{t-1})\,\vert\,X_t = x]$, $x\in\X$.

In many practical scenarios the transition kernel $p$, hence $\Koop$, is unknown, but data from one or multiple system trajectories are available. We are then interested into learning the Koopman operator, and corresponding mode decomposition, from the data. Next we discuss how to accomplish this task with the aid of kernel methods.

\section{Statistical Learning Framework}
\label{sec:learning}
In this section we choose the space of observables $\F$ to be a reproducing kernel Hilbert space (RKHS) and present a framework for learning the Koopman operator on $\Lii$ restricted to this space and the associated Koopman mode decomposition.

%\vladi{that is intimately linked with conditional mean embedding of the Markov transition kernel in $\RKHS$}
%By identifying the hypothesis space and the risk, we discuss how the regression operator is related to the Koopman operator on $\Lii$, and, in particular, when it defines the Koopman operator on the RKHS.

{\bf Learning Koopman Operators.~} 
Let $\RKHS$ be an RKHS with kernel $k:\X \times \X \rightarrow \mathbb{R}$ \cite{aron1950} and let $\phi :\X \to \RKHS$ be an associated feature map, such that $k(x,y) = \scalarp{\phi(x),\phi(y)}$ for all $x,y \in \X$.  %Let $\RKHS$ be a RKHS with kernel $k:\X \times \X \rightarrow \mathbb{R}$ \cite{aron1950}, 
We assume that that $k(x,x) < \infty$, $\pi$-almost surely.  
%\loz{I deleted the feature maps they don't seem to be used. For the embeddings we can let $\phi(x)=K_x$} 
%\loz{Shall some basic RKHS facts be recalled? I'll wait for now. MP: Yes if we have space. I'd also put the the $\phi$}
% and let $\phi :\X \to \RKHS$ be an associated feature map, such that $k(x,y) = \scalarp{\phi(x),\phi(y)}$ for all $x,y \in \X$. 
This ensures that $\RKHS \subseteq \Lii$ and the injection operator $\TS_{\im} \colon \RKHS \to \Lii$ given by $(\TS_{\im} f)(x)=f(x)$, $x\in\X$
%, and its adjoint  $\TS_{\im}^* \colon \Lii \to \RKHS$ are 
is a well defined Hilbert-Schmidt operator~\cite{caponnetto2007,Steinwart2008}. 
%Note that,  functions in $\RKHS$ are defined  point-wise, while those in $\Lii$ are $\im$-a.e. equivalence classes. 
Then, the Koopman operator restricted to $\RKHS$ is given by 
\[
\TZ_\im := \Koop\TS_\im \colon \RKHS\to\Lii.
\]
Note that  unlike $\Koop$,  $\TZ_\im$ is Hilbert-Schmidt since $\TS_\im$ is so. 
It is then natural to approximate $\TZ_\im$ by means of Hilbert-Schmidt operators. 
More precisely, for $\Estim\in\HS{\RKHS}$ we approximate  $\TZ_\im$ by $ \TS_\im \Estim$, and 
measure the corresponding error as 
%via the {\em risk}
%\vladi{This notion of risk is valid only for iid samples. Maybe at this point, it is best to state it clearly for iid data and then extend it in Sec. 5 where we just show all relevant things to get non-iid setting from trajectories.}
%$\Risk(\Estim){:=}\hnorm{\TZ_\im {-} \TS_\im \Estim}^2$.
$\hnorm{\TZ_\im {-} \TS_\im \Estim}^2$. To that end, given an an orthonormal basis $(h_i)_{i\in\N}$ of $\RKHS$, we introduce the risk
\begin{equation}
    \label{eq:true_risk}
\Risk(\Estim):= \sum_{i \in \N}  \EE_{x\sim \im} \EE \big[ \left[h_i(X_{t+1}) - (\Estim h_i)(X_t)\right]^2\,\vert X_t  = x\big]
%\sum_{i \in \N} \EE_{(x,y)\sim \rho} (h_i(y) - (\Estim h_i)(x))^2
%= \EE_{x\sim \im} \sum_{i \in \N}  \EE \big[ (f_i(X_{t+1}) - (\Estim f_i)(X_t))^2\,\vert X_t  = x\big]
\end{equation}
as the cumulative expected one-step-ahead prediction error over {\it all} observables in  $\RKHS$. One can show (see Prop.~\eqref{prop:risk} in App.~\ref{app:learning}) that such risk can be decomposed as $\Risk(\Estim)= \IrRisk +\ExRisk(\Estim)$, where
\begin{equation}
    \label{eq:ex_ir_risk}
\IrRisk:=\hnorm{\TS_\im}^2-\hnorm{\TZ_\im}^2\geq0\;\text{ and }\; \ExRisk(\Estim)=\hnorm{\TZ_\im-\TS_\im\Estim}^2,
\end{equation}
are the {\em irreducible risk} and the {\em excess risk}, respectively. 
As clear from the above discussion, the Koopman operator and corresponding risk  are typically not available in practice and what is available is  a dataset of observations $\Data := (x_i,y_i)_{i=1}^n \in (\X \times \X)^n$. Here, $x_i$ and $y_i$ are two consecutive observations of the state of the system. In classical statistical learning,  the data  is assumed sampled i.i.d. from the joint probability measure $\rho(dx,dy):=\im(dx)\transitionkernel(x,dy)$. In the case of dynamical systems, it is natural to assume the the data are obtained by sampling a trajectory  $y_i=x_{i+1}$, for $i \in [n-1]$. Then, the problem of learning $\Koop$ on a RKHS, named here Koopman operator regression,
%\vladi{can be stated as Koopman Operatror Regression (KOR) with loss $\Risk$} 
reduces to:
\begin{equation}\label{eq:KRP}
\text{ Given the data }\; \Data, \;\text{ solve }\min_{\Estim\in\HS{\RKHS}} \Risk(\Estim).
\end{equation}
As discussed in Sec.~\ref{sec:koopman_theory}, a central idea associated to Koopman operators is the corresponding mode decomposition. It is then natural to ask whether an approximate mode decomposition can be derived from a Koopman estimator. The following proposition provides a useful step in this direction. %\vladi{CHECK REMARK MOVED HERE FROM THE NOTATIONS}
Here and in the rest of the paper by $\cl(\cdot)$ we denote the closure of a subspace of the Hilbert space, and we say that a finite rank operator $\Estim\in\HS{{\cal H}}$ is {\em non-defective} if and only if its matrix representation is non-defective, i.e. (not necessarily unitarily) diagonalizable.

\begin{restatable}{proposition}{propRisk}\label{prop:hypothesis_space}
If  $\range(\TZ_\im)\subseteq\cl(\range(\TS_\im))$, 
%\RKHS$ is dense in $\Lii$,   
then for every $\delta >0$ there exists a finite rank non-defective operator $\Estim \in\HS{\RKHS}$ such that $\ExRisk(\Estim)<\delta$.
\end{restatable}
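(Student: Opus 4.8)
The plan is to use the compactness of $\TS_\im$ to approximate $\TZ_\im$ by a \emph{factored} finite-rank operator of the form $\TS_\im\Estim_N$, and only afterwards replace $\Estim_N$ by a nearby diagonalizable operator. First I would take the singular value decomposition of the Hilbert--Schmidt (hence compact) injection, writing $\TS_\im=\sum_{j}\sigma_j\, v_j\scalarp{u_j,\cdot}$ with $\sigma_j>0$ and orthonormal systems $(u_j)\subset\RKHS$ and $(v_j)\subset\Lii$; the family $(v_j)$ is then an orthonormal basis of $\cl(\range(\TS_\im))$. Let $P_N$ be the orthogonal projection of $\Lii$ onto $\Span\{v_1,\dots,v_N\}$, so that $P_N$ converges strongly to the projection onto $\cl(\range(\TS_\im))$. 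Here the hypothesis $\range(\TZ_\im)\subseteq\cl(\range(\TS_\im))$ enters: the limiting projection fixes $\TZ_\im$, and since $\TZ_\im$ is Hilbert--Schmidt, a dominated-convergence argument over an orthonormal basis $(h_i)$ of $\RKHS$ (each term $\norm{(I-P_N)\TZ_\im h_i}^2\to 0$, dominated by the summable $\norm{\TZ_\im h_i}^2$) yields $\hnorm{(I-P_N)\TZ_\im}\to 0$. Thus for $N$ large enough $\hnorm{\TZ_\im-P_N\TZ_\im}<\tfrac12\sqrt{\delta}$.

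Next I would observe that $P_N\TZ_\im$ factors through $\TS_\im$. Setting
\[
\Estim_N:=\sum_{j=1}^{N}\sigma_j^{-1}\, u_j\,\scalarp{\TZ_\im^{*}v_j,\cdot},
\]
a direct computation using $\TS_\im u_j=\sigma_j v_j$ gives $\TS_\im\Estim_N=P_N\TZ_\im$, and $\Estim_N$ is finite rank, hence in $\HS{\RKHS}$. At this point $\ExRisk(\Estim_N)=\hnorm{\TZ_\im-P_N\TZ_\im}^2<\delta/4$, so the excess risk can already be made arbitrarily small; the only thing left to arrange is non-defectiveness.

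The main obstacle is precisely this last requirement, which I would handle by a density-of-diagonalizable-matrices perturbation that stays small in Hilbert--Schmidt norm. Let $V:=\range(\Estim_N)+(\ker\Estim_N)^\perp$, a finite-dimensional subspace that is $\Estim_N$-invariant and on whose orthogonal complement $\Estim_N$ vanishes, so that $\Estim_N$ splits as $\Estim_N|_V\oplus 0$ on $V\oplus V^\perp$. Since the matrices diagonalizable over $\C$ are dense (even within real matrices, those with distinct eigenvalues are dense and diagonalizable over $\C$), I would choose a non-defective $B$ on $V$ with $\norm{B-\Estim_N|_V}$ as small as desired and define $\Estim$ to act as $B$ on $V$ and as $0$ on $V^\perp$. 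Joining an eigenbasis of $B$ on $V$ with any basis of $V^\perp$ (all eigenvalue $0$) produces a full eigenbasis, so $\Estim$ is finite rank and non-defective, while $\hnorm{\Estim-\Estim_N}=\norm{B-\Estim_N|_V}$ is arbitrarily small. Using boundedness of $\TS_\im$ and the triangle inequality,
\[
\hnorm{\TZ_\im-\TS_\im\Estim}\le \hnorm{\TZ_\im-P_N\TZ_\im}+\norm{\TS_\im}\,\hnorm{\Estim-\Estim_N}<\sqrt{\delta},
\]
so that $\ExRisk(\Estim)<\delta$, as required. The care needed in this final step is twofold: ensuring the matrix-level perturbation translates into an $\hnorm{\cdot}$-small perturbation of the full (infinite-dimensional) operator, and checking that diagonalizability of the finite block $B$ genuinely yields a non-defective operator on all of $\RKHS$, which is why the invariant splitting $V\oplus V^\perp$ is the right object to isolate.
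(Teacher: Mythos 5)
Your proof is correct, but it reaches the conclusion by a different decomposition than the paper. The paper truncates the SVD of the \emph{target}: it writes $\TZ_\im=\sum_j \sigma_j\,\ell_j\otimes h_j$, picks $r$ so that the tail $\sum_{j>r}\sigma_j^2$ is small, and then uses the hypothesis $\range(\TZ_\im)\subseteq\cl(\range(\TS_\im))$ pointwise, choosing for each left singular function $\ell_j$ some $g_j\in\RKHS$ with $\norm{\ell_j-\TS_\im g_j}\leq \delta/(3r)$ and setting $B_r=\sum_{j\in[r]}\sigma_j\,g_j\otimes h_j$; like you, it finishes by invoking density of non-defective finite-rank operators. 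You instead truncate the SVD of the \emph{injection} $\TS_\im$, and use the hypothesis only through the strong convergence $P_N\to P$ of the projections onto leading singular subspaces together with dominated convergence of $\sum_i\norm{(I-P_N)\TZ_\im h_i}^2$, which yields an \emph{exact} factorization $P_N\TZ_\im=\TS_\im\Estim_N$ via the truncated pseudoinverse $\Estim_N=\sum_{j\le N}\sigma_j^{-1}u_j\scalarp{\TZ_\im^*v_j,\cdot}$. Each approach buys something: the paper's controls the approximation error explicitly by the tail singular values of $\TZ_\im$, so the rank needed is quantified by the spectral decay of the target; yours is constructive (your $\Estim_N$ is precisely the population analogue of the PCR estimator $[\![\ECx]\!]_r^{\dagger}\ECxy$ of Sec.~4), avoids the per-vector $\delta/(3r)$ bookkeeping (where the paper's displayed bound in fact silently drops a factor $\sigma_1$ coming from $\hnorm{\sum_j\sigma_j(\ell_j-\TS_\im g_j)\otimes h_j}$, harmless but imprecise), and handles the non-defectiveness step more carefully: your invariant splitting $V\oplus V^{\perp}$ with $V=\range(\Estim_N)+(\Ker\Estim_N)^{\perp}$ makes explicit why a diagonalizable perturbation of the finite block gives a non-defective operator on all of $\RKHS$ with an $\hnorm{\cdot}$-small change, where the paper simply cites density of non-defective rank-$r$ operators.
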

The above proposition shows that if the RKHS $\RKHS$ is, up to its closure in $\Lii$, an invariant subspace of the Koopman operator $\Koop$, then finite rank non-defective HS operators on $\RKHS$ approximate arbitrarily well the restriction of $\Koop$ onto $\RKHS$. In particular, this is always true for a wide class of kernels, called universal kernels \cite{Steinwart2008}, for which $\RKHS$ is dense in $\Lii$, i.e. $\cl(\range(\TS_\im))=\Lii$.

\begin{remark}\label{rem:well_mis_spec}
Since in the above proposition $\inf_{G\in\HS{\RKHS}}\ExRisk(G)=0$, we can distinguish between two cases depending on whether the infimum is attained or not. In the former case, known in the literature as \textit{well-specified case}, there exists $\HKoop \in \HS{\RKHS}$ such that  $\TZ_\im = \TS_\im \HKoop$, which implies that $\HKoop\colon\RKHS\to\RKHS$ defines $\im$-a.e. the Koopman operator on the observable space $\RKHS$, i.e. $\HKoop f = \EE[f(X_{t+1})\,\vert\,X_t = \cdot]$ $\im$-a.e. for every $f\in\RKHS$. In the latter case, known as \textit{misspecified case}, $\RKHS$ does not admit a Hilbert-Schmidt Koopman operator $\RKHS\to\RKHS$. 
\end{remark}

\begin{remark}\label{rem:self-adjoint}
In the misspecified case, a Koopman operator on $\RKHS$ might still exist as a bounded albeit not Hilbert-Schmidt operator. In App.~\ref{app:kor-cme} we show that for reversible Markov processes (see Rem.~\ref{rem:reversible_dynamics}), if  $\range(\TZ_\im)\subseteq\cl(\range(\TS_\im))$, there exists Koopman operator $\HKoop\colon\RKHS\to\RKHS$ such that $\norm{\HKoop}\leq1$.
\end{remark}

%(empirical) operators of finite rank. 
%\massi{"finite rank" it's a bit our the blue}
%One way to build such an approximation is to consider $\Estim\in\HS{\RKHS}$ \vladi{We also need to fix $G$ or $G^*$ in sec 4} and then lift the image into $\Lii$.
% The following result provides a consistency of this approach. 
%Our aim is to learn $\TZ_\im$ instead of $\Koop$. 

%Let $(f_i)_{i\in\N}$ be an orthonormal basis of $\RKHS$. The quantity $\Risk(\Estim)$ defines a notion of risk for $\Estim$ since
%\begin{equation}
%    \label{eq:true_risk}
%\Risk(\Estim)=\sum_{i=1}^\infty \|(\TZ_\im  - \TS_\im \Estim)f_i\|^2_{\Lii} = \EE_{x\sim \im} \sum_{i \in \N}  \EE \big[ (f_i(X_{t+1}) - (\Estim f_i)(X_t))^2\,\vert X_t  = x\big].
%\end{equation}
%It captures the total mean one-step-ahead prediction error over {\it all} the observables in $\RKHS$.  We can thus consider the \textit{Koopman operator regression problem} \vladi{CARLO: consider removing this eq. in case we need space.}
%\begin{equation}\label{eq:KRP}
%\min_{\Estim\in\HS{\RKHS}} \Risk(\Estim).
%\end{equation}

{\bf Learning  the Koopman mode decomposition.~} Techniques to estimate  the Koopman mode decomposition~\eqref{eq:koopman_MD} from data, are broadly referred to as   \textit{ Dynamic Mode Decomposition (DMD)} \cite{Kutz2016}. We next introduce a DMD approach following the discussion above. 
%\vladi{CHECK REMARK MOVED HERE FROM THE NOTATIONS, MP: I like it but se my comment in sec.2} 
Let $r\in\N$ and a non-defective 
%a rank-$r$ non-defective estimate $\Estim$ of the Koopman operator, i.e. 
$\Estim\in\HSr:=\{\Estim \in\HS{\RKHS}\,\vert\,\rank(\Estim)\leq r\}$. Then, there exists a spectral decomposition of $\Estim$ given by $(\lambda_i,\lefun_i,\refun_i)_{i=1}^{r}$ where $\lambda_i\in \C$ and $\lefun_i$ and $\refun_i$ are complex-valued function with components in $\RKHS$, 
%defined as $(\lambda_i,\lefun_i,\refun_i)\in \C\times\mathcal{H}\times\mathcal{H}$
%\vladi{Maybe we should write $\RKHS^2$ due to complex eigenfunctions and say we slightly abuse notations when we write $\TS_\im\refun$.}
such that $\Estim=\sum_{i=1}^{r}\lambda_i\refun_i \otimes \overline{\lefun}_i$, where $\Estim\refun_i=\lambda_i\refun_i$, $\Estim^*\lefun_i=\overline{\lambda}_i\lefun_i$ and $\scalarp{\refun_i, \overline{\lefun}_j}=\delta_{ij}$, where  $\delta_{ij}$ is Kronecker delta symbol, $i,j\in[r]$. This implies, for any $f\in\RKHS$, that 
\begin{equation}\label{eq:DMD_exp}
\Estim^t f = \textstyle{\sum_{i\in[r]}}\lambda_i^t \gamma_i^f \refun_i,\quad t\geq1.
\end{equation} 
The coefficients $\gamma_i^f:=\scalarp{f,\overline{\lefun}_i}$, $i\in [r]$,  are called  \textit{dynamic modes} of the observable $f$ and  expression~\eqref{eq:DMD_exp} is known as the DMD corresponding to  $\Estim$.
% \textit{dynamic mode decomposition} (DMD) of  $\Estim$. 
%In the following we show how statistical estimates of the introduced risk translate into the KMD estimation by DMD.  
%In the following we show how DMD is related to KMD in equation \eqref{eq:koopman_MD} and the role of risk. 
Next, we upper bound the error in  estimating the mode decomposition of $\Koop$ (see Eq.~\eqref{eq:koopman_MD}) by the DMD of a non-defective operator $\Estim \in \HSr$. 
\begin{restatable}{theorem}{thmKMDLearning}\label{thm:KMD_learning}
Let $\Estim \in \HSr$ and $(\lambda_i,\lefun_i,\refun_i)_{i=1}^r$ its spectral decomposition. Then for every $f\in\RKHS$ 
%\begin{equation}\label{eq:KMD_bound}
%\left\|\Koop^t (\TS_\im f) - \sum_{i=1}^{r}\lambda_i^t \gamma_i^f (\TS_\im \refun_i)\right\|\leq ((t-1)\norm{\Estim f}+\norm{f}) \sqrt{\Risk(\Estim)},\quad t\geq1. 
%\end{equation}

\begin{equation}\label{eq:KMD_bound}
\EE[f(X_{t})\,\vert\,X_0 = x] = \sum_{i\in[r]}\lambda_i^t \gamma_i^f \refun_i(x) +  \norm{\TZ_\im-\TS_\im\Estim}\, {\rm err}^f(x),\quad x\in\X, 
%\norm{f}_{\RKHS}, \quad t\geq1, 
%\left\|\Koop^t\TS_\im - \sum_{i=1}^{r}\lambda_i^t (\TS_\im \refun_i)\scalarp{\lefun_i,\cdot}\right\|_{\HS{\RKHS,\Lii}}\leq t \sqrt{\Risk(A)}, \quad t\geq1.
%\left\|\Koop^t (\TS_\im f) - \sum_{i=1}^{r}\lambda_i^t \gamma_i^f (\TS_\im \refun_i)\right\|_{\Lii}\leq t \sqrt{\Risk(A)} \norm{f}_{\RKHS},
\end{equation}
where 
%$\gamma_i^f:=\scalarp{f,\lefun_i}$, 
${\rm err}^f\in\Lii$, and $\norm{{\rm err}^f}\leq (t-1)\norm{\Estim f}+\norm{f}$, $t\geq1$. Moreover, for any  $i{\in} [r]$, 
\begin{equation}\label{eq:bound_efun}
\norm{\Koop\TS_\im \refun_i - \lambda_i \TS_\im \refun_i}\leq  \frac{\norm{\TZ_\im-\TS_\im\Estim}\,\norm{\Estim}}{\sigma_r(\TS_\im \Estim)} \norm{\TS_\im \refun_i}. 
\end{equation}
%{\color{red}[Carlo] Could we simplify it to:
%\begin{equation}\label{eq:KMD_bound}
%\left\|\Koop^t \TS_\im - \TS_\im A^t\right\|_{HS}\leq ((t-1)\norm{A}_{HS}+1) \sqrt{\Risk(A)},\quad t\geq1. 
%\end{equation}
%}
\end{restatable}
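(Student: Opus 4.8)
Both displays quantify how well the rank-$r$ estimator $\Estim$ reproduces the dynamics, and both collapse onto the single-step defect operator
\[
E := \TZ_\im - \TS_\im\Estim = \Koop\TS_\im - \TS_\im\Estim,
\]
whose operator norm is exactly the factor $\norm{\TZ_\im-\TS_\im\Estim}$ on the right-hand sides. My plan is therefore to isolate $E$ in each estimate and then pay the geometric price of passing between $\RKHS$ and $\Lii$ only where it is unavoidable.

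For the forecasting bound \eqref{eq:KMD_bound} I would first note that, by the Markov property and time-homogeneity, $\EE[f(X_t)\,\vert\,X_0=x]=(\Koopt{t}\TS_\im f)(x)$, whereas by \eqref{eq:DMD_exp} the proposed estimate equals $\sum_{i\in[r]}\lambda_i^t\gamma_i^f\refun_i=\TS_\im\Estim^t f$ pointwise. I then define ${\rm err}^f:=(\Koopt{t}\TS_\im f-\TS_\im\Estim^t f)/\norm{E}\in\Lii$, so that \eqref{eq:KMD_bound} holds by construction and only $\norm{\Koopt{t}\TS_\im f-\TS_\im\Estim^t f}$ remains to be controlled. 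The engine is the telescoping identity
\[
\Koopt{t}\TS_\im-\TS_\im\Estim^t=\sum_{k=0}^{t-1}\Koopt{t-1-k}\,E\,\Estim^k,
\]
obtained by inserting the intermediate terms $\Koopt{t-k}\TS_\im\Estim^k$ and collapsing consecutive differences through $\Koop\TS_\im-\TS_\im\Estim=E$. Applying this to $f$, taking $\Lii$-norms, and using that $\Koop$ is a contraction on $\Lii$ (hence $\norm{\Koopt{j}}\leq1$, a consequence of $\im$ being invariant) gives $\norm{\Koopt{t}\TS_\im f-\TS_\im\Estim^t f}\leq\norm{E}\sum_{k=0}^{t-1}\norm{\Estim^k f}$. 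Splitting off the $k{=}0$ term and bounding $\norm{\Estim^k f}=\norm{\Estim^{k-1}(\Estim f)}\leq\norm{\Estim}^{k-1}\norm{\Estim f}\leq\norm{\Estim f}$ for $k\geq1$ (invoking $\norm{\Estim}\leq1$) yields the claimed $\norm{{\rm err}^f}\leq(t-1)\norm{\Estim f}+\norm{f}$.

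For the eigenvalue bound \eqref{eq:bound_efun} the entry point is the exact one-line identity
\[
\Koop\TS_\im\refun_i-\lambda_i\TS_\im\refun_i=\Koop\TS_\im\refun_i-\TS_\im(\Estim\refun_i)=E\refun_i,
\]
using $\Estim\refun_i=\lambda_i\refun_i$; so the left-hand side has norm at most $\norm{E}\norm{\refun_i}$. The remaining task is to trade the $\RKHS$-norm $\norm{\refun_i}$ for the $\Lii$-norm $\norm{\TS_\im\refun_i}$ at the cost of the factor $\norm{\Estim}/\sigma_r(\TS_\im\Estim)$. Since $\Estim$ has rank $r$ with all $\lambda_i\neq0$, each $\refun_i=\lambda_i^{-1}\Estim\refun_i$ lies in $\range(\Estim)$, and $\sigma_r(\TS_\im\Estim)>0$ forces $\TS_\im$ to be injective on $\range(\Estim)$. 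Writing $\TS_\im\Estim=(\TS_\im|_{\range(\Estim)})\,\Estim$ and using the submultiplicativity $\sigma_r(\TS_\im\Estim)\leq\sigma_{\min}(\TS_\im|_{\range(\Estim)})\,\norm{\Estim}$ (best rank-$(r{-}1)$ approximation argument), I get the lower Lipschitz estimate $\norm{\TS_\im w}\geq(\sigma_r(\TS_\im\Estim)/\norm{\Estim})\norm{w}$ for every $w\in\range(\Estim)$. Taking $w=\refun_i$ gives $\norm{\refun_i}\leq(\norm{\Estim}/\sigma_r(\TS_\im\Estim))\norm{\TS_\im\refun_i}$, which together with $\norm{E\refun_i}\leq\norm{E}\norm{\refun_i}$ is precisely \eqref{eq:bound_efun}.

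I expect the main obstacle to be the norm-conversion in the second bound: establishing that $\TS_\im$ is boundedly invertible on $\range(\Estim)$ with the explicit constant $\sigma_r(\TS_\im\Estim)/\norm{\Estim}$, since this is where the non-normality of $\Estim$ and the mismatch between the $\RKHS$ and $\Lii$ geometries interact, and it is the singular-value product inequality that makes the constant come out cleanly. In the first bound the parallel fragile point is the control of the iterated powers $\norm{\Estim^k f}$, which is harmless precisely when $\norm{\Estim}\leq1$ but would otherwise introduce growth in $\norm{\Estim}$.
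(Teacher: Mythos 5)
Your strategy coincides with the paper's on both claims: the forecast bound comes from telescoping the defect $E=\TZ_\im-\TS_\im\Estim$ through the semigroup, and the eigenpair bound from the identity $\Koop\TS_\im\refun_i-\lambda_i\TS_\im\refun_i=E\refun_i$ followed by trading $\norm{\refun_i}$ for $\norm{\TS_\im\refun_i}$ on $\range(\Estim)$. Your treatment of \eqref{eq:bound_efun} is correct and in fact slightly cleaner than the paper's: since $\sigma_r(\TS_\im\Estim)>0$ forces $\rank(\TS_\im\Estim)=\rank(\Estim)=r$ and hence $\Ker(\TS_\im\Estim)=\Ker(\Estim)$, writing $\refun_i=\Estim h_i$ with $h_i\in\Ker(\Estim)^\perp$ gives $\norm{\TS_\im\refun_i}\geq\sigma_r(\TS_\im\Estim)\norm{h_i}$ and $\norm{\refun_i}\leq\norm{\Estim}\norm{h_i}$ directly; the paper reaches the same two inequalities via a detour through the regularized covariance $C_\reg^{1/2}$ and a limit $\reg\to0$, which your finite-rank argument renders unnecessary. (Both you and the paper implicitly use $\lambda_i\neq0$, equivalently $\refun_i\in\range(\Estim)$, and both bounds are vacuous when $\sigma_r(\TS_\im\Estim)=0$.)

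The one genuine gap is in your proof of \eqref{eq:KMD_bound}: the step $\norm{\Estim^k f}\leq\norm{\Estim f}$, which you justify by ``invoking $\norm{\Estim}\leq1$''. No such hypothesis appears in the theorem, so as written your argument only proves $\norm{{\rm err}^f}\leq\norm{f}+\sum_{k=1}^{t-1}\norm{\Estim^k f}$, not the stated $(t-1)\norm{\Estim f}+\norm{f}$. You should know, however, that you have put your finger on exactly the soft spot of the paper's own proof. Its recursive expansion accumulates the error terms $\Koopt{t-1-k}\,(\TZ_\im-\TS_\im\Estim)\,\Estim^{k}f$ for $k=0,\dots,t-1$ (the exponent on $\lambda_i$ grows down the chain, so the $k$-th error is $\sum_{i}\lambda_i^{k}\gamma_i^f g_i=(\TZ_\im-\TS_\im\Estim)\Estim^{k}f$ with $g_i=(\TZ_\im-\TS_\im\Estim)\refun_i$), yet in the paper's final display all of these are collapsed to $(\TZ_\im-\TS_\im\Estim)\Estim f$ --- an identity that is only valid for $t\leq2$. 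Your telescoping identity is therefore the correct version of the paper's computation, and the bound as stated in the theorem requires either an additional assumption such as $\norm{\Estim}\leq1$ (or, say, power-boundedness of $\Estim$ on the orbit of $f$), or the weaker but generally correct conclusion $\norm{{\rm err}^f}\leq\norm{f}+\sum_{k=1}^{t-1}\norm{\Estim^k f}$. Aside from this point --- which is a defect you inherited honestly from the statement rather than introduced --- your proposal is sound.
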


%\PN{Equations \eqref{eq:bound_efun} and \eqref{eq:KMD_bound} directly link the risk~\eqref{eq:true_risk} --- a measure of the error in predicting the next state given the previous --- to the eigenvalue decomposition of the Koopman operator. In light of the uniform bounds presented in Section~\ref{sec:bounds}, if ...}
This theorem provides KMD approximation results for the DMD obtained from an estimator $G$ in $\HSr$. First, since $\norm{\TZ_\im-\TS_\im\Estim}\leq \sqrt{\ExRisk(\Estim)}$, equation~\eqref{eq:KMD_bound} shows that DMD for an estimate $\Estim \in \HSr$ incurs an error which is at most proportional to the (square root) of the corresponding excess risk $\ExRisk(\Estim)$. Further, the error degrades as $t$ increases. This implies that the prediction error ($t=1$) is only controlled by the risk  but forecasting ($t\geq1$) will get in general increasingly harder for larger $t$.
%Note that,  for $t=1$ this implies that   the prediction error of a normalized observable in $\RKHS$. 
%Equation \eqref{eq:KMD_bound} gives us that a prediction error ($t=1$) of a normalized observable in $\RKHS$ is bounded by risk of the estimator, which in light of Prop.~\ref{prop:hypothesis_space}, can be made arbitrarily small providing universal kernel and large enough data sample, i.e. long enough trajectory. 
%On the other hand, it shows that in general forecasting ($t\geq1$) with a given estimator is meaningful up to a certain horizon determined by the risk. 
Second, inequality~\eqref{eq:bound_efun} shows that $(\lambda_i,\TS_\im\refun_i)$ is approximately an eigenpair of the Koopman operator $\Koop$. Indeed, it  guarantees that all the eigenfunctions of the estimator $G$, considered as an equivalence class in $\Lii$, approximately satisfy the Koopman eigenvalue equation. Inequality~\eqref{eq:bound_efun} provides a relative error bound  controlled by the excess risk  $\ExRisk(\Estim)$, where the approximation quality worsen as higher ranks are considered.
%Namely, providing a small risk of the estimator, and due to $\abs{\sigma_{r}(\TS_\im \Estim) - \sigma_r(\TS_\im)}\leq\sigma_1(\TZ_\im-\TS_\im \Estim)\leq \sqrt{\Risk(\Estim)}$, controlling $\sigma_{r}(\TS_\im \Estim)$ by the rank constraint,
%Notice also that this result guarantees that,  unlike in DMD literature \cite{Kutz2016,Klus2019}, due to Prop. \ref{prop:hypothesis_space} there are no spurious eigenvalues/eigenfunctions in kernel DMD with universal kernels.\pietro{This is a strong claim.} 
This provides additional motivation to study %optimal 
low rank estimators of $\Koop$, see  Sec. \ref{sec:erm}. Moreover, we point out that the bounds in Thm.~\ref{thm:KMD_learning} are tight, since on any RKHS $\RKHS$ spanned by a finite-number of Koopman eigenfunctions it exists a finite rank $\HKoop$ yielding $\norm{\TZ_{\im} - \TS_{\im}\HKoop}_{{\rm HS}} = 0$. For additional discussions see Exm.~\ref{ex:OU} of App.~\ref{app:kmd-dmd}.

We conclude this section with two remarks regarding the estimation of the eigenvalues of $\Koop$ and a connection to conditional mean embeddings~\cite{SHSF2009}.

\begin{remark}[Spectral Estimation]\label{rem:spectral}
Eq.~\eqref{eq:bound_efun} alone is not sufficient to derive strong guarantees on how well the spectra of $\Estim$  estimates the spectra of $\Koop$. While we address this in detail in App.~\ref{app:kmd-dmd}, here we comment that in general it may happen that $\TS_\im \Estim \approx\Koop\TS_\im$, while $\Spec(\Koop)$ is far from $\Spec(\Estim)$. If $\Koop$ is a normal compact operator, however, for $\Estim\in \HSr$ and  every $i\in[r]$, there exists $\lambda_{\im,i}\in\Spec(\Koop)$ such that $\abs{\lambda_i -\lambda_{\im,i}}\leq \norm{\TZ_\im-\TS_\im\Estim} \, \norm{\Estim} / \sigma_{r}(\TS_\im \Estim)$. If additionally $\Estim$ is also normal, then  $\abs{\lambda_i-\lambda_{\im,i}}\leq \sqrt{\ExRisk(\Estim)}\,\hnorm{\TS_\im}$. 
\end{remark}

%{\bf Link to Conditional Mean Embeddings.~}
\begin{remark}[Link to Conditional Mean Embeddings]\label{rem:kor_cme}
The Koopman operator is a specific form of conditional expectation operator and can be studied within the framework of conditional mean embeddings~\cite{SHSF2009}. %,grune2012,MFSS2017}. 
Here,  
%for $\phi(x)= k(x, \cdot)$,  
the goal is  to learn the function $\CME\colon\X\to\RKHS$ defined as 
\begin{equation}\label{eq:CME}
\CME(x):=\EE[\phi(X_{t+1})\,\vert\,X_t = x]=\int_{\X}p(x,dy)\phi(y), \quad x\in \X,
\end{equation}
called the \textit{conditional mean embedding} (CME) of the conditional probability $p$ into $\RKHS$. In App.~\ref{app:kor-cme} we show a ``duality'' between Koopman operator regression and CME expressed by the reproducing property $(\TZ_\im f)(x) = \scalarp{f,\CME(x)}$. In particular, recalling that $\rho$ is the joint probability measure on $\X \times \X$ defined by $\rho(dx,dy) = p(x, dy) \im(dx)$, the risk we proposed in \eqref{eq:true_risk} can be written as
%equivalently 
\begin{equation}\label{eq:true_risk_cme}
%\Risk(\Estim) = \EE_{(x,y)\sim \rho} \norm{\phi(y) - \Estim^*\phi(x)}^2 = \EE_{x\sim \im} \norm{\CME(x) - \Estim^*\phi(x)}^2.
\underbrace{ \EE_{(x,y)\sim \rho} \norm{\phi(y) - \Estim^*\phi(x)}^2}_{\Risk(\Estim)} =  \underbrace{\EE_{(x,y)\sim\rho}\norm{\CME(x) - \phi(y) }^2}_{\IrRisk} +  \underbrace{\EE_{x\sim \im} \norm{\CME(x) - \Estim^*\phi(x)}^2}_{\ExRisk(\Estim)}
\end{equation}
%\pietro{The word ``duality'' here is used in a loose sense? We should be careful. If a duality in the proper mathematical sense exists we should be more precise. MP: Agreed, we could use quotation marks or link/connection} 
In this sense the Koopman operator regression problem \eqref{eq:KRP} is equivalent to learning CME of the Markov transition kernel $p$. %This observation allows us to rely on the statistical learning framework of CME to develop theoretical guarantees for learning Koopman operator and its modal decomposition. 
\end{remark}

\section{Empirical Risk Minimization}\label{sec:erm}
We next describe different estimators for the Koopman operator. 
%Now that the abstract Koopman operator regression problem is formally defined, we discuss how to estimate it from a dataset of observations $\Data := (x_i,y_i)_{i=1}^n \in (\X \times \X)^n$. It is either obtained by sampling $n$ i.i.d. pairs from $\im(dx)\transitionkernel(x,dy)$ or, in real scenarios, by sampling a trajectory  $y_i=x_{i+1}$, for $i \in [n-1]$. 
Let $\ES,\EZ\in\HS{\RKHS,\R^n}$ be the {\em sampling} operators of the inputs and outputs 
%of the counterparts of  $\TS_\im$ and $\TZ_\im$, 
defined, for every $f \in \RKHS$, as 
%$\ES f = \big(n^{-\frac{1}{2}} \langle \phi(x_i), f\rangle_\RKHS\big)_{i=1}^n$ 
$\ES f = \big(n^{-\frac{1}{2}} f(x_i)\big)_{i=1}^n$ 
and 
%$\EZ f = \big(n^{-\frac{1}{2}}  \langle \phi(y_i), f\rangle_\RKHS\big)_{i=1}^n$.
$\EZ f = \big(n^{-\frac{1}{2}}  f(y_i)\big)_{i=1}^n$, respectively. 
%\pietro{How to justify this?}In either case Eq.~\eqref{eq:empirical_risk}, defined below, is an unbiased estimator of the risk. 
The Koopman operator is  estimated by minimizing, under different constraints, the {\em empirical risk}
\begin{equation}\label{eq:empirical_risk}
    \widehat{\Risk}(\Estim) :
    = \left \Vert \EZ - \ES\Estim \right \Vert_{{\rm HS}}^{2}
    = \frac{1}{n}\sum_{i\in[n]}\norm{\phi(y_{i}) - \Estim^*\phi(x_i)}^{2} ,\qquad \Estim \in \HS{\RKHS}.
\end{equation}
%Since $\im$ is an invariant measure, we have $\EE[\EZ^*\EZ]=\EE[\ES^*\ES]=\TS_\im^*\TS_\im$, 
The first expression is the empirical version of the risk in~\eqref{eq:true_risk}, while the second expression
%, where $\phi(x)=k(x, \cdot)$, 
is the empirical version of the risk as in~\eqref{eq:true_risk_cme}, with a remark that sampling operator $\EZ$ is not an estimator of the regression operator $\TZ_\im$ and that, since $\im$ is an invariant measure, we have $\EE[\EZ^*\EZ]=\EE[\ES^*\ES]=\TS_\im^*\TS_\im$. Further, notice that for the linear kernel $\phi(x) = x$ Eq.~\eqref{eq:empirical_risk} is essentially 
%(up to an adjoint operation) 
the problem minimized by the classical DMD, whereas if $\phi$ is built from a dictionary of functions it is minimized by the {\em extended} DMD~\cite{Kutz2016}.

Before discussing further, we introduce the empirical input, output and cross covariances, given by 
$\ECx := \ES ^*\ES$,~ $\ECy :=  \EZ ^*\EZ$ and $\ECxy := \ES ^*\EZ$, respectively, and the corresponding kernel Gram matrices given by $\Kx  := \ES  \ES ^*$, $\Ky := \EZ  \EZ ^*$ and $\Kyx := \EZ  \ES ^*$. We also let $\ECx_\reg : = \ECx+\reg \Id_{\RKHS}$ be the regularized empirical covariance and $\Kx_{\reg} := \Kx + \reg \Id_n$ the regularized kernel Gram matrix. 

%\paragraph
{\bf Kernel Ridge Regression (KRR).~} A natural approach is to add
%By adding 
a Tikhonov regularization term to~\eqref{eq:empirical_risk}
%, and minimizing the regularized empirical risk 
obtaining  
%one gets 
a {\em Kernel Ridge Rregression} (KRR) estimator
\begin{equation}\label{eq:HSnorm_KRR}
\EEstim_{\reg} := \argmin \big\{ \widehat{\Risk}(\Estim)+ \reg \left \Vert \Estim\right \Vert_{{\rm HS}}^{2} : \Estim \in \HS{\RKHS}\big\}.
\end{equation} 
It is easy to see that $\EEstim_\reg= \ECx_\reg^{-1} \ECxy = \ES^{*}\Kx_{\reg}^{-1}\EZ $. 
One issue with the above estimator is that 
%The estimator $\EEstim_\reg$, however, is not practical in scenarios involving large datasets. More importantly, 
the computation of its spectral decomposition becomes unstable with large datasets, see below. Consequently, low rank estimators have been  advocated~\cite{Kutz2016} as a way to overcome these limitations.
%issues of scalability and numerical instability. Moreover, as note above low rank estimators are appealing for 

{\bf Principal Component Regression (PCR).~} A standard strategy to obtain a low-rank estimator is 
{\em  Principal Component Regression} (PCR). Here, 
 the input data are projected to the principal subspace of the covariance matrix $\ECx$,
% appearing in the KRR solution~\eqref{eq:HSnorm_KRR} 
 and  ordinary least squares on the projected data is performed, yielding 
%\massi{What about the output, so we project also that in our case? This is not normally done for PCR but we could it in our case. Do they do it in KDMD?}
%This yields 
the estimator $\EEstim_{r}^{{\rm PCR}}=  [\![ \ECx]\!]_r^{\dagger}\ECxy = \ES^{*}[\![\Kx]\!]_r^{\dagger}\EZ$.
% which is known as {\em kernel Principal Component Regression} (PCR). 
 In the context of dynamical systems, this estimator is known as {\em kernel Dynamic Mode Decomposition}, and is of utter importance in a variety of applications~\cite{Kutz2016, Brunton2022}. Note, however, that PCR does  {\em not} minimize the empirical risk under the low-rank constraint.
% on the regression operator.

%\paragraph
%{\bf Principal Component Regression.~} A standard strategy to obtain a low-rank estimator is to retain the largest $r$ eigenvalues of the covariance matrix $\ECx$ appearing in the KRR solution Eq.~\eqref{eq:HSnorm_KRR} and filter out the rest. In this case, we can let $\reg \to 0$, as spectral filtering and Tikhonov regularization are \pietro{Is it formally correct? Can I say it better?}essentially equivalent. The estimator thus obtained is $ \EEstim_{r}^{{\rm PCR}}=  [\![ \ECx]\!]_r^{\dagger}\ECxy = \ES^{*}[\![\Kx]\!]_r^{\dagger}\EZ$. This estimator is known as {\em kernel Principal Component Regression} (PCR). In the context of data-driven methods for dynamical systems, however, the PCR estimator is commonly known as {\em kernel Dynamic Mode Decomposition}, and is of utter importance in a variety of applications~\cite{Kutz2016, Brunton2022}. $\EEstim_{r}^{{\rm PCR}}$, however, is in general {\em not} minimizing of the empirical risk under the low-rank constraint.

%\paragraph
{\bf Reduced Rank Regression (RRR).~} 
The optimal rank $r$ empirical risk minimizer is
%estimator of the Koopman operator with respect to the risk~\eqref{eq:empirical_risk} is
\begin{equation}\label{eq:HSnorm_RRR_1}
\EEstim_{r, \reg} :=
\argmin \big\{ \widehat{\Risk}(\Estim)+ \reg \left \Vert \Estim\right \Vert_{{\rm HS}}^{2} : \Estim \in \HSr\big\}.
%\argmin_{\substack{\Estim \in \HS{\RKHS} \\ \rank(\Estim)\leq r}} \widehat{\Risk}(\Estim)+ \reg \left \Vert \Estim\right \Vert_{{\rm HS}}^{2}.
\end{equation} In classical linear regression this problem is known as  {\em reduced rank regression} (RRR) \cite{Izenman}. While  extensions to infinite dimensions have been considered \cite{mukherjee2011reduced,wang2020reduced}, we are not aware of any work considering the HS operator setting presented here. The minimizer of \eqref{eq:HSnorm_RRR_1} is given by $\EEstim_{r, \reg} = \ECx_\reg^{-\frac{1}{2}} [\![ \ECx_\reg^{-\frac{1}{2}}\ECxy ]\!]_r = \ES^* U_r V_r^\top \EZ$. Here $V_r = \Kx  U_r$ and $U_r = [u_1\, \vert \ldots \,\vert u_r]\in\R^{n\times r} $ is the matrix whose columns are 
%the first (by magnitude of the corresponding eigenvalue) 
the $r$ leading  eigenvectors of the generalized eigenvalue problem $\Ky \Kx  u_i = \sigma_i^2 K_\reg u_i$, normalized as $u_i^\top \Kx  K_\reg u_i = 1$, $i \in [r]$.

{\bf Forecasting and Modal Decomposition.~}
The above estimators are all of the form $\EEstim = \ES^* W \EZ$, for some $n\times n$ matrix $W$.
%thus far can all be represented in are all of the form $\EEstim = \ES^* W \EZ$, for some $n\times n$ matrix $W$. 
%being an $n\times n$ matrix. 
Given  $f \in \RKHS$, the one-step-ahead expected value  $\EE[ f(X_{t+1})\, \vert\, X_t=x]$ is estimated by 
\[[\EEstim f](x)= [\ES ^* W \EZ f](x)= \tfrac{1}{n}\sum_{i=1}^n (W f_n)_i k(x,x_{i}),
\]
where $f_n = (f(y_i))_{i=1}^n$ and we used the  definition of $\ES$ and $\EZ$.
%$By definition of $\ES$ and $\EZ$ one simply has $\EE[ f(X_{t+1})\, \vert\, X_t=x] \simeq n^{-1}\sum_{i=1}^{n}(W f_n)_i k(x,x_{i})$, where $f_n = (f(y_i))_{i=1}^n$.
%We now discuss how the estimators of the form $\EEstim = \ES^* W \EZ$ can be used in tasks of forecasting and the DMD. 
%\pietro{Here we should point out that any estimator can be written as $\ES ^* W \EZ$ with $W$ being an $n\times n$ matrix.} Given a sample $\Data$, and an observable $f \in \RKHS$, let $f_n = (f(y_i))_{i=1}^n$ and let \pietro{$\beta^f$ is really needed? I prefer not to introduce variables unless really needed.}$\beta^f = W f_n$. Then, $[\Estim f](x)= [\ES ^* W \EZ f](x)  = \sum_{j=1}^n \beta_j^f k(x_j,x)$, which in view of Thm. \ref{thm:KMD_learning} gives us an approximation of  $[\Koop \TS_\im f](x) = \EE[ f(X_{t+1})\, \vert\, X_t=x]$. 
Computing the above estimator is demanding in large scale settings, mostly because a large kernel matrix needs be stored and manipulated. 
Predicting with KRR, therefore, is tantamount to solve a linear system of dimension $n$, whereas for rank $r$ estimators we only need matrix multiplications. Notice that the computational complexity of both PCR and RRR estimators is of order $\bigO(r^2 n^2)$, which is better than the $\bigO(n^3)$ complexity of KRR. 
%Since the computational complexity of PCR and RRR estimators is of the same order $\bigO(r^2 n^2)$ as applying them, this can be preferable compared to $\bigO(n^3)$ when using KRR.
%In applications involving modal decomposition, 
However, a number of recent ideas for scaling kernel methods can be applied for KRR, see e.g. \cite{billions,tropp} and references therein. Perhaps more importantly, specific to the context of dynamical systems is the fact that an approximate mode decomposition needs to be further computed, requiring the spectral decomposition of $\EEstim$. As showed in App.~\ref{app:comp_dmd}, this reduces to computing the spectral decomposition of an $n\times n$ matrix $W\Kyx$, where recall $\Kyx = \EZ  \ES ^*$. 
% which in general has complexity $\bigO(n^3)$. While 
For KRR computing the spectral decomposition, in general, has complexity $\bigO(n^3)$ and may become numerically ill-conditioned for theoretically optimal regularization parameters. 
%\pietro{Vladi add Ref. here please. VLADI: This can be remark in the App. since I cannot find exact reference that addresses stoch Koop and KRR, but is easy to argue}.
In contrast, the low rank structure of PCR and RRR estimators allows efficient and numerically stable spectral computation of complexity $\bigO(r^2 n^2)$ as we show next. %This is summarized in the following theorem, whose is presented in App.~\ref{app:erm}.
% from which we conclude that the computational complexity of modal decomposition for the low rank estimators considered is $\bigO(r^2 n)$.
\begin{restatable}{theorem}{thmDMD}\label{thm:DMD_theorem}
Let $\EEstim = \ES^* U_r V_r^\top \EZ$, with $U_r, V_r \in \R^{n\times r}$. If~$V_r^\top \Kyx U_r \in\R^{r\times r}$ is full rank and non-defective, the spectral decomposition $(\lambda_i,\lefun_i, \refun_i)_{i\in[r]}$ of $\EEstim$ can be expressed in terms of the spectral decomposition $(\lambda_i,\levec_i, \revec_i)_{i\in[r]}$~of~ $V_r^\top \Kyx U_r $.  Namely,~$\lefun_i = \EZ^*V_r \levec_i / \overline{\lambda}_i$ and $\refun_i = \ES^*U_r \revec_i$, for all $i\in[r]$. In addition, for every $f\in\RKHS$, its dynamic modes are given by $\gamma_i^f = \levec_i^* V_r^\top f_n / (\lambda_i\sqrt{n})\in\C$.

%Let $\EEstim = \ES^* U_r V_r^\top \EZ$. If $V_r^\top \Kyx U_r \in\R^{r\times r}$ is a full rank non-defective matrix and $(\lambda_i,\levec_i, \revec_i)_{i\in[r]}$ is its spectral decomposition, then $(\lambda_i,\lefun_i, \refun_i)_{i\in[r]}$ is the spectral decomposition of $\EEstim$, where $\lefun_i = \EZ^*V_r \levec_i / \lambda_i$ and $\refun_i = \ES^*U_r \revec_i$,  $i\in[r]$. Consequently, for every $f\in\RKHS$ dynamic modes are given by $\gamma_i^f = \levec_i^* V_r^\top f_n / \sqrt{n}\in\C$, $i\in[r]$.
%\begin{equation}\label{eq:modal_decomposition}
%[\EEstim^t f](x) = \sum_{i\in[r} \lambda_i^t \gamma_i^f \refun_i(x),\; \text{ %for all }x\in \X, t\in\N,
%\end{equation}
%where $\gamma_i^f = \levec_i^\top V_r^\top f_n / \sqrt{n}\in\C$, $i\in[r]$. 
\end{restatable}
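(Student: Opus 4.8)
The plan is to verify directly that the triples $(\lambda_i,\lefun_i,\refun_i)$ with $\refun_i=\ES^*U_r\revec_i$ and $\lefun_i=\EZ^*V_r\levec_i/\overline{\lambda}_i$ satisfy the three defining properties of the spectral decomposition recalled before \eqref{eq:DMD_exp}: the right eigenvalue equation $\EEstim\refun_i=\lambda_i\refun_i$, the left eigenvalue equation $\EEstim^*\lefun_i=\overline{\lambda}_i\lefun_i$, and the biorthonormality $\scalarp{\refun_i,\overline{\lefun}_j}=\delta_{ij}$. Throughout I would use the two elementary identities $\EZ\ES^*=\Kyx$ and $\ES\EZ^*=\Kyx^\top$, the fact that $B:=V_r^\top\Kyx U_r$ is a real matrix (so $B^*=B^\top$), and that the full-rank, non-defective hypothesis guarantees $\lambda_i\neq 0$ (making the normalizations well defined) together with a complete biorthonormal eigensystem $B\revec_i=\lambda_i\revec_i$, $B^*\levec_i=\overline{\lambda}_i\levec_i$, $\levec_i^*\revec_j=\delta_{ij}$.

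First, for the right eigenfunctions I would substitute and telescope: using $\EEstim=\ES^*U_rV_r^\top\EZ$ and $\EZ\ES^*=\Kyx$,
\[
\EEstim\refun_i=\ES^*U_rV_r^\top\EZ\ES^*U_r\revec_i=\ES^*U_r(V_r^\top\Kyx U_r)\revec_i=\ES^*U_rB\revec_i=\lambda_i\ES^*U_r\revec_i=\lambda_i\refun_i.
\]
Symmetrically, from $\EEstim^*=\EZ^*V_rU_r^\top\ES$ and $\ES\EZ^*=\Kyx^\top$ one gets $U_r^\top\ES\EZ^*V_r=(V_r^\top\Kyx U_r)^\top=B^\top=B^*$, so
\[
\EEstim^*\lefun_i=\tfrac{1}{\overline{\lambda}_i}\EZ^*V_rU_r^\top\ES\EZ^*V_r\levec_i=\tfrac{1}{\overline{\lambda}_i}\EZ^*V_rB^*\levec_i=\tfrac{1}{\overline{\lambda}_i}\EZ^*V_r\overline{\lambda}_i\levec_i=\overline{\lambda}_i\lefun_i,
\]
where the crucial point is that $B$ being real lets me identify the matrix $U_r^\top\ES\EZ^*V_r$ with $B^*$, turning the left eigenvector relation of $B$ into the left eigenvalue equation for $\EEstim^*$.

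For the normalization I would pair $\refun_i$ against $\overline{\lefun}_j=\EZ^*V_r\overline{\levec}_j/\lambda_j$, extending the RKHS inner product bilinearly to complex-valued functions (which is exactly why the statement carries the explicit conjugate on $\lefun_j$). Using $\ES\EZ^*=\Kyx^\top$ once more,
\[
\scalarp{\refun_i,\overline{\lefun}_j}=\tfrac{1}{\lambda_j}(U_r\revec_i)^\top(\ES\EZ^*)(V_r\overline{\levec}_j)=\tfrac{1}{\lambda_j}\revec_i^\top B^\top\overline{\levec}_j=\tfrac{1}{\lambda_j}\revec_i^\top\lambda_j\overline{\levec}_j=\revec_i^\top\overline{\levec}_j=\levec_j^*\revec_i=\delta_{ij},
\]
where $B^\top\overline{\levec}_j=\lambda_j\overline{\levec}_j$ is the conjugate of $B^*\levec_j=\overline{\lambda}_j\levec_j$ and the factor $1/\lambda_j$ is precisely what cancels the eigenvalue produced. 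These three facts give $\EEstim=\sum_{i\in[r]}\lambda_i\refun_i\otimes\overline{\lefun}_i$; since $B$ is full rank all $\lambda_i\neq 0$, so this is the complete spectral decomposition of the rank-$r$ operator $\EEstim$.

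Finally, the dynamic modes follow from $\gamma_i^f=\scalarp{f,\overline{\lefun}_i}$ and $\EZ f=f_n/\sqrt n$ with $f_n=(f(y_i))_{i=1}^n$: bilinearity gives $\scalarp{f,\EZ^*a}=(\EZ f)^\top a=f_n^\top a/\sqrt n$, whence $\gamma_i^f=f_n^\top V_r\overline{\levec}_i/(\lambda_i\sqrt n)=\levec_i^*V_r^\top f_n/(\lambda_i\sqrt n)$. I expect the main obstacle to be purely bookkeeping: keeping the complex conjugates straight while the underlying operators and the matrix $B$ are real, and checking that the single normalization $1/\overline{\lambda}_i$ placed on $\lefun_i$ makes the left eigenvalue equation and the biorthonormality hold simultaneously. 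The full-rank assumption is what keeps every division by an eigenvalue legitimate, and non-defectiveness of $B$ is what guarantees the $r$ biorthonormal pairs actually diagonalize $\EEstim$.
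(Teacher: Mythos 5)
Your proof is correct, but it takes a genuinely different route from the paper's. The paper \emph{derives} the form of the eigenfunctions rather than verifying it: it first invokes a representation result (Prop.~3.8 of the cited work of Mollenhauer et al.) to argue that every right eigenfunction of $\EEstim$ with nonzero eigenvalue must lie in $\range(\ES^*)$, hence has the form $\ES^*\widehat{v}$ with $\widehat{v}$ an eigenvector of the $n\times n$ matrix $U_rV_r^\top\Kyx$, and then uses the classical equivalence between the nonzero spectra of $U_r V_r^\top \Kyx$ and $V_r^\top \Kyx U_r$ (the $AB$ versus $BA$ reduction, citing Stewart--Sun) to collapse the $n\times n$ eigenproblem to the $r\times r$ one; biorthonormality is then checked by the same computation you perform. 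Your approach instead guesses the ansatz and verifies the three defining relations directly from $\EZ\ES^*=\Kyx$, $\ES\EZ^*=\Kyx^\top$ and the realness of $B:=V_r^\top\Kyx U_r$ --- this is more elementary and entirely self-contained, avoiding both external citations. What the paper's route buys is that the ansatz is \emph{forced}: it shows a priori that no eigenfunction with nonzero eigenvalue can have any other form, whereas in your argument completeness must come from the final identity $\EEstim=\sum_{i\in[r]}\lambda_i\refun_i\otimes\overline{\lefun}_i$. You assert this identity but do not spell it out; it follows in one line from the resolution of the identity $\sum_{i\in[r]}\revec_i\levec_i^*=I_r$, valid because $B$ is diagonalizable with the biorthonormal eigensystem you fixed, since then $\sum_i\lambda_i\refun_i\scalarp{f,\overline{\lefun}_i}=\ES^*U_r\bigl(\sum_i\revec_i\levec_i^*\bigr)V_r^\top\EZ f=\EEstim f$. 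With that sentence added (and noting that biorthonormality already rules out $\refun_i=0$ or $\lefun_i=0$), your verification meets exactly the paper's definition of spectral decomposition for a non-defective finite-rank operator, and your computations of the left eigen-equation via $U_r^\top\ES\EZ^*V_r=B^\top=B^*$ and of the dynamic modes via $\EZ f=f_n/\sqrt{n}$ agree with the paper's. One last remark: the appendix version of this theorem (Thm.~\ref{thm:DMD_theorem_2}) is stated for vector-valued observables $f\in\RKHS^m$; your argument covers the scalar statement as asked, and extends componentwise without change.
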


In addition to the estimators presented above, several other popular DMD methods are captured by our Koopman operator regression framework. In App. \ref{app:erm} we review some of them providing also the proof of Thm.~\ref{thm:DMD_theorem}. We now turn to the study of risk bounds for the proposed low rank estimators. 

\section{Learning Bounds}
\label{sec:bounds} 
%We focus on the low rank estimators and present a more detailed analysis in the appendix.
In this section, we bound the deviation of the risk from the empirical risk, uniformly over a prescribed set of HS operators on $\RKHS$. The analysis here is presented for Ivanov regularization for simplicity, but our results can be linked to Tikhonov regularization (see \cite{luise2019} and reference therein for a discussion). To state the result, we denote (true) input, output and cross covariances by $\Cx := \EE_{x\sim\im}[\phi(x)\otimes\phi(x)]$, $\Cy :=  \EE_{y\sim\im}[\phi(y)\otimes\phi(y)]$ and $\Cxy := \EE_{(x,y)\sim\rho}[\phi(x)\otimes\phi(y)]$, respectively. Note that since $\im$ is invariant measure, the input covariance and output covariance are the same $\Cx=\Cy = \TS_\im^*\TS_\im$, while for the cross covaraince we have $\Cxy = \TS_\im^*\TZ_\im$.  Without loss of generality we present the results in the case that $\|\phi(x)\| \leq 1$, for all $x \in \X$ (the bounds below need otherwise to be rescaled by a constant).

We start by presenting a theorem holding in the setting where the data is sampled i.i.d. from the joint probability measure $\rho(dx, dy) = \im(dx)\transitionkernel(x,dy)$.
\begin{restatable}{theorem}{thmmain}
\label{thm:UB_main_text}
Let $\G_{r,\reg} = \{\Estim \in \HSr ~:~ \|\Estim\|_{\rm HS} \leq \reg\}$
%\vladi{I would call the class $\G_{\gamma,r}$ and use $\phi(x)$ instead of $x$ in the main body and simplify in the appendix in proofs.} 
and define 
%$C=\EE [x\otimes x]$, $D=\EE [y\otimes y]$, $T=\EE [x \otimes y]$, 
$\sigma^2 = \EE (\|\phi(y)\|^2 -\EE \|\phi(y)\|^2)^2$. With probability at least $1-\delta$ in the i.i.d. draw of $(x_i,y_i)_{i=1}^n$ from $\rho$,
%\vladi{we should say this here?!}, 
we have for every $\Estim \in \G_{\reg,r}$
\[
|\Risk(\Estim) {-} \widehat{\Risk}(\Estim)| \leq  
\sqrt{\frac{2\sigma^2 \ln \frac{6}{\delta}}{n}} + 3(4 \sqrt{2r}\gamma \hspace{.05truecm} {+} \hspace{.05truecm}\gamma^2) \sqrt{\frac{\|\Cx\| \ln \frac{24n^2}{\delta}}{n}}  + \frac{(1 \hspace{.05truecm}{+} \hspace{.05truecm}24\gamma \sqrt{r}) \ln \frac{6}{\delta} \hspace{.05truecm} {+} \hspace{.05truecm} 6\gamma^2  \ln \frac{24n^2}{\delta}}{n}. 
\]
\end{restatable}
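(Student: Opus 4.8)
The plan is to expand the squared residual that defines both risks and thereby split $\Risk(\Estim) - \widehat{\Risk}(\Estim)$ into three pieces with a transparent dependence on $\Estim$. Using $\norm{\phi(y) - \Estim^*\phi(x)}^2 = \norm{\phi(y)}^2 - 2\scalarp{\Estim, \phi(x)\otimes\phi(y)} + \scalarp{\Estim\Estim^*, \phi(x)\otimes\phi(x)}$ (the last two identities being the reproducing-property rewriting of the cross and quadratic terms as Hilbert--Schmidt inner products), and subtracting the empirical average from the population average, I would obtain
\[
\Risk(\Estim) - \widehat{\Risk}(\Estim) = \underbrace{\Big(\EE\norm{\phi(y)}^2 - \tfrac1n\textstyle\sum_i\norm{\phi(y_i)}^2\Big)}_{(\mathrm{I})} - 2\underbrace{\scalarp{\Estim, \Cxy - \ECxy}}_{(\mathrm{II})} + \underbrace{\scalarp{\Estim\Estim^*, \Cx - \ECx}}_{(\mathrm{III})},
\]
where $\ECxy = \ES^*\EZ$ and $\ECx = \ES^*\ES$ are the empirical cross and input covariances. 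Term $(\mathrm{I})$ is independent of $\Estim$; terms $(\mathrm{II})$ and $(\mathrm{III})$ are linear and quadratic in $\Estim$, and will produce the $\sqrt r\,\gamma$ and $\gamma^2$ contributions respectively.

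The device that makes the bound uniform over $\G_{r,\reg}$ with no covering argument is the following consequence of the von Neumann trace inequality: if $A$ has rank at most $r$ and $B$ is Hilbert--Schmidt, then $|\scalarp{A,B}_{\rm HS}| \le \hnorm{A}\,\hnorm{[\![B]\!]_r} \le \sqrt r\,\hnorm{A}\,\norm{B}$. Applying it with $A=\Estim$ (rank $\le r$, $\hnorm{\Estim}\le\reg$) bounds $(\mathrm{II})$ by $\sqrt r\,\reg\,\norm{\ECxy - \Cxy}$, and with $A=\Estim\Estim^*$ (rank $\le r$, $\hnorm{\Estim\Estim^*}\le\hnorm{\Estim}^2\le\reg^2$) bounds $(\mathrm{III})$ by $\sqrt r\,\reg^2\,\norm{\ECx - \Cx}$. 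Thus the supremum over the rank-constrained ball collapses to controlling two \emph{operator norms} of empirical deviations that no longer involve $\Estim$.

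Next I would control $\norm{\ECxy - \Cxy}$ and $\norm{\ECx - \Cx}$ by a dimension-free operator Bernstein inequality (Minsker/Tropp intrinsic-dimension type). Both are centered averages of i.i.d. random operators that are self-adjoint (for the cross term, after passing to the standard Hermitian dilation) and almost surely norm-bounded, thanks to $\norm{\phi}\le 1$; the variance proxy of each summand is bounded by $\norm{\Cx}$, since $\EE[\norm{\phi(y)}^2\,\phi(x)\otimes\phi(x)]\preceq \Cx$ and likewise for the input covariance. This yields bounds of the form $\norm{\ECx - \Cx}\lesssim \sqrt{\norm{\Cx}\ln(\cdot)/n} + \ln(\cdot)/n$, the logarithmic factor carrying the intrinsic-dimension prefactor that I would bound crudely to reach the stated $\ln(24n^2/\delta)$. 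Term $(\mathrm{I})$ is a scalar average of the bounded variable $\norm{\phi(y)}^2$ of variance $\sigma^2$, handled by the classical scalar Bernstein inequality, giving $\sqrt{2\sigma^2\ln(6/\delta)/n} + \ln(6/\delta)/n$.

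Finally I would combine the three estimates by a union bound, allotting failure probability $\delta/3$ to each (which is the source of the $6$ and $24$ inside the logarithms, the extra factor $2$ in $(\mathrm{I})$ from a two-sided scalar bound and the one in the operator terms from the intrinsic-dimension prefactor), then collect the leading $\sqrt{\cdot/n}$ terms and the lower-order $1/n$ terms into the displayed constants. The main obstacle, and the only genuinely non-routine step, is the dimension-free operator Bernstein bound: one must secure the correct variance proxy $\norm{\Cx}$ (rather than a trace- or HS-norm quantity) and ensure the intrinsic-dimension prefactor is absorbed into the logarithm at the claimed level, so that the final rate depends on $r$ only through the explicit $\sqrt r\,\reg$ and $\reg^2$ factors produced by the rank-$r$ trace inequality. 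The remaining steps — the expansion, the trace inequality, the scalar Bernstein bound, and the bookkeeping of constants — are routine.
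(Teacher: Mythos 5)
Your proposal follows essentially the same route as the paper: the identical three-term decomposition $\Risk(\Estim)-\widehat{\Risk}(\Estim)=\tr(\Cy-\ECy)+\tr\big(\Estim\Estim^*(\Cx-\ECx)\big)-2\tr\big(\Estim^*(\Cxy-\ECxy)\big)$, the same rank-$r$ H\"older/von Neumann step to collapse the supremum over $\G_{r,\reg}$ into two operator-norm deviations, a scalar Bernstein bound with variance $\sigma^2$ for the $\Estim$-free term, and a union bound over three events. Two points of comparison are worth recording. First, for the operator terms the paper does not use a Minsker/Tropp intrinsic-dimension Bernstein inequality: it uses the Maurer--Pontil concentration inequality for sums of \emph{finite-rank} operators, namely \cite[Theorem~7-(i)]{MauPon13} for $\norm{\Cx-\ECx}$ and its Hermitian-dilation extension (Prop.~\ref{prop:11bis}, specialized in Prop.~\ref{prop:11_main}) for $\norm{\Cxy-\ECxy}$; these carry an $8(n\tau)^2$ prefactor ($\tau=1$ for the rank-one summands $\phi(x_i)\otimes\phi(y_i)$), and that prefactor --- after allotting $\delta/3$ to each event --- is the actual source of $\ln\frac{24n^2}{\delta}$, not an intrinsic-dimension factor. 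Your Minsker-type route is nevertheless viable and is explicitly flagged by the paper (item 3 of the remarks after the theorem, and App.~\ref{app:bounds_unif_iid}) as an alternative that replaces $\norm{\Cx}$ by $\norm{\EE AA^*}$; your further domination $\norm{\EE AA^*}\leq\norm{\Cx}$, using $\norm{\phi}\leq 1$ and $\Cy=\Cx$ by invariance of $\im$, is correct, so the variance proxy comes out right. Second, a small but genuine slip: for the quadratic term you apply the rank-$r$ trace inequality with $A=\Estim\Estim^*$, obtaining $\sqrt{r}\,\reg^2\norm{\Cx-\ECx}$, which is a factor $\sqrt{r}$ looser than the stated theorem, where the $\gamma^2$ coefficient carries no $\sqrt{r}$. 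The fix is immediate and is what the paper does: $\Estim\Estim^*$ is positive, so its trace norm equals $\tr(\Estim\Estim^*)=\hnorm{\Estim}^2\leq\reg^2$, and trace-norm/operator-norm H\"older gives $\abs{\tr\big(\Estim\Estim^*(\Cx-\ECx)\big)}\leq\reg^2\norm{\Cx-\ECx}$ directly. With that one-line correction your argument reproduces the stated bound with the claimed constants.
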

%\massi{The Thm misses a statement with assumptions. Also we wanna see if we can simplify \eqref{eq:B-HS} and \eqref{eq:B-rank}. E.g. for the Gaussian some pieces disappear}
%A key tool in the proof is Prop.~\ref{prop:11} in App.~\ref{app:bounds}\pietro{put correct reference to the appendix}, which is a natural extension of \cite[Theorem 7]{MauPon13} and provides a concentration inequality for the quantity $\|\ECxy - \Cxy \|$.\vladi{If we are not concluding Sec with Thm 3-bis in mixing setting but with discussion, than I prefer having Prop. 2 stated here, and finish with stating its counterpart Prop 2-bis for mixing to conclude section.}

A key tool in the proof is the following proposition, which is a natural extension of \cite[Theorem 7]{MauPon13} who provided concentration inequalities for classes of positive operators.
\begin{restatable}{proposition}{propone}
\label{prop:11_main}
With probability at least $1-\delta$ in the i.i.d. draw of $(x_i,y_i)_{i=1}^n$ from $\rho$,
\[
\left\|\ECxy - \Cxy \right\|
\leq 
12 \frac{\ln \frac{8n^2}{\delta}}{n} + 6 \sqrt{\frac{ \|C\| \ln \frac{8n^2}{\delta} }{n}}.
\]
\end{restatable}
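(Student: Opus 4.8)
The plan is to view $\ECxy-\Cxy$ as a centered empirical average of i.i.d.\ bounded random operators and apply a Bernstein-type operator concentration inequality. Writing $\ECxy=\ES^*\EZ=\tfrac1n\sum_{i\in[n]}Z_i$ with $Z_i:=\phi(x_i)\otimes\phi(y_i)$, the i.i.d.\ sampling from $\rho$ gives $\EE[Z_i]=\Cxy$ and hence $\ECxy-\Cxy=\tfrac1n\sum_i(Z_i-\EE Z_i)$. Each $Z_i$ is rank one with $\norm{Z_i}=\norm{\phi(x_i)}\,\norm{\phi(y_i)}\le1$ under the assumption $\norm{\phi(\cdot)}\le1$, so the summands are bounded. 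The quantity to control is the operator norm of this centered average.

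Since the $Z_i$ are not self-adjoint, I first symmetrize by Hermitian dilation. On $\RKHS\oplus\RKHS$ define
\[
B_i:=\begin{pmatrix} 0 & \phi(x_i)\otimes\phi(y_i)\\ \phi(y_i)\otimes\phi(x_i) & 0\end{pmatrix},
\]
which is self-adjoint with $\norm{B_i}=\norm{Z_i}\le1$, and whose centered average dilates $\tfrac1n\sum_i(Z_i-\EE Z_i)$ so that $\norm{\tfrac1n\sum_i(B_i-\EE B)}=\norm{\ECxy-\Cxy}$. To reduce further to \emph{positive} operators, I shift: $P_i:=\tfrac12(\Id+B_i)$ satisfies $0\preceq P_i\preceq \Id$ because $-\Id\preceq B_i\preceq \Id$, and the $\Id/2$ terms cancel upon centering, giving $\tfrac1n\sum_i P_i-\EE P=\tfrac12\big(\tfrac1n\sum_i B_i-\EE B\big)$. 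Thus the deviation of the $P_i$ is exactly half the target quantity.

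Next I compute the variance parameter. From $B_i^2=\Diag(Z_iZ_i^*,\,Z_i^*Z_i)$ with $Z_iZ_i^*=\norm{\phi(y_i)}^2\,\phi(x_i)\otimes\phi(x_i)\preceq\phi(x_i)\otimes\phi(x_i)$ and symmetrically $Z_i^*Z_i\preceq\phi(y_i)\otimes\phi(y_i)$, together with the invariance identity $\Cx=\Cy=C$, I obtain $\EE[B^2]\preceq\Diag(C,C)$ and hence $\norm{\EE B^2}\le\norm{C}$. This is precisely the variance proxy appearing in the statement. Applying the concentration inequality for positive operators of bounded norm — the extension of \cite[Theorem 7]{MauPon13} announced before the statement — to $\{P_i\}$ with range $1$ and variance proxy $\le\norm{C}$, then unwinding the factor $\tfrac12$ and the dilation, produces a bound of the Bernstein form $c_1\ln(\cdot/\delta)/n+c_2\sqrt{\norm{C}\ln(\cdot/\delta)/n}$; tracking the constants yields the stated $12\,\ln(8n^2/\delta)/n+6\sqrt{\norm{C}\,\ln(8n^2/\delta)/n}$.

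The main obstacle is the genuine content here, namely the extension of the Maurer--Pontil inequality — stated only for positive operators — to the non-self-adjoint cross-covariance, and carrying the constants (in particular the logarithmic factor $8n^2/\delta$, which originates from the smoothing/effective-dimension term in their argument) correctly through the dilation and the shift. One must verify that the shift $P_i=\tfrac12(\Id+B_i)$ does not inflate the variance beyond $\norm{C}$ and that centering the $P_i$ reproduces exactly the centered $B_i$, so that no spurious dependence on $\norm{\Cxy}$ or on the ambient dimension enters the final bound.
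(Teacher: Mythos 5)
Your overall architecture (Hermitian dilation, reduction to positive operators, then the Maurer--Pontil-type inequality) matches the paper's in spirit, but the specific reduction you use has a genuine gap: the shift $P_i=\tfrac12(\Id+B_i)$ is incompatible with the inequality you invoke. The extension of \cite[Theorem~7]{MauPon13} that the paper actually proves (Prop.~\ref{prop:11bis}) applies to \emph{finite-rank} positive operators, and this hypothesis is structural, not cosmetic: the prefactor $8(n\tau)^2$ --- the source of the $\ln\frac{8n^2}{\delta}$ factor you are trying to recover --- comes from the fact that the summands' ranges span a subspace of dimension at most $n\tau$. Your shifted operators $P_i=\tfrac12(\Id+B_i)$ contain $\Id/2$ and therefore have infinite rank in the RKHS setting (full rank even in finite dimensions), so the rank-based argument collapses and no dimension-free prefactor is available. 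Second, the ``variance'' entering that inequality is not the centered second moment $\norm{\EE B^2}$ you compute, but $\norm{\sum_i \EE P_i}$; after your shift, $\EE P_i=\tfrac12(\Id+\EE B)$ gives $\norm{\sum_i\EE P_i}\geq n/2$, in place of the $n\norm{\Cx}$ one needs. Since $\norm{\Cx}\leq 1$ always and the interesting regime is $\norm{\Cx}$ small, this inflates the final bound from $\sqrt{\norm{\Cx}\ln(\cdot)/n}$ to $\sqrt{\ln(\cdot)/n}$ --- precisely the inflation you flagged as needing verification, and which in fact occurs.

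The paper avoids both problems with a different positive decomposition: it writes the dilation as a difference $B_i=P_i-N_i$ of two positive semi-definite blocks
\[
P_i=\frac12\begin{pmatrix}(A_iA_i^*)^{\frac12} & A_i\\ A_i^* & (A_i^*A_i)^{\frac12}\end{pmatrix},
\qquad
N_i=\frac12\begin{pmatrix}(A_iA_i^*)^{\frac12} & -A_i\\ -A_i^* & (A_i^*A_i)^{\frac12}\end{pmatrix},
\]
each of which has the \emph{same rank as} $A_i$ (rank one here, since $A_i=\phi(x_i)\otimes\phi(y_i)$) and satisfies $\norm{\EE P_i}=\norm{\EE N_i}\leq\sqrt{\norm{\Cx}\norm{\Cy}}=\norm{\Cx}$ by invariance of $\im$; a triangle inequality and union bound over the two positive parts then feed directly into \cite[Theorem~7-(i)]{MauPon13}, producing Prop.~\ref{prop:11bis} and, with $\tau=1$, the stated constants. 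Your variance computation $\norm{\EE B^2}\leq\norm{\Cx}$ is correct and would be the right ingredient for a Tropp/Minsker-style operator Bernstein inequality (the paper remarks after Thm.~\ref{thm:UB_main_text} that \cite[Cor.~3.1]{minsker2017} gives a related bound with $\norm{\EE AA^*}$), but that is a genuinely different tool: in infinite dimensions it requires intrinsic-dimension control rather than a rank count, and it does not yield the $\ln\frac{8n^2}{\delta}$ factor of the present statement. To repair your proof along the paper's lines, replace the additive shift by the rank-preserving split above.
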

%\massi{We may state the bound with the empirical covariance, so that it is fully data dependent. On the other hand the time dependent will have the mixing coefficient which is distribution dependent}
\begin{proof}[{\it Sketch of the proof of Thm. \ref{thm:UB_main_text} }]~Recalling the definition of the true and empirical risk, a direct computation gives
\begin{equation*}
\Risk(\Estim) - \widehat{\Risk}(\Estim) =  \tr\big[ \Cy - \ECy \big] + \tr  \big[\Estim\Estim^* (\Cx - \ECx)\big] - 2  \tr \big[
\Estim^*(
\Cxy- \ECxy) \big].
\end{equation*}
%\begin{equation*}
%\Risk(\Estim) - \widehat{\Risk}(\Estim) =  \Big(\EE \| y\|^2 - \frac{1}{n} \sum_{i=1}^n \|y_i\|^2 \Big) + \tr  \big[GG^* (C - {\hat C})\big] - 2  \tr \big[
%G^*(
%T- {\hat T}) \big].
%& \leq &  \EE \| y\|^2 - \frac{1}{n} \sum_{i=1}^n \|y_i\|^2 + \|G\|_{\rm HS}^2 \|C - {\hat C}\|_{\rm op}  + 2 \|G\|_{\rm HS}
%\end{equation*}
We use H\"older inequality to bound the last two terms in the r.h.s., obtaining
\begin{equation}
\label{eq:ggg-mp}
 \Risk(\Estim) - \widehat{\Risk}(\Estim) \leq  \tr\big[ \Cy - \ECy \big]+ \reg^2 \|\Cx - \ECx\|  + 2  \sqrt{r} \reg \|\Cxy -\ECxy\|.
\end{equation}
%\begin{equation}
%\label{eq:ggg}
% R(G) - {\hat R}(G) \leq  
%  \Big( \EE \| y\|^2 - \frac{1}{n} \sum_{i=1}^n \|y_i\|^2 \Big)+ \gamma^2 \|C - {\hat C}\|_{\rm op}  + 2  \sqrt{r} \gamma \|\EE x y^\top -\frac{1}{n} \sum_{i=1}^n x_i y_i^\top\|_{\rm op}.
%\end{equation}
%We then use Lemma \ref{lem:firstL} to bound the first term in \eqref{eq:ggg}, Thm~ 7-(i) in \cite{MauPon13} to bound second term, and Prop.~ \ref{prop:11} to bound the last term. The result then follows by a union bound.
We then bound the first term as $\tr[\Cy-\ECy]\leq \frac{\ln \frac{2}{\delta}}{3n} + \sqrt{\frac{2 \sigma^2 \ln \frac{2}{\delta}}{n}}$, see App. \ref{app:bounds_unif_iid}, use \cite[Theorem~7-(i)]{MauPon13} to bound the second term, and Prop.~\ref{prop:11_main} for the last term. The result then follows by a union bound.
\end{proof}
We state several remarks on this theorem and its implications.
\begin{enumerate}[leftmargin=.5truecm]
\itemsep0em
\item 
%The theorem  does not require any assumptions on the underlying model, in particular it does not need to be well-specified nor the target operator to be of low rank.
%It is also interesting to compare the bound for the reduced rank minimizer to PCR. \pietro{In the statement of the Thm we should remark that $\gamma$ is now the upper bound of the estimator norm and not Tikhonov. The following discussion otherwise might be confusing.}Assuming that both estimators have the same HS norm, then they will satisfy the same uniform bound. However the empirical risk may be (possibly much) smaller for reduced-rank estimator and hence preferable.
It is interesting to compare the bound for RRR and PCR estimators. Assuming that both estimators have the same HS norm, then they will satisfy the same uniform bound. However, the empirical risk may be (possibly much) smaller for the RRR estimator and hence preferable, see Fig.~\ref{fig:uniform_bound_logistic}. %\massi{Link to exp in the appendix?}
\item  
Using the reasoning in \cite{maurer2009,MauPon13} and \cite{mnih2008} we can replace the variance term and the term $\|\Cx\|$ in the bound with their empirical estimates, obtaining a fully data dependent bound. Notice also that the bound readily applies to the more general CME case, which could be subject of future work.

%5. tickonov vs. ivanov.
\item 
%\PN{This remark (3) could be all moved to the Appendix.}  
A related bound can be derived using Cor.~3.1 in \cite{minsker2017} in place of Prop.~\ref{prop:11_main}. This essentially replaces the term $\|C\|$ with $\| \EE AA^*\|$ where $A:=(\phi(x)\otimes \phi(y) - T)$.
%\vladi{maybe call it $A_xy$?!} 
%$ \| \EE [(\phi(x)\otimes \phi(y) - T)(\phi(y)\otimes \phi(x) - T^*)]\|$. 
This bound is more difficult to turn into a data dependent bound, but it allows for a more direct comparison to bounds without the rank constraint, which may be potentially much larger; see the discussion in  App.~\ref{app:bounds_unif_iid}.
%, for which the quantity $\| \EE AA^*\|$
%term $ \| \EE [(\phi(x)\otimes \phi(y) - T)(\phi(y)\otimes \phi(x) - T^*)]\|$ is replaced by the potentially much larger term 
%$\tr \EE [\phi(x)\otimes \phi(x) \|\phi(y)\|^2 - T^*T]$
%$\tr \EE AA^*$; 
\item  One can use the uniform bound to obtain an excess risk bound. In the setting of Thm.~\ref{thm:UB_main_text} and well specified case in which $\TZ_{\im} = \TS_\im\HKoop $ this requires studying the approximation error 
$\min_{\Estim \in \G_{\gamma,r}} \|\TS_\im (\HKoop- \Estim)\|_{\rm HS}^2$. %\pietro{Are we really having this discussion in the appendix?} see discussion in the appendix.
\end{enumerate}

%\massi{We want to add a bit more in this section, present a result at the end and draw some conclusions}
\paragraph{Dealing with Sampled Trajectories.~}We now study ERM with time dependent data. We consider that a trajectory $x_1,\dots,x_{n+1}$ has been sampled from the process as 
$x_1\sim \im,y_{k-1}=x_k \sim \transitionkernel(x_{k-1},\cdot)$, $k \in [2{:}n]$.  
For a strictly stationary Markov process the $\beta $-mixing coefficients
are the numbers $\beta _{\mathbf{X}}\left( \tau \right) $ defined for $\tau
\in \mathbb{N}$ by
\[
%\beta _{\mathbf{X}}\left( \tau \right) =\sup_{B\in \Sigma \otimes \Sigma}\left\vert \pi _{\left\{ 1,1+\tau \right\} }\left( B\right) -(\pi \times \pi)\left( B\right) \right\vert ,
\beta _{\mathbf{X}}\left( \tau \right) =\sup_{B\in \Sigma \otimes \Sigma}\left\vert \rho_\tau\left( B\right) -(\pi \times \pi)\left( B\right) \right\vert ,
\]
where $\rho_\tau$
%$\pi_{\left\{ 1,1+\tau \right\}}$
%\vladi{in our notation this should be $\im(dx)(p(x,\cdot))^\tau(dy)$, so maybe denote it as $\rho_\tau$ instead of $\pi_{\left\{ 1,1+\tau \right\}}$}
 is the joint distribution of $X_{1}
$ and $X_{1+\tau }$. The basic strategy, going back to at least \cite{yu1994rates}, to transfer a concentration result for i.i.d. variables to the non-i.i.d.
case represents the process $\mathbf{X}$ by interlaced block-processes $%
\mathbf{Y}$ and $\mathbf{Y}^{\prime }$, which are constructed in a way that $%
Y_{j}$ and $Y_{j+1}$ are sufficiently separated to be regarded as
independent. Specifically, they are defined as
\[
Y_{j}=\sum_{i=2\left( j-1\right) \tau +1}^{\left( 2j-1\right) \tau }X_{i}%
\quad\text{ and }\quad Y_{j}^{\prime }=\sum_{i=\left( 2j-1\right) \tau +1}^{2j\tau
}X_{i}\quad\text{ for }j\in \mathbb{N}.
\]%
This construction naturally yields the following key lemma, which allows us to extend 
%Thm.~\ref{thm:UB_main_text} to 
several results from the i.i.d. case to time dependent stationary Markov chains. 
%setting of data sampled from a trajectory. 
The proof is presented in App.~\ref{app:bounds_unif_mix}.

\begin{restatable}{lemma}{lemkey}\label{lem:blockprocess}
Let $\mathbf{X}$ be strictly stationary with
values in a normed space $\left( \mathcal{X},\left\Vert \cdot\right\Vert \right) 
$, and assume $n=2m\tau$ for $\tau ,m\in \mathbb{N}$. Moreover, let $Z_{1},\dots,Z_{m}$ be $m$ independent copies of $Z_{1}=\sum_{i=1}^{%
\tau }X_{i}$.
%\vladi{check if there is no assumptions on the norm of $X_j$ and if there should be $1/\tau$ in $Z_1$.} 
Then for $s>0$
\[
\PP \Big\{ \Big\| \sum_{i=1}^n X_{i}\Big\| >s\Big\} \leq 2\,\PP
\Big\{ \Big\| \sum_{j=1}^m Z_{j}\Big\| >\frac{s}{2}\Big\}
+2\left( m-1\right) \beta _{\mathbf{X}}\left( \tau \right).
\]%
\end{restatable}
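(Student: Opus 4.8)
The plan is to use the classical blocking method. First I would partition the index set $[n]=[2m\tau]$ into $2m$ consecutive blocks of length $\tau$ and group them by parity, so that the sum splits as $\sum_{i=1}^n X_i = \sum_{j=1}^m Y_j + \sum_{j=1}^m Y_j'$ with $Y_j,Y_j'$ the odd and even blocks from the statement. By the triangle inequality in $(\mathcal{X},\norm{\cdot})$ the event $\{\norm{\sum_i X_i}>s\}$ is contained in $\{\norm{\sum_j Y_j}>s/2\}\cup\{\norm{\sum_j Y_j'}>s/2\}$, so a union bound gives
\[
\PP\Big\{\Big\|\sum_{i=1}^n X_i\Big\|>s\Big\}\leq \PP\Big\{\Big\|\sum_{j=1}^m Y_j\Big\|>\tfrac{s}{2}\Big\}+\PP\Big\{\Big\|\sum_{j=1}^m Y_j'\Big\|>\tfrac{s}{2}\Big\}.
\]
Since $\mathbf{X}$ is strictly stationary and the family $(Y_1',\dots,Y_m')$ is exactly $(Y_1,\dots,Y_m)$ with every index shifted by $\tau$, the two families share the same joint law; hence the two right-hand probabilities coincide and the bound reduces to $2\,\PP\{\norm{\sum_j Y_j}>s/2\}$.

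The heart of the argument is to replace the dependent blocks $(Y_1,\dots,Y_m)$ by the independent copies $(Z_1,\dots,Z_m)$. Let $P$ be the joint law of $(Y_1,\dots,Y_m)$ and $Q=\mu^{\otimes m}$ the product of its marginals, where $\mu=\mathcal{L}(Z_1)$ (each $Y_j$ has law $\mu$ by stationarity). Setting $A=\{(a_1,\dots,a_m):\norm{\sum_j a_j}>s/2\}$ one has $\PP\{\norm{\sum_j Y_j}>s/2\}=P(A)$ and $\PP\{\norm{\sum_j Z_j}>s/2\}=Q(A)$, so that
\[
\Big|\PP\Big\{\Big\|\sum_j Y_j\Big\|>\tfrac s2\Big\}-\PP\Big\{\Big\|\sum_j Z_j\Big\|>\tfrac s2\Big\}\Big|=|P(A)-Q(A)|\leq \norm{P-Q}_{\rm TV}.
\]
It thus remains to prove the decoupling bound $\norm{P-Q}_{\rm TV}\leq (m-1)\beta_{\mathbf{X}}(\tau)$; combining it with the previous display and the factor $2$ yields exactly the claimed inequality.

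I would establish the decoupling bound by a telescoping (hybrid) argument, peeling off one block at a time. Writing $\bar Y=(Y_1,\dots,Y_{m-1})$, the triangle inequality for total variation gives
\[
\norm{P-Q}_{\rm TV}\leq \big\|\mathcal{L}(\bar Y,Y_m)-\mathcal{L}(\bar Y)\otimes\mu\big\|_{\rm TV}+\big\|\mathcal{L}(\bar Y)-\mu^{\otimes(m-1)}\big\|_{\rm TV},
\]
where the second term is at most $(m-2)\beta_{\mathbf{X}}(\tau)$ by induction on $m$ (the base case $m=1$ being trivial). The first term is the $\beta$-mixing coefficient between $\sigma(\bar Y)$, a function of $X_1,\dots,X_{(2m-3)\tau}$, and $\sigma(Y_m)$, a function of $X_{(2m-2)\tau+1},\dots$, whose index windows are separated by a gap of $\tau$.

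The main obstacle, and the only place the Markov assumption enters, is bounding this first term by $\beta_{\mathbf{X}}(\tau)$: the coefficient is defined in the statement through the single pair $(X_1,X_{1+\tau})$, whereas here I must control the dependence between two entire blocks. I would invoke the Markov property: conditionally on $X_a$ the future is independent of the past, and since total variation is preserved under pushforward by the (injective) forward-evolution kernel, the block-level $\beta$-coefficient between $\mathcal{F}_{\le a}$ and $\mathcal{F}_{\ge a+k}$ equals $\EE_{x\sim\pi}\norm{\mathcal{L}(X_{a+k}\mid X_a=x)-\pi}_{\rm TV}=\norm{\rho_k-\pi\times\pi}_{\rm TV}=\beta_{\mathbf{X}}(k)$, i.e. the single-pair coefficient at lag $k$. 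Because the relevant lag here is at least $\tau$ and $k\mapsto\beta_{\mathbf{X}}(k)$ is non-increasing (total variation contracts under the transition kernel), the first term is at most $\beta_{\mathbf{X}}(\tau)$, completing the induction and hence the proof.
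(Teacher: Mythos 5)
Your proof is correct and follows essentially the same route as the paper's: odd/even blocking, triangle inequality plus a union bound, stationarity to merge the two block sums, and a telescoping argument that trades the dependent blocks for independent copies at total cost $(m-1)\beta_{\mathbf{X}}(\tau)$ (the paper's auxiliary lemma comparing $\mu_{[1:m]}$ with $\mu_{\{1\}}^{m}$ telescopes from the front, while you induct from the back — a cosmetic difference). The only substantive divergence is that you explicitly prove the Markov-property reduction of the block-level $\beta$-coefficient to the pairwise one (via conditional independence, data processing for total variation, and monotonicity of $\beta$ in the lag), a step the paper dispatches in a single sentence by declaring the pair-based and block-based definitions of $\beta_{\mathbf{X}}(\tau)$ equivalent and then proving the lemma for general strictly stationary processes under the blockwise definition.
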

As an application of this result we transfer Prop. \ref{prop:11_main}, which was key in the proof of Thm.~\ref{thm:UB_main_text} to give an estimation bound for $\|\Cxy -\ECxy\|$, to the non-i.i.d. setting. Fix $\tau \in \mathbb{N}$ and let $Z_{1},\dots,Z_{m}$
%Applying it, we can transfer Prop.~\ref{prop:11_main}, which we used in Thm.~\ref{thm:UB_main_text} to the non-iid setting in order to bound $\|\Cxy -\ECxy\|$. 
%Namely, fixing $\tau \in \mathbb{N}$ and letting $T_{1},...,T_{m}$ 
be independent copies of $Z_{1}=\frac{1}{\tau }\sum_{i=1}^{\tau }\phi(x_{i})\otimes \phi(x_{i+1}) - T$.
Applying Lem.~\ref{lem:blockprocess} with $\phi(x_i)\otimes\phi(x_{i+1}) - T$ in place of $X_i$ we obtain the following.
\begin{restatable}{proposition}{proponebis}
\label{prop:11_main_bis}
Let $\delta > (m-1) \beta _{\mathbf{X}}( \tau -1)$. With probability at least $1-\delta$ in the draw $x_1\sim \im,x_i \sim \transitionkernel(x_{i-1},\cdot)$, $i \in [2{:}n]$,
\[
\norm{\ECxy - \Cxy}\leq \frac{48}{m}\ln \frac{4m\tau }{\delta -\left( m-1\right) \beta _{\mathbf{X}}\left( \tau -1\right) } + 12 \sqrt{\frac{2\left\Vert C \right\Vert}{m}\ln \frac{4m\tau }{\delta -\left( m-1\right) \beta _{\mathbf{X}}\left( \tau -1\right) } } .
\]
\end{restatable}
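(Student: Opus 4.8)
The plan is to combine the block decoupling of Lemma~\ref{lem:blockprocess} with the i.i.d. operator concentration already established in Proposition~\ref{prop:11_main}. Set $W_i := \phi(x_i)\otimes\phi(x_{i+1}) - \Cxy \in \HS{\RKHS}$. Since the chain is started from the invariant measure $\im$ it is strictly stationary, each consecutive pair $(x_i,x_{i+1})$ has law $\rho$, so $\EE W_i = 0$ and $\ECxy - \Cxy = \tfrac1n\sum_{i=1}^n W_i$. Using $\|\phi(x)\|\le 1$ and $\|\Cxy\|\le 1$ we get the uniform bound $\norm{W_i}\le 2$. Crucially, $\mathbf W = (W_i)_i$ is a measurable function of consecutive pairs, $W_i = g(x_i,x_{i+1})$, hence strictly stationary, and its $\beta$-mixing coefficient at lag $\tau$ satisfies $\beta_{\mathbf W}(\tau)\le \beta_{\mathbf X}(\tau-1)$: the index blocks $\{i,i+1\}$ and $\{i+\tau,i+\tau+1\}$ are separated by a gap of only $\tau-1$ in the $x$-indices. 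This index shift is exactly why $\tau-1$ appears in the statement.

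First I would apply Lemma~\ref{lem:blockprocess} to $\mathbf W$ with $s = nt$, to control $\PP\{\norm{\ECxy-\Cxy}>t\}$. Writing $Z_j := \tfrac1\tau\sum_{i\in\text{block }j} W_i$ for the i.i.d. block averages and rescaling the block \emph{sums} of the lemma by $\tau$, the relation $n = 2m\tau$ turns the threshold $s/2$ into $t$ and gives
\[
\PP\{\norm{\ECxy-\Cxy}>t\} \le 2\,\PP\Big\{\Big\|\tfrac1m\sum_{j=1}^m Z_j\Big\|>t\Big\} + 2(m-1)\,\beta_{\mathbf X}(\tau-1).
\]
Thus the problem is reduced to a concentration statement for the average of the $m$ independent, centered, Hilbert–Schmidt-valued variables $Z_1,\dots,Z_m$, which is precisely the object handled by the operator-Bernstein argument behind Proposition~\ref{prop:11_main}, now with effective sample size $m$ in place of $n$.

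Next I would verify the two inputs that argument needs. The range bound is immediate: $\norm{Z_1}\le \tfrac1\tau\sum_i\norm{W_i}\le 2$. The variance bound is the delicate point, because the $W_i$ inside a block remain dependent and the cross terms in $\EE[Z_1 Z_1^*]$ have no sign. I would avoid them by convexity: for any unit vector $u$, $\norm{Z_1^* u}^2 = \big\|\tfrac1\tau\sum_i W_i^* u\big\|^2 \le \tfrac1\tau\sum_i \norm{W_i^* u}^2$ by Jensen, so that $\langle u,\EE[Z_1 Z_1^*]u\rangle = \EE\norm{Z_1^*u}^2 \le \tfrac1\tau\sum_i \langle u,\EE[W_iW_i^*]u\rangle$. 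Combined with the single-term computation $\EE[W_1 W_1^*] = \EE[\|\phi(x_2)\|^2\,\phi(x_1)\otimes\phi(x_1)] - \Cxy\Cxy^* \preceq \Cx$ and stationarity, this yields $\|\EE[Z_1 Z_1^*]\|\le\|\Cx\|$, and symmetrically $\|\EE[Z_1^* Z_1]\|\le\|\Cx\|$. Hence the block averages inherit the same range ($2$) and variance proxy ($\|\Cx\|$) as the single pairs, without any loss of a factor $\tau$, and the i.i.d. bound of Proposition~\ref{prop:11_main} applied to $Z_1,\dots,Z_m$ at confidence $\delta'$ gives a bound of the same structure, $\tfrac{c_1}{m}L + c_2\sqrt{\tfrac{\|\Cx\|}{m}\,L}$, with $L$ the relevant logarithm of the effective-dimension factor over $\delta'$.

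Finally I would choose $\delta'$ so that, after the factor $2$ and the additive mixing term in the display, the total failure probability is at most $\delta$; this forces the constraint $\delta > (m-1)\beta_{\mathbf X}(\tau-1)$ and replaces $\delta'$ by $\delta - (m-1)\beta_{\mathbf X}(\tau-1)$ inside the logarithm, producing the stated inequality once the constants are tracked through the rescaling. I expect the main obstacle to be precisely this within-block variance control — collapsing $\EE[Z_1 Z_1^*]$ down to $\|\Cx\|$ without analysing the internal covariance structure of a block and without paying a factor $\tau$ — together with the careful bookkeeping of the mixing index shift $\tau\mapsto\tau-1$ and of how the lemma's factor $2$ propagates through the confidence level and the final constants.
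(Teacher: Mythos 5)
Your proposal is correct and follows essentially the same route as the paper: apply Lemma~\ref{lem:blockprocess} to $X_i=\phi(x_i)\otimes\phi(x_{i+1})-T$, pass to the $m$ i.i.d.\ rank-$\le\tau$ block averages, invoke the operator Bernstein inequality of Prop.~\ref{prop:11bis} with range $2$ and variance proxy $\norm{C}$ (whence the $m\tau$ inside the logarithm), and solve for the confidence level $\delta-(m-1)\beta_{\mathbf{X}}(\tau-1)$. The two points you elaborate — the Jensen/convexity argument showing the blocks inherit the variance proxy $\norm{C}$ without a factor of $\tau$, and the index shift $\beta_{\mathbf{W}}(\tau)\le\beta_{\mathbf{X}}(\tau-1)$ explaining the $\tau-1$ — are exactly the two steps the paper asserts without detail (``$\EE(A_i^*A_i)^{1/2}\le\norm{C}$'' and writing $\beta_{\mathbf{X}}(\tau-1)$ directly), so your sketch is if anything more explicit than the paper's proof.
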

We notice that, apart from slightly larger numerical constants and a logarithmic term, Prop.~\ref{prop:11_main_bis} is conceptually identical to Prop.~\ref{prop:11_main} provided the sample size $n$ is replaced by the effective sample size $m\approx n/2\tau$. Similar conclusions can be made to bound the other random terms appearing in \eqref{eq:ggg-mp}, see
%. Detailed discussion and version of Thm.~\ref{thm:UB_main_text} for non-iid setting is postponed 
App.~\ref{app:bounds_unif_mix} for a discussion.

\section{Experiments}\label{sec:exp}

%{\small 
%\begin{figure}[t]
%    \includegraphics[width=\linewidth]{}
%    \caption{Left panel: logistic map. Test error for different Koopman operator estimators as a function of the regularization parameter in the noiseless (upper left panel) and noisy (lower left panel) cases. Right panel: air quality dataset. 3-hours-ahead prediction for in-sample (negative times) and out-of-sample (positive times) data concerning Nitrogen dioxide ($\rm{NO}_{2}$) and Ozone ($\rm{O}_{3}$) concentrations. The data showed is an average over the twelve different monitoring sites.}\label{fig:experiments}
%\end{figure}
%}
In this section we show that the proposed framework  %describe how, in real scenarios, the learning framework we are proposing 
can be applied to dissect and forecast dynamical systems. While we keep the presentation concise, all the technical aspects, as well as additional experiments, are deferred to App.~\ref{app:exp}. Along with the code to reproduce the experiments, at the url \href{https://github.com/CSML-IIT-UCL/kooplearn}{https://github.com/CSML-IIT-UCL/kooplearn}, we release a Python module implementing \texttt{sklearn}-compliant~\cite{scikit-learn} estimators to learn the Koopman operator.
%, providing a principled methodology to learn DS with neural networks.

{\bf 
%Example 1. 
%Synthetic Experiment.~} 
Noisy Logistic Map.~} 
We study the noisy logistic map, a non-linear dynamical system defined by the recursive relation $x_{t + 1} = (4x_{t}(1 - x_{t}) + \xi_{t}) \mod 1$ over the state space $\X = [0 , 1]$. Here, $\xi_{t}$ is i.i.d. additive {\em trigonometric} noise as defined in~\cite{Ostruszka2000}. The probability distribution of trigonometric noise is supported in $[-0.5,0.5]$ and is proportional to $\cos^{N}(\pi\xi)$, $N$ being an {\em even} integer. In this setting, the true invariant distribution, transition kernel and Koopman eigenvalues are easily computed. In Tab.~\ref{tab:logistic_map_results} we compare the performance of KRR, PCR and RRR (see Sec.~\ref{sec:erm}) trained with a Gaussian kernel. We average over 100 different training datasets each containing $10^{4}$ data points and evaluate the test error on 500 unseen points. In Tab.~\ref{tab:logistic_map_results} we show the approximation error for the three largest eigenvalues of the Koopman operator, $\lambda_{1} = 1$ and $\lambda_{2,3} = -0.193 \pm 0.191i$ as well as training and test errors. The following eigenvalues $|\lambda_{4,5}| \approx 0.027$ are an order of magnitude smaller than $|\lambda_{2,3}|$.  Both PCR and RRR have been trained with the rank constraint $r=3$. The regularization parameter $\reg$ for KRR and RRR is the value $\reg \in [10^{-7}, 1]$ minimizing the validation error. The RRR estimator always outperforms PCR, and in the estimation of the non-trivial eigenvalues $\lambda_{2,3}$ ($\lambda_{1}$ corresponding to the equilibrium mode is well approximated by every estimator) attains the best results. In Fig.~\ref{fig:uniform_bound_logistic} we report the results of a comparison between PCR and RRR performed under Ivanov regularization. This experiment was designed to empirically test the uniform bounds presented in Sec.~\ref{sec:bounds}. Again, RRR consistently outperforms the PCR estimator.

{\small
\begin{figure}[t!]
\begin{center}
\includegraphics[width=0.7\textwidth]{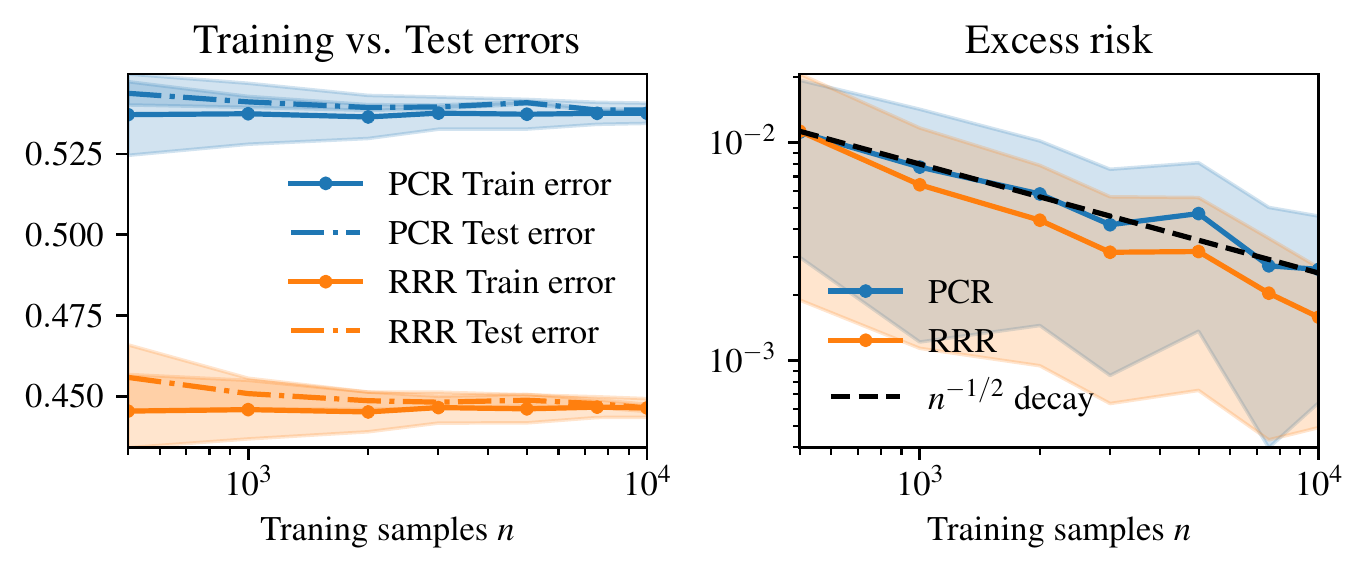}
\end{center}
\caption{Numerical verification of the uniform bound presented in Thm.~\ref{thm:UB_main_text} for the noisy Logistic map. Left panel: the training and test risk for RRR are consistently than PCR. Right panel: the deviation between training and test risk %decreases roughly as $n^{-1/2}$ 
as a function of the number of training samples have $\approx n^{-1/2}$ decay.}  \label{fig:uniform_bound_logistic}
\end{figure}
}

\begin{table}[h]
    \small
    \caption{Comparison of the estimators proposed in Section~\ref{sec:erm} on the noisy logistic map.
    }\label{tab:logistic_map_results}
    \centering
    \begin{tabular}{r|cccc}
        \toprule
        Estimator & Training error & Test error & $|\lambda_{1} - \hat{\lambda}_{1}|/|\lambda_{1}|$ & $|\lambda_{2,3} - \hat{\lambda}_{2,3}|/|\lambda_{2,3}|$ \\ 
        \midrule
        PCR & $  0.2 \pm 0.003$ & $ 0.18 \pm 0.00051$ & $9.6 \cdot 10^{-5} \pm 7.2 \cdot 10^{-5}$ & $ 0.85 \pm 0.03$ \\
        RRR & $ 0.13 \pm 0.002$ & $\bm{0.13 \pm 0.00032}$ & $5.1 \cdot 10^{-6} \pm 3.8 \cdot 10^{-6}$ &$\bm{0.16 \pm 0.1}$ \\
        KRR & $\bm{0.032 \pm 0.00057}$ & $ \bm{0.13 \pm 0.00068}$ & $\bm{7.9 \cdot 10^{-7} \pm 5.7 \cdot 10^{-7}}$ & $ 0.48 \pm 0.17$ \\
        \bottomrule
    \end{tabular}
\end{table}

%In Fig.~\ref{fig:experiments}~we compare KRR, RRR and PCR (see Sec.\ref{sec:erm}) as a function of the regularization parameter. \massi{I'd only report results with noise here} In the noiseless regime KRR is the estimator attaining smaller absolute error, followed by the RRR and PCR estimators. When noise is present, RRR attains the smallest absolute error (at $\reg \simeq 10^{-3}$)~\pietro{as also showed by the statistical bounds (?!)}, followed by KRR and PCR. Overall, as also confirmed by additional results reported in the appendix, RRR estimator always outperform PCR in terms of minimal attainable test error. Results showed here have been obtained with a $\nu = 0.5$ Mat\'ern kernel, averaging over 10 training/test datasets, each  containing 100 data points. Data used for training and test were generated sampling i.i.d. from the invariant distribution $\pi$. Because of extremely short mixing times of the logistic map, as showed in the appendix, equivalent results are obtained by generating the training and test data sampling only the initial point $x_{0} \sim \pi$ and then iterating multiple times the logistic map to form a trajectory.

%\massi{We need to say why we did not try KRR}
{\bf %Example 2. 
The molecule Alanine Dipeptide.~} 
We analyse a simulation of the small molecule Alanine dipeptide reported in Ref.~\cite{WN2018}.
%\footnote{Data and simulation details can be found at the url \href{https://markovmodel.github.io/mdshare/ALA2/#alanine-dipeptide}{https://markovmodel.github.io/mdshare/ALA2/#alanine-dipeptide}}. 
The dataset, here, is a time series of the Alanine dipeptide atomic positions. The trajectory spans an interval of 250 ns and the number of features for each data point is 45. The dynamics is Markovian and governed by the Langevin equation~\cite{Lemons1997}. The system supports an invariant distribution, known as the Boltzmann distribution, and the equations are time-reversal-invariant. The latter implies that the true Koopman operator is self-adjoint and has real eigenvalues. For Alanine dipeptide it is well known that the dihedral angles play a special role, and characterize the {\em state} of the molecule. %Indeed, from the density plot shown in Figure~\ref{fig:ala2_ramachandran} it is clear that only selected regions of the angle space are appreciably visited during the evolution of Alanine dipeptide. 
Broadly speaking, we can associate specific regions of the dihedral angles space to metastable states, i.e. configurations of the molecule which are "stable" over an appreciable span of time. To substantiate this claim we point to the left panel of Figure~\ref{fig:CNN_kernel}. From this plot it is evident that the molecule spend a large amount of time around specific values of the angle $\psi$, and transitions from one region to another are quite rare. We now try to recover the same informations from the spectral decomposition of the Koopman operator. We train the RRR estimator with rank 5 and a standard Gaussian kernel (length scale $\ell = 0.2$). We remark that the dataset is comprised of atomic positions, and not dihedral angles. We show that the computed eigenfunctions are highly correlated with the dihedral angles, meaning that our estimator was able to {\em learn the correct physical quantities} starting only from the raw atomic positions. The estimated eigenvalues are $\lambda_{1} = 0.99920$, $\lambda_{2} = 0.9177$, $\lambda_{3} = 0.4731$, $\lambda_{4} = -0.0042$ and $\lambda_{5} = -0.0252$. Notice that they are all real (as they should be, since the system is time-reversal-invariant). In Fig.~\ref{fig:ala2_lead_eigenfunctions} of App.~\ref{app:ala2} we report the plots of the (non-trivial) eigenfunctions corresponding to $\lambda_{2}$ and $\lambda_{3}$ in the dihedral angle space. From these plots it is clear that the eigenfunctions are to a good approximation piecewise constant and identify different metastability regions, as expected.

{\bf Koopman Operator Regression with Deep Learning Embeddings.~} To highlight the importance of choosing the kernel, here we consider a computer vision setting, where standard  kernels (e.g. Mat\'ern or Gaussian),  are less suitable than features given by pre-trained deep learning models. We take a sequence $(x_t)_{t\in\N}$ of images from the MNIST dataset \cite{DengMNIST2012} starting from $x_0$ corresponding to an image depicting a digit $0$ and such that for every $x_t$ depicting a digit $c_t\in\{0,\dots,9\}$ we sample $x_{t+1}$ from the set of images depicting the digit $c_t+1~ (\textrm{mod}~ 10)$. We compare the rank-10 RRR estimators using Linear and Gaussian kernels, with a {\itshape Convolutional Neural Network (CNN) kernel} $k_{{\bm \theta}}(x, x^{\prime}) := \left\langle \phi_{{\bm \theta}}(x), \phi_{{\bm \theta}}(x^{\prime})\right\rangle$, where $\phi_{{\bm \theta}}$ is a feature map obtained from the last layer of a convolutional neural network classifier trained on the same images in $(x_t)_{t\in\N}$ 
%(we refer to App. \ref{app:exp} for more details on the training process and model architecture)
. We trained the three Koopman estimators on $1000$ samples. The right panel of Fig.~\ref{fig:CNN_kernel} shows the first $4$ forecasting steps starting from a digit $0$. Only the forecasts by the CNN kernel maintain a sharp (and correct) shape for the predicted digits. In contrast, the other two kernels are less suited to capture visual structures and their predictions quickly lose any resemblance to a digit. This effect can be appreciated starting from other digits, too (see App.~\ref{app:exp}). 
{\small
\centering
\begin{figure}[t!]
\hspace{1.5truecm}\includegraphics[width=10truecm]{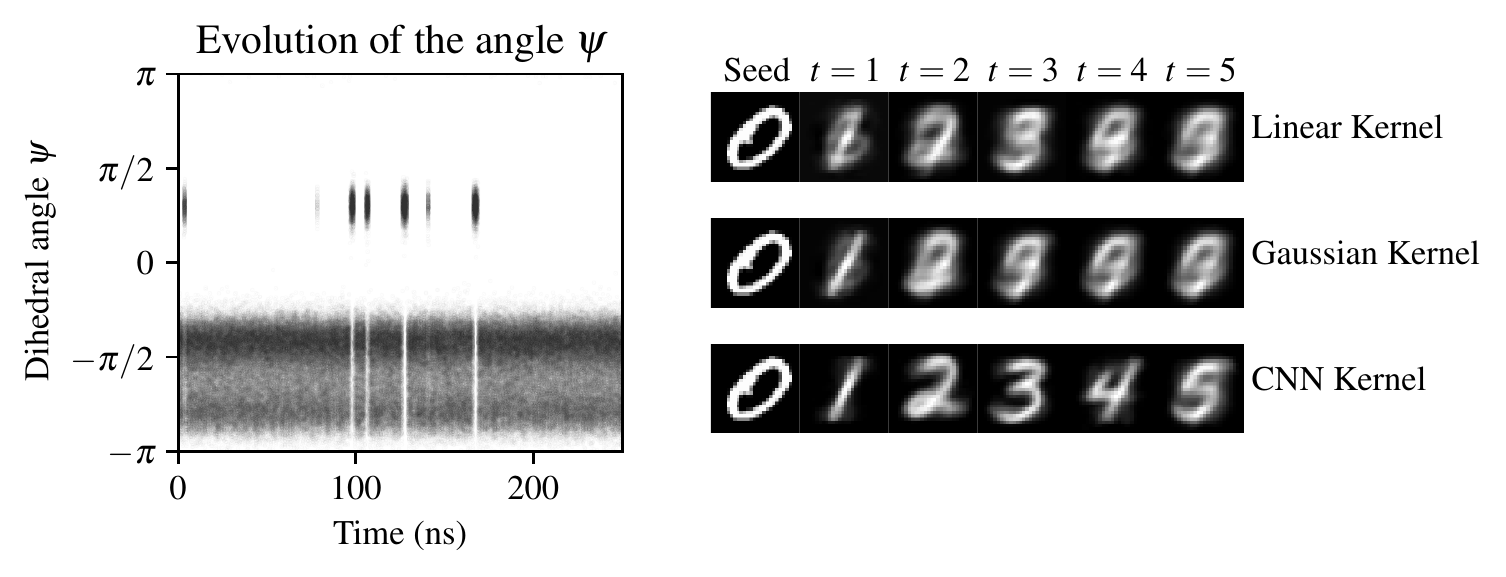}
    \caption{Left: dihedral angle $\psi$ of the Alanine dipeptide as a function of time. Right: comparison of different kernels in the generation of a series of digits. Starting from a seed image, the next ones are obtained by iteratively using the rank-10 RRR Koopman operator estimator.}\label{fig:CNN_kernel}
\end{figure}
}

\section{Conclusions}
We proposed a statistical framework to learn Koopman operators in RKHS. In addition, we %discussed 
investigated how the spectral and modal decompositions of the Koopman estimators approximate the true ones, providing novel perturbation bounds. In particular, we studied three Koopman operator estimators, KRR, PCR and the newly proposed RRR. We observed that KRR and PCR correspond to well-established estimators from the dynamical systems literature. Then, by leveraging recent results from kernel operator learning we observed that such estimators enjoy strong statistical guarantees. Focusing on the RRR estimator, we provided generalization bounds, both in i.i.d. and non i.i.d. settings, one of the key novel contributions of this work.

%Since in 
In this work we consider only time-homogeneous dynamical systems admitting an invariant distribution. Weakening these assumptions would allow the study of more general systems. Moreover, the extension of our results to continuous dynamical systems and in general to non-uniformly sampled datapoints deserves further investigations. Finally, the choice of the kernel is fundamental for efficient learning. Designing a kernel function incorporating prior knowledge of the dynamical system (such as structure of the data points, symmetries, smoothness assumptions etc.) is a topic of paramount interest.

%We proposed a framework for learning Koopman operators in RKHS, which offers several computational advantages with respect to classical approaches from dynamical systems, as well as providing a strong connection with the machine learning literature. We proposed and studied three Koopman operator estimators, KRR, PCR and RRR. In particular, we discuss efficient ways to compute and use them, as well as their spectral and modal decompositions. We observed that KRR and PCR correspond to well-established estimators from the dynamical systems literature. Then, by leveraging recent results from kernel operator learning we have observed that such estimators enjoy strong statistical guarantees, a result that was missing from the dynamical system literature. Focusing on the RRR estimator, we provided generalization bounds, both in i.i.d. and non i.i.d. settings, a key novel contribution of this work. Our experimental analysis highlights a few key elements of the proposed framework. In particular, it supports the strategy of favoring low-rank Koopman operators. 

\paragraph{Acknowledgements:}
This work was supported in part by the EU Projects ELISE and ELSA. We thank all anonymous reviewers for their useful insights and suggestions.

\bibliographystyle{apalike}
{%\small
\bibliography{bibliography}
}

\section*{Checklist}

% %%% BEGIN INSTRUCTIONS %%%
% The checklist follows the references.  Please
% read the checklist guidelines carefully for information on how to answer these
% questions.  For each question, change the default \answerTODO{} to \answerYes{},
% \answerNo{}, or \answerNA{}.  You are strongly encouraged to include a {\bf
% justification to your answer}, either by referencing the appropriate section of
% your paper or providing a brief inline description.  For example:
% \begin{itemize}
%   \item Did you include the license to the code and datasets? \answerNA{}
%   \item Did you include the license to the code and datasets? \answerNA{}
%   \item Did you include the license to the code and datasets? \answerNA{}
% \end{itemize}
% Please do not modify the questions and only use the provided macros for your
% answers.  Note that the Checklist section does not count towards the page
% limit.  In your paper, please delete this instructions block and only keep the
% Checklist section heading above along with the questions/answers below.
% %%% END INSTRUCTIONS %%%

\newpage
    
\appendix

\begin{center}
{\Large \bf Supplementary Material} 
\end{center}

Below we give an overview of the structure of the supplementary material.

\begin{itemize}
    \item  App.~\ref{app:background} contains additional technical background on Markov processes and Koopman operators, notably on Koopman Mode Decomposition.   
\item In App.~\ref{app:learning} we provide detailed proofs of the results presented in Sec. \ref{sec:learning}. In particular, we provide bounds on the distance between spectra of the Koopman operator and its estimation in App. \ref{app:kmd-dmd}, and discuss a duality between Koopman operator regression (KOR) and conditional mean embeddings (CME) in App. \ref{app:kor-cme}.  

\item In App.~\ref{app:erm} we expand the content of Sec. \ref{sec:erm}. In App. \ref{app:comp_estim} we discuss the computation of three estimators considered in this work, while in App. \ref{app:comp_dmd} we show how to 
%: KRR (Kernel Ridge Regression), PCR (Principal Component Regression) and a novel one RRR (Reduced Rank Regression) given 
compute their modal decompositions.

\item In App.~\ref{app:bounds} we prove the statistical learning bounds presented in Sec.~\ref{sec:bounds} and briefly discuss their implications and future research directions. 

\item Finally, in App.~\ref{app:exp} we provide more 
details on the experimental section, as well as present  additional experiments.

%contains the details concerning the experimental section. 
\end{itemize}

\section{Background on the Koopman Operator Theory}\label{app:background}
We now recall basic results concerning the theory of Koopman (i.e. transfer) operators. As mentioned in the main text, the natural function space $\F$ in which the Koopman operator can be defined is $\F = L^{\infty}(\X)$. In this case, given a transition kernel $\transitionkernel$,  by integrating, we can define the {\em transfer operator} acting either on $L^{\infty}(\X)$-measurable functions (from the right) or $\sigma$-finite measures on $\sigalg$ (from the left).
\begin{definition}[Transfer operator]
    We define the linear transfer operator $\transferop$ acting on the right on functions $f \in L^{\infty}(\X)$
    \begin{subequations}
        \begin{equation}\label{eq:transfer_operator_right}
            (\transferop f)(x) := \int_{\X} p(x, dy)f(y) = \mathbb{E}\left[f(X_{i + 1}) \middle | X_{i} = x\right]
        \end{equation}
        and on the left on $\sigma$-finite measures on $\sigalg$
        \begin{equation}\label{eq:transfer_operator_left}
            (\mu \transferop)(B) := \int_{\X} \mu(dx)p(x,B) \qquad B \in \sigalg.
        \end{equation}
\end{subequations}
\end{definition}
We notice that~\eqref{eq:transfer_operator_right} acts exactly as the Koopman operator defined in the main text, although on a different function space. Equation~\eqref{eq:transfer_operator_left}, on the other hand, can be interpreted as evolving distributions. Indeed, given an initial distribution of states $\mu$, evolving each state for one step forward, will yield the distribution $\mu\transferop$. If the transition kernel is {\em non-singular}, that is for all $B \in \sigalg$ such that $\mu(B) = 0$ one has $(\mu \transferop)(B) = 0$, in view of the Radon–Nikodym theorem we also have that~\eqref{eq:transfer_operator_left} can be interpreted as the adjoint of~\eqref{eq:transfer_operator_right} with respect to the Banach duality pairing between $L^{\infty}(\X, \mu)$ and $L^{1}(\X, \mu)$, see e.g.~\cite{Lasota1994}. From~\eqref{eq:transfer_operator_left} it also follows that $\im$ being an invariant distribution means that $\im \transferop = \im$, i.e. it is a fixed point of the transfer operator acting on the left. The following lemma proves that if $\im$ is an invariant distribution,~\eqref{eq:transfer_operator_right} is a non-expansive operator in every Lebesgue space $L^{q}(\X, \im)$ with $1 \leq q < \infty$.

\begin{lemma}\label{lemma:transfer_operator_L2}
    If $\im$ is an invariant probability measure, the operator $\transferop$ is a weak contraction on $L_{q}(\X, \im)$ for all $1 \leq q < \infty$. Additionally, it holds that  $\norm{\transferop} = 1$. 
\end{lemma}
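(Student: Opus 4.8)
The plan is to establish the non-expansiveness of $P$ through Jensen's inequality combined with the functional form of invariance, and then exhibit the constant function as a norm-attaining vector to pin the operator norm exactly at $1$.

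First I would record the invariance condition in functional form. The hypothesis $\pi(B) = \int_\X \pi(dx)\,p(x,B)$ is exactly the statement $\int_\X (P\mathbf{1}_B)\,d\pi = \int_\X \mathbf{1}_B\,d\pi$ for every $B \in \Sigma_\X$. Extending this by linearity to simple functions and then by monotone convergence to arbitrary nonnegative measurable $g$, I obtain the key identity $\int_\X (Pg)\,d\pi = \int_\X g\,d\pi$ for all nonnegative measurable $g$, in particular for $g \in L^1(\X,\pi)$. This is the precise functional reformulation of the fact that $\pi$ is a fixed point of $P$ acting on the left.

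Next, fixing $f \in L^q(\X,\pi)$ with $1 \le q < \infty$, I would observe that $(Pf)(x) = \int_\X p(x,dy)\,f(y)$ is the expectation of $f$ under the probability measure $p(x,\cdot)$. Since $t \mapsto |t|^q$ is convex, Jensen's inequality yields the pointwise bound $|(Pf)(x)|^q \le \int_\X p(x,dy)\,|f(y)|^q = (P|f|^q)(x)$. Integrating against $\pi$ and applying the invariance identity with $g = |f|^q$ gives $\|Pf\|_{L^q}^q = \int_\X |Pf|^q\,d\pi \le \int_\X (P|f|^q)\,d\pi = \int_\X |f|^q\,d\pi = \|f\|_{L^q}^q$, so $\|Pf\|_{L^q} \le \|f\|_{L^q}$ and hence $\|P\| \le 1$. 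To show the norm is attained, I would take $f \equiv 1$: because $p(x,\cdot)$ is a probability measure, $(P\mathbf{1})(x) = p(x,\X) = 1$, so $P\mathbf{1} = \mathbf{1}$, while $\|\mathbf{1}\|_{L^q} = \pi(\X)^{1/q} = 1$ since $\pi$ is a probability measure. This forces $\|P\| \ge 1$, and together with the previous bound gives $\|P\| = 1$.

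The main point requiring care — indeed the only non-routine step — is the rigorous passage from the set-level invariance to the functional identity on $L^1(\X,\pi)$, together with verifying the measurability of $x \mapsto (Pf)(x)$ and the legitimacy of applying Jensen's inequality and Tonelli's theorem to the transition kernel. These are standard under the usual regularity assumptions on $p$, but I would state them explicitly to keep the argument self-contained.
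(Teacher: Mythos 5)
Your proof is correct and follows essentially the same route as the paper's: the pointwise Jensen bound $|(Pf)(x)|^q \le (P|f|^q)(x)$ using that $p(x,\cdot)$ is a probability measure, followed by integration against $\pi$ and the invariance $\pi P = \pi$ (which you phrase as the functional identity $\int_{\X} (Pg)\,d\pi = \int_{\X} g\,d\pi$, and the paper phrases via Tonelli as $\int_{\X}\pi(dx)\int_{\X}p(x,dy)\,|f(y)|^q = \int_{\X}(\pi P)(dy)\,|f(y)|^q$ --- the same fact), and finally the constant function $\mathbf{1}$ to show $\|P\| \ge 1$, exactly as in the paper. Your explicit derivation of the functional form of invariance from indicators via simple functions and monotone convergence is a detail the paper leaves implicit, but it is not a different method.
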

\begin{proof}
For the first part, Jensen's inequality and the invariance of $\im$ directly give
    \begin{equation*}
        \begin{split}
            \int_{\X} |(\transferop f)(dx)|^{q} \im(dx) & =  \int_{\X} \im(dx) \left(\int_{\X}\transitionkernel(x, dy) |f(y)| \right)^{q} 
            \leq  \int_{\X} \im(dx) \int_{\X}\transitionkernel(x, dy) |f(y)|^{q}
            \\ & = \int_{\X} (\im\transferop)(dy) |f(y)|^{q}
            = \int_{\X} \transferop(dy) |f(y)|^{q}.
        \end{split}
    \end{equation*}
For the second part we notice that for any constant function $c$, one has  $\norm{c} = \norm{\transferop c} \leq \norm{\transferop}\norm{c}$, that is $\norm{\transferop} \geq 1$. This fact coupled with the first part of the lemma yields $\norm{\transferop} = 1$.
\end{proof}
\begin{corollary}
If $\im$ is an invariant probability measure, the operator $\transferop$ is well defined in $L_{q}(\X, \im)$ for all $1 \leq q < \infty$, and in particular for $q=2$, the case explored in the main text.
\end{corollary}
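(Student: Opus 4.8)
The plan is to read the corollary directly off Lemma~\ref{lemma:transfer_operator_L2}, so the argument is very short; the essential content is already contained in the Jensen-inequality estimate established in the proof of that lemma. Concretely, for every $f \in L_{q}(\X, \im)$ the invariance $\im\transferop = \im$ gives
\[
\int_{\X} \Big( \int_{\X} \transitionkernel(x,dy)\,\abs{f(y)} \Big)^{q} \im(dx) \leq \int_{\X} \Big( \int_{\X}\transitionkernel(x,dy)\,\abs{f(y)}^{q}\Big) \im(dx) = \int_{\X} \abs{f(y)}^{q}\, \im(dy) = \norm{f}_{L_q}^{q} < \infty .
\]

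First I would observe that the finiteness of the left-hand side forces the inner integral $\int_{\X} \transitionkernel(x,dy)\,\abs{f(y)}$ to be finite for $\im$-almost every $x$; hence the defining integral $(\transferop f)(x) = \int_{\X}\transitionkernel(x,dy)\, f(y)$ converges absolutely $\im$-a.e., and $\transferop f$ is a well-defined (equivalence class of a) measurable function. Second, the same estimate yields $\norm{\transferop f}_{L_q} \leq \norm{f}_{L_q}$, so $\transferop$ indeed maps $L_{q}(\X,\im)$ into itself as a bounded linear operator, with operator norm equal to $1$ by the second part of the lemma. Specializing to $q=2$ recovers the space $\Lii$ used throughout the main text, on which $L_{2}(\X,\im)$ is moreover a Hilbert space and $\transferop$ coincides with the Koopman operator $\Koop$.

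There is no genuine obstacle here: the corollary is a bookkeeping consequence of the lemma, with the well-definedness and boundedness being two readings of the same inequality. The only point worth a word of care is the passage from a finite $L_q$ norm to pointwise ($\im$-a.e.) finiteness of the conditional-expectation integral, which is simply the standard fact that a function of finite $L_q$ norm is finite almost everywhere; once this is noted, the statement follows immediately.
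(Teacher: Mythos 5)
Your proposal is correct and follows essentially the same route as the paper: the corollary is stated there without separate proof precisely because it is a direct consequence of Lemma~\ref{lemma:transfer_operator_L2}, whose proof is exactly your Jensen-plus-invariance computation. Your additional remark---that finiteness of the $L_q$ norm forces $\im$-a.e.\ absolute convergence of the defining integral, which is what ``well defined'' means here---is a correct and worthwhile spelling-out of a step the paper leaves implicit.
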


We just proved that whenever an invariant probability measure $\im$ exists,~\eqref{eq:transfer_operator_right} can be defined directly in $\Lii$. An interesting question is therefore what is the equivalent of~\eqref{eq:transfer_operator_left}, seen as the adjoint  of~\eqref{eq:transfer_operator_right} with respect to the Banach duality pairing. To characterize the adjoint operator $\transferop^*$,  we define the {\em time reversal} of $\transitionkernel$ as the Markov transition kernel $\transitionkernel^*(x,B):= \mathbb{P}\left\{X_{t - 1} \in B \middle| X_{t} = x \right\}$, and a simple calculation shows that $\transferop^*\colon\Lii\to\Lii$ is given by:
 \begin{equation}\label{eq:transfer_operator_adjoint}
 (\transferop^* f)(x) := \int_{\X} p^*(x, dy)f(y),
 \end{equation}
which can be seen as the {\em backward} transfer operator $[\transferop^*f](x) = \EE[ f(X_{t-1})\,\vert\,X_t = x]$. Notice that when the transfer operator on $\Lii$ is self-adjoint, i.e. $P=P^*$, the Markov chain is called \textit{time-reversal invariant} which is a relevant case in various fields such as physics and chemistry~\cite{Schutte2001}.

The following example shows that the basic tools developed in the classical theory of (deterministic) dynamical systems~\cite{Lasota1994} can be easily recovered in terms of transfer operators. \begin{example}[Deterministic Dynamical System]\label{ex:deterministic}
    Let $X_{i + 1} = \dynmap(X_{i})$ for all $i$, with $\dynmap:\X \to \X$. Clearly, the transition kernel for this Markov chain is
    \begin{equation}
        \transitionkernel(x, B) = \begin{cases}
            1 \qquad \text{ if } \dynmap(x) \in B \\
            0 \qquad \text{ otherwise }
        \end{cases}.
    \end{equation}
    This corresponds to $\transitionkernel(\cdot, A) = \mathds{1}_{B}\circ \dynmap = \mathds{1}_{\dynmap^{-1}(B)}$, which in turn implies that 
    \begin{equation}
        (\mu\transferop)(A) = \int_{\dynmap^{-1}(A)} \mu(dx).
    \end{equation}
    This is the Perron-Frobenius operator \cite{Lasota1994} as defined in the classical theory of dynamical systems. Analogously, $p(x, \cdot) = \delta_{\dynmap(x)}$ (the Dirac measure centered at $\dynmap(x)$) and 
    \begin{equation}
        (\transferop f)(x) = f(\dynmap(x))
    \end{equation}
    is the deterministic Koopman operator \cite{Lasota1994}. When an invariant measure $\im$ exists, the Koopman operator defined in $\Lii$ is known to be unitary~\cite{Budisic2012} and hence normal. In this respect, see~\cite{Gonzalez2021}, where general misconceptions on the Koopman operator (such as the one of always being a unitary) are discussed in detail.  
\end{example}

We conclude this section recalling the notion of spectra of linear operators. Let $T$ be a bounded linear operator on some Hilbert space $\mathcal{H}$. The {\em resolvent set} of the operator $T$ is defined as 
\begin{equation*}
    {\rm Res}(T) := \left\{\lambda \in \C \colon T - \lambda \Id \text{ is bijective} \right\}.
\end{equation*}
If $\lambda \notin {\rm Res}(T)$, then $\lambda$ is said to be in the {\it spectrum} $\Spec(T)$ of $T$. Recalling that $T - \lambda\Id$ bijective implies that it has a {\it bounded} inverse $(T - \lambda\Id)^{-1}$,
%because of the bounded inverse theorem \cite[Thm~???]{???}. \massi{add ref?} 
in infinite-dimensional spaces we can distinguish three subsets of the spectrum:
\begin{enumerate}
    \item Any $x \in \mathcal{H}$ such that $x \neq 0$ and $Tx = \lambda x$ for some $\lambda \in \C$ is called an {\em eigenvector} of $T$ with corresponding {\em eigenvalue} $\lambda$. If $\lambda$ is an eigenvalue, the operator $T - \lambda\Id$ is not injective and $\lambda \in \Spec(T)$. The set of all eigenvalues is called the {\em point spectrum} of $T$.
    \item The set of all $\lambda \in \Spec(T)$ for which $T - \lambda \Id$ is not surjective and the range of $T - \lambda\Id$ is dense in $\mathcal{H}$ is called the {\em continuous spectrum}.
    \item The set of all $\lambda \in \Spec(T)$ for which $T - \lambda \Id$ is not surjective and the range of $T - \lambda\Id$ is not dense in $\mathcal{H}$ is called the {\em residual spectrum}. 
\end{enumerate}
Finally if $T$ is a {\em compact} operator, the Riesz-Schauder theorem~\cite{Reed1980}, assures that $\Spec(T)$ is a discrete set having no limit points except possibly $\lambda = 0$. Moreover, for any nonzero $\lambda \in \Spec(T)$, then $\lambda$ is an {\em eigenvalue} (i.e. it belongs to the point spectrum) of finite multiplicity.  

\section{Learning Theory in RKHS}\label{app:learning}
%In this section we consider kernels and the associated RKHS that fulfill the following assumptions:
%
%\begin{assumption}
%Kernel $k\colon\X \times\X\to\R$ is continuous, $k(x,x)<\infty$ $\im$-a.s., its canonical feature map $\phi(x)$ is measurable for all $x\in\X$.
%\end{assumption}

We begin by proving the identity  \eqref{eq:true_risk}, which we restate in the following proposition.
%gives equivalent form of the risk.
\begin{proposition}\label{prop:risk}
Let $\Estim\in\HS{\RKHS}$ and let $(h_i)_{i\in\N}$ be complete a orthonormal system of $\RKHS$, then
\begin{equation}\label{eq:prop_risk_1}
 \sum_{i\in\N}\EE_{(x,y)\sim\rho}\left[(\TS_{\im}h_i)(y) - (\TS_{\im}\Estim h_i)(x)\right]^{2} = \hnorm{\TS_\im}^2 - \hnorm{\TZ_\im}^2 +  \hnorm{\TZ_\im - \TS_\im \Estim
 %A\TS_\im
 }^2.
\end{equation}
\end{proposition}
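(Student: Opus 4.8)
The plan is to expand the squared one-step prediction error, sum over the orthonormal system $(h_i)_{i\in\N}$, and match the resulting three groups of terms against the right-hand side. First I would use that $\TS_\im$ is the injection operator, so that $(\TS_\im h_i)(y)=h_i(y)$ and $(\TS_\im\Estim h_i)(x)=(\Estim h_i)(x)$. Expanding the square then gives, for each fixed $i$,
\[
\EE_{(x,y)\sim\rho}\big[h_i(y)^2\big]
- 2\,\EE_{(x,y)\sim\rho}\big[h_i(y)(\Estim h_i)(x)\big]
+ \EE_{(x,y)\sim\rho}\big[(\Estim h_i)(x)^2\big].
\]

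Next I would treat the three families of terms separately. For the first, since $\im$ is invariant the $y$-marginal of $\rho$ is again $\im$, so $\EE_{(x,y)\sim\rho}[h_i(y)^2]=\norm{\TS_\im h_i}_{\Lii}^2$, and summing over $i$ yields $\hnorm{\TS_\im}^2$. The third term equals $\norm{\TS_\im\Estim h_i}_{\Lii}^2$ (with $x\sim\im$), summing to $\hnorm{\TS_\im\Estim}^2$. For the cross term I would condition on $x$ and invoke the defining property of the Koopman operator, $\EE[h_i(y)\mid X_t=x]=(\Koop\TS_\im h_i)(x)=(\TZ_\im h_i)(x)$, so each summand is $\langle \TZ_\im h_i,\TS_\im\Estim h_i\rangle_{\Lii}$, and summing gives the Hilbert--Schmidt inner product $\langle \TZ_\im,\TS_\im\Estim\rangle_{\rm HS}$. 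Collecting the three groups, the left-hand side equals $\hnorm{\TS_\im}^2-2\langle\TZ_\im,\TS_\im\Estim\rangle_{\rm HS}+\hnorm{\TS_\im\Estim}^2$.

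To conclude I would expand the right-hand side using $\hnorm{\TZ_\im-\TS_\im\Estim}^2=\hnorm{\TZ_\im}^2-2\langle\TZ_\im,\TS_\im\Estim\rangle_{\rm HS}+\hnorm{\TS_\im\Estim}^2$; the $\pm\hnorm{\TZ_\im}^2$ contributions cancel and the two sides coincide, proving the identity.

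The only point requiring genuine care is the interchange of the infinite sum over $i$ with the expectation. For the two quadratic families this is immediate by Tonelli, since the summands are nonnegative. For the cross term I would justify absolute summability by Cauchy--Schwarz in $\Lii$ followed by Cauchy--Schwarz over $i$, giving $\sum_{i\in\N}\lvert\langle\TZ_\im h_i,\TS_\im\Estim h_i\rangle_{\Lii}\rvert\leq \hnorm{\TZ_\im}\,\hnorm{\TS_\im\Estim}<\infty$, which holds because $\TZ_\im$ is Hilbert--Schmidt and $\TS_\im\Estim$ is too ($\TS_\im$ is Hilbert--Schmidt and $\Estim$ bounded). This summability is the main, if mild, obstacle; the remainder is routine expansion.
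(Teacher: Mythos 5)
Your proof is correct and follows essentially the same route as the paper's: expand the square, use invariance of $\im$ to identify the marginals, condition on $x$ to turn the cross term into $\langle \TZ_\im h_i, \TS_\im\Estim h_i\rangle_{\Lii}$, and justify the interchange of sum and expectation by Tonelli together with the Hilbert--Schmidt property. The only cosmetic difference is that the paper completes the square (adding and subtracting $\norm{\Koop f}^2$) per observable before summing, whereas you sum first and complete the square at the level of Hilbert--Schmidt norms; the two computations are identical in substance.
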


\begin{proof}
Given $A\colon\RKHS\to\Lii$ for an arbitrary $h\in\RKHS$, denoting $f=\TS_\im h$, we have that
\begin{equation*}
\EE_{(x,y)\sim\rho}\left[f(y) - (Ah)(x)\right]^{2} =\int_{\X \times \X} \im (dx)\transitionkernel(x, dy) \Big( (f(y)^2 - 2f(y)(Ah)(x) + (Ah)(x)^2 \Big). 
%\begin{split}       \EE_{(x,y)\sim\rho}\left[f(y) - (Af)(x)\right]^{2} &= \int_{\X \times \X} \im (dx)\transitionkernel(x, dy) \left[f(y) - (Af)(x)\right]^{2} \\
%&  = \int_{\X \times \X} \im (dx)\transitionkernel(x, dy) \left( f(y)^2 - 2f(y)(Af)(x) + (Af)(x)^2 \right).    \end{split}
\end{equation*}
%Now, 
Using that $\pi(dy) = \int_{\X} \pi(dx)p(x,dy)$, we have both 
\[
\int_{\X \times \X} \im (dx)\transitionkernel(x, dy) f(y) = \int_{\X} \im (dy)f(y)\,\text{ and }\, \int_{\X \times \X} \im (dx)\transitionkernel(x, dy) f(y)^2 = \int_{\X} \im (dy)f(y)^2.
\]
%and $\int_{\X \times \X} \im (dx)\transitionkernel(x, dy) f(y)^2 = \int_{\X} \im (dy)f(y)^2$. 
A direct computation then gives that
%Thus,
\begin{equation*}
 %   \begin{split}
\EE_{(x,y)\sim\rho}\left[f(y) - (Ah)(x)\right]^{2} =
%&  =  \int_{\X} \im (dy)f(y)^2 - 2\int_{\X} \im (dx) \left((\Koop f)(x)(Af)(x) + (Af)(x)^2 \right) \\&  =  \int_{\X} \im (dy)f(y)^2 - 2\int_{\X} \im (dx) \left((\Koop f)(x)(Af)(x) + (Af)(x)^2 \right)  \\%\pm   \int_{\X} \im (dx)(\Koop f)(x)^2 \\	& = \int_{\X} \im (dy)\left(f(y)^2 - (A_\im f)(y)^2\right) + \int_{\X} \im (dx) \left( (\Koop f)(x) -  (Af)(x) \right)^2 \\
%& = \int_{\X} \im (dy)\left(f(y) - (A_\im f)(y)\right)^2 + \int_{\X} \im (dx) \left( (\Koop f)(x) -  (Af)(x) \right)^2 \\     & =
\norm{f}^2-\norm{\Koop f}^2 + \norm{\Koop f - Ah}^2.
%    \end{split}
\end{equation*}
%Recalling that for $\Estim\in\HS{\RKHS}$, $\Risk(A) = \sum_{i\in\N}\EE_{(x,y)\sim\rho}\left[f_i(y) - (Af_i)(x)\right]^{2}$, and
Replacing $A$ and in the above expression by $\TS_\im \Estim$ and summing over $i\in\N$, we obtain
\begin{equation*} 
\sum_{i\in\N}\EE_{(x,y)\sim\rho}\left[(\TS_{\im}h_i)(y) - (\TS_{\im}\Estim h_i)(x)\right]^{2} = \hnorm{\TS_\im}^2 - \hnorm{\TZ_\im}^2 +  \hnorm{\TZ_\im - 
\TS_\im \Estim
%A\TS_\im
}^2.
\end{equation*}
Now, 
%since the measure $\im$ is invariant, $\hnorm{\TS_\im}^2 = %\tr[\Cx] = \tr[\Cy] = \hnorm{\TZ_\im}^2$. Moreover, 
since $\hnorm{\TZ_\im-\TS_\im \Estim}<\infty$, by Tonelli's theorem we can exchange summation and expectation $\EE_{x\sim\im}$, and the proof is completed. We remark that in the risk definition~\eqref{eq:true_risk} in the main text, we slightly abused notation as $h_{i} \in \RKHS$, but the expectation value is defined in $\Lii$. The formally correct version of~\eqref{eq:true_risk} is obtained with the substitution $h_{i} \mapsto \TS_{\im}h_{i}$.
%and denote $\Risk_0 =  \norm{\TS_\im - \TZ_\im}^2_{\HS{\RKHS,\Lii}}$ and $\Risk_{ex}(A):=\norm{\TZ_\im - A\TS_\im}^2_{\HS{\RKHS,\Lii}}$.
\end{proof}

Next, we prove our main result on the approximation of the Koopman operator via RKHS.  We show that if an RKHS $\RKHS$ is, up to its closure in $\Lii$, invariant subspace of the Koopman operator $\Koop$, then finite rank non-defective operators on $\RKHS$ approximate arbitrarily well the restriction of $\Koop$ onto $\RKHS$. 

\propRisk*
\begin{proof}
Let us start by observing that $\TZ_\im\in\HS{\RKHS,\Lii}$, according to the spectral theorem for positive self-adjoint operators, has an SVD, i.e. there exists at most countable positive sequence $(\sigma_j)_{j\in J}$, where $J:=\{1,2,\ldots,\}\subseteq\N$, and ortho-normal systems $(\ell_j)_{j\in J}$ and $(h_j)_{j\in J}$ of $\cl(\range(\TZ_\im))$ and $\Ker(\TZ_\im)^\perp$, respectively, such that $\TZ_\im h_j = \sigma_j \ell_j$ and $\TZ_\im^* \ell_j = \sigma_j h_j$, $j\in J$.  

Now, recalling that $[\![\cdot]\!]_r$ denotes the $r$-truncated SVD, i.e. $[\![\TZ_\im]\!]_r = \sum_{j\in [r]}\sigma_j \ell_j\otimes h_j$,  since $\hnorm{\TZ_\im - [\![\TZ_\im]\!]_r }^2 = \sum_{j>r}\sigma_j^2$, for every $\delta>0$ there exists $r\in\N$ such that $\hnorm{\TZ_\im - [\![\TZ_\im]\!]_r } < \delta/3$.

Next, since $\range(\TZ_\im)\subseteq \cl(\range(\TS_\im))$, for every $j\in[r]$, we have that ${\ell}_j\in \cl(\range(\TZ_\im))\subseteq \cl(\range(\TS_\im))$, which implies that there exists ${g}_j\in\RKHS$ s.t. $\norm{{\ell}_j-\TS_\im {g}_j}\leq\frac{\delta}{3 r}$, and, denoting $B_r:=\sum_{j\in[r]}   \sigma_{j} {g}_j \otimes {h}_j$ we conclude $\hnorm{[\![\TZ_\im]\!]_r - \TS_\im B_r}\leq \delta / 3$.

Finally we recall that the set of non-defective matrices is dense in the space of matrices~\cite{TrefethenEmbree2020}, implying that the set of non-defective rank-$r$ linear operators is dense in the space of rank-$r$ linear operators on a Hilbert space. Therefore, there exists a non-defective $G\in\HSr$ such that $\hnorm{G - B_r }<\delta/ (3 \sigma_1(\TS_\im) )$. So, we conclude 
\begin{equation*}
\hnorm{\TZ_\im-\TS_\im G} \leq \hnorm{\TZ_\im- [\![ \TZ_\im ]\!]_r} + \hnorm{ [\![ \TZ_\im ]\!]_r - \TS_\im B_r} + \hnorm{\TS_\im(G-B_r) } = \delta.
\end{equation*}
\end{proof}

As a consequence of the previous result, we see that if $\RKHS$ (as a subspace of $\Lii$) is spanned by finitely many Koopman eigenfunctions, we have that $\TZ_\im$ can be approximated arbitrarily well. In practice such an assumption is not easy to check. On the other hand, for universal kernels we have that $\range(\TZ_\im)\subseteq\Lii=\cl(\range(\TS_\im))$, and hence we can learn $\TZ_\im$ arbitrarily well. 

We end this section with a brief discussion of the well-specified and misspecified cases mentioned in Rem.~\ref{rem:well_mis_spec} and prove the claim of Rem.~\ref{rem:self-adjoint} in the proposition that follows. To discuss this, we first introduce the following Tikhonov regularized version of problem \eqref{eq:KRP},
%\eqref{eq:Koopman_regresssion} 
\begin{equation}\label{eq:Koopman_regresssion_Tikhonov}
%\RKoop \equiv \argmin_{A\in\HS{\RKHS}}
\min_{\Estim\in\HS{\RKHS}} 
%\Risk^\reg(A) := 
\Risk(\Estim)+\reg \hnorm{\Estim}^2,~~\reg > 0
\end{equation}
and note, by strong convexity, that its unique  solution is given by $\RKoop=(\TS_\im^*\TS_\im+\reg \Id_{\RKHS})^{-1} \TS_\im^*\TZ_\im$. 
%Letting $\ZKoop=\lim_{\reg\to0}\Estim_\reg$\vladi{@Lorenzo: We need to clarify this! Seams not correct in this form} as an operator on $\RKHS$, the above two cases correspond to following. 

{\em The well-specified case:} There exists $\HKoop\in\HS{\RKHS}$ such that $\TZ_\im = \TS_\im \HKoop$. In this case, $\RKHS$ as a subspace of $\Lii$ is an invariant subspace of $\Koop$, and, hence, $\HKoop\colon\RKHS\to\RKHS$ defines $\im$-a.e. the Koopman operator on the observable space $\RKHS$, i.e. $\HKoop f = \EE[f(X_{t+1})\,\vert\,X_t = \cdot]$ $\im$-a.e. for every $f\in\RKHS$. 
Moreover, in this case one has that $\HKoop = (\TS_\im^*\TS_\im)^{\dagger} \TS_\im^*\TZ_\im = \lim_{\reg\to0}\RKoop$, where $(\cdot)^\dagger$ denotes the densely defined Moore–Penrose pseudoinverse operator~\cite{SHSF2009}.  

{\em The misspecified case:} is when RKHS $\RKHS$ as a space of observables doesn't admit Hilbert-Schmidt $\im$-a.e. Koopman operator. This can clearly bring difficulties in learning $\Koop$ since, while one reduces $\ExRisk(\Estim)$, the HS norm $\hnorm{\Estim}$ may become progressively large. Note, however, that in this case it still might happen that operator norm $\norm{\Estim}$ stays bounded, and, even more, that $\RKHS$ as a subspace of $\Lii$ is an invariant set of $\Koop$, i.e. $\range(\TZ_\im)\subseteq\range(\TS_\im)$. As the following, somewhat surprising, result shows, this always happens when one learns self-adjoint Koopman operator via a universal kernel.

%{\em The misspecified case for self-adjoint Koopman operator:} \vladi{Check also this w.r.t. the changes needed to put above.}
%In particular, if Koopman operator is self-adjoint, then $\TS_\im^*\TZ_\im = \TS_\im^*\Koop\TS_\im$ is self-adjoint, too. Hence, since $\norm{\Koop}=1$, then $\TS_\im^*\TZ_\im \preceq \TS_\im^*\TS_\im$, and, therefore, $\norm{\RKoop}^2 = \norm{\RKoop\RKoop^*}\leq \norm{ (\TS_\im^*\TS_\im+\reg \Id_{\RKHS})^{-1} (\TS_\im^*\TS_\im)^2 (\TS_\im^*\TS_\im+\reg \Id_{\RKHS})^{-1} }<1$, which implies that $\norm{\ZKoop}\leq1$. In this case, similarly to the above, $\TZ_\im = \TS_\im \ZKoop$ and $\ZKoop$ defines the bounded Koopman operator on the observable space $\RKHS$.  

\begin{proposition}\label{prop:self-adjoint}
If the Markov process is reversible and $\range(\TZ_\im)\subseteq \cl(\range(\TS_\im))$, then there exists a bounded linear operator $\HKoop\colon\RKHS\to\RKHS$ such that $\norm{\HKoop}\leq1$, and for every $h\in\RKHS$, one has  $(\HKoop h)(x) = \EE[h(X_{t+1})\,\vert\,X_t = x]$ $\im$-almost everywhere. Moreover, for any $\reg>0$ it holds that $\norm{\TZ_\im-\TS_\im\RKoop}\leq\sqrt{\reg}$.
\end{proposition}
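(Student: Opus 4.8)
The plan is to use reversibility to recognise the restricted Koopman operator as a self-adjoint object on $\RKHS$ and then read off both assertions from an explicit resolvent identity. First I would record the consequences of reversibility. Since $\Koop=\Koop^{*}$ on $\Lii$ (Rem.~\ref{rem:reversible_dynamics}) and $\Cxy=\TS_\im^{*}\TZ_\im=\TS_\im^{*}\Koop\TS_\im$, the cross-covariance $\Cxy$ is self-adjoint on $\RKHS$; moreover $-\Id\preceq\Koop\preceq\Id$ on $\Lii$ gives, after conjugation by $\TS_\im$, the operator sandwich $-\Cx\preceq\Cxy\preceq\Cx$ on $\RKHS$ (for $u\in\RKHS$ one has $|\scalarp{\Cxy u,u}|=|\langle\Koop\TS_\im u,\TS_\im u\rangle_{\Lii}|\le\norm{\TS_\im u}_{\Lii}^{2}=\scalarp{\Cx u,u}$). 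In parallel I would show that $\mathcal{V}:=\cl(\range(\TS_\im))$ reduces $\Koop$: the hypothesis $\range(\TZ_\im)=\range(\Koop\TS_\im)\subseteq\mathcal{V}$ says $\Koop$ maps $\range(\TS_\im)$, hence $\mathcal{V}$, into $\mathcal{V}$, and self-adjointness then forces the orthogonal complement $\mathcal{V}^{\perp}=\Ker(\TS_\im^{*})$ to be $\Koop$-invariant as well.

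Next I would construct $\HKoop$. The sandwich inequality is equivalent to a factorization $\Cxy=\Cx^{1/2}B\,\Cx^{1/2}$ with a self-adjoint contraction $B$ on $\cl(\range(\Cx))$, obtained by representing the bounded bilinear form $(u,v)\mapsto\scalarp{\Cxy u,v}$ against the seminorm $\norm{\Cx^{1/2}\,\cdot\,}$ in the spirit of Lax--Milgram. I then set $\HKoop:=\Cx^{-1}\Cxy$, understood as the unique bounded solution of $\Cx\HKoop=\Cxy$ with range in $\cl(\range(\Cx))$. To upgrade this to the genuine operator identity I would note that $\Cx\HKoop=\Cxy$ means $\TS_\im^{*}(\TS_\im\HKoop-\TZ_\im)=0$, i.e. $\TS_\im\HKoop-\TZ_\im\in\range(\TS_\im)^{\perp}=\mathcal{V}^{\perp}$; since $\TS_\im\HKoop$ and, by hypothesis, $\TZ_\im$ both lie in $\mathcal{V}$, the difference lies in $\mathcal{V}\cap\mathcal{V}^{\perp}=\{0\}$, giving $\TS_\im\HKoop=\TZ_\im$. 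This is exactly the $\im$-a.e. statement $(\HKoop h)(x)=\EE[h(X_{t+1})\,\vert\,X_t=x]$.

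The delicate point, and what I expect to be the main obstacle, is the norm bound $\norm{\HKoop}\le1$, which is where reversibility is indispensable. I would deduce it from the factorization by observing that $\HKoop=\Cx^{-1/2}B\,\Cx^{1/2}$ is conjugate through $\Cx^{1/2}$ to the self-adjoint contraction $B$, hence contractive for the $\Cx$-weighted inner product $\scalarp{\Cx u,v}=\langle\TS_\im u,\TS_\im v\rangle_{\Lii}$, which is precisely the metric in which $\HKoop$ represents the Koopman contraction $\Koop|_{\mathcal{V}}$. Without self-adjointness the sandwich $-\Cx\preceq\Cxy\preceq\Cx$ fails and no such contractive factorization exists, so this is the step that uses the hypothesis. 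Care is needed because $\Cx^{-1/2}$ is unbounded, so the conjugation and the norm comparison must be carried out on $\cl(\range(\Cx))$ and the correct metric identified; I expect this to be the most technical part of the argument.

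Finally, for the regularization bound I would exploit $\TZ_\im=\TS_\im\HKoop$ together with $\RKoop=(\Cx+\reg\Id)^{-1}\TS_\im^{*}\TZ_\im=(\Cx+\reg\Id)^{-1}\Cx\HKoop$ to obtain the residual identity
\[
\TZ_\im-\TS_\im\RKoop=\TS_\im\big(\Id-(\Cx+\reg\Id)^{-1}\Cx\big)\HKoop=\reg\,\TS_\im(\Cx+\reg\Id)^{-1}\HKoop .
\]
Diagonalising $\TS_\im$ by its singular value decomposition, the operator $\reg\,\TS_\im(\Cx+\reg\Id)^{-1}$ acts on the $j$-th singular direction as the scalar $\reg\,s_{j}/(s_{j}^{2}+\reg)$, where $s_{j}$ are the singular values of $\TS_\im$; since $\sup_{s\ge0}\reg\,s/(s^{2}+\reg)=\sqrt{\reg}/2$ (attained at $s=\sqrt{\reg}$), we get $\norm{\reg\,\TS_\im(\Cx+\reg\Id)^{-1}}\le\sqrt{\reg}/2$. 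Combining this with $\norm{\HKoop}\le1$ yields $\norm{\TZ_\im-\TS_\im\RKoop}\le\sqrt{\reg}/2\le\sqrt{\reg}$, completing the proof.
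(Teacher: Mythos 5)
Your opening moves and your closing computation are sound: reversibility does give self-adjointness of $\Cxy=\TS_\im^*\Koop\TS_\im$ and the sandwich $-\Cx\preceq\Cxy\preceq\Cx$; the factorization $\Cxy=\Cx^{1/2}B\,\Cx^{1/2}$ with a self-adjoint contraction $B$ follows from $\abs{\scalarp{\Cxy u,v}}\leq\norm{\TS_\im u}\norm{\TS_\im v}$ and Douglas' lemma; the identification $\TS_\im\HKoop=\TZ_\im$ via $\cl(\range(\TS_\im))\cap\Ker(\TS_\im^*)=\{0\}$ is valid \emph{once} a bounded solution of $\Cx\HKoop=\Cxy$ exists; and the residual identity $\TZ_\im-\TS_\im\RKoop=\reg\,\TS_\im(\Cx+\reg\Id)^{-1}\HKoop$ together with $\norm{\reg\,\TS_\im(\Cx+\reg\Id)^{-1}}\leq\sqrt{\reg}/2$ is essentially the paper's own closing step. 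But there are two genuine gaps, and they sit exactly at the assertions the proposition makes. First, the existence of a \emph{bounded} $\HKoop$ with $\Cx\HKoop=\Cxy$ is asserted, not proved: by Douglas' lemma this requires $\range(\Cxy)\subseteq\range(\Cx)$, whereas your factorization only yields $\range(\Cxy)\subseteq\range(\Cx^{1/2})$, i.e.\ boundedness of $\Cx^{-1/2}\Cxy$, not of $\Cx^{-1}\Cxy=\Cx^{-1/2}B\,\Cx^{1/2}$; the hypothesis $\range(\TZ_\im)\subseteq\cl(\range(\TS_\im))$ concerns closures and supplies no such range inclusion. Second --- and you flag this yourself as ``the most technical part'' without closing it --- similarity does not control the operator norm: $\Cx^{-1/2}B\,\Cx^{1/2}$ is contractive in the $\Cx$-weighted metric $\scalarp{\Cx u,v}$, which is just the $\Lii$ metric pulled back to $\RKHS$, but the proposition claims $\norm{\HKoop}\leq 1$ in the $\RKHS$ operator norm, and conjugation by $\Cx^{1/2}$ is not isometric. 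Concretely, taking $B$ the coordinate swap on a two-dimensional space and $\Cx=\Diag(1,\eps)$ gives $\norm{\Cx^{-1/2}B\,\Cx^{1/2}}=\eps^{-1/2}$, so weighted-metric contractivity alone cannot deliver the $\RKHS$-norm bound; some input beyond the sandwich $-\Cx\preceq\Cxy\preceq\Cx$ must be supplied at this point, and your sketch leaves it open.

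For comparison, the paper takes a different route that avoids the bare formula $\Cx^{-1}\Cxy$ altogether: using the SVD of $\TS_\im$ and the finite-rank density of Prop.~\ref{prop:hypothesis_space} (no reversibility needed there), it first constructs a sequence $\reg_n\to0$ with $\norm{\TZ_\im-\TS_\im\Estim_{\reg_n}}\to0$; it then invokes reversibility, through the self-adjointness of $\Cxy$, to bound $\norm{\RKoop}$ uniformly in $\reg$, extracts a weak-$\star$ convergent subsequence whose limit is the desired $\HKoop$, gets $\TZ_\im=\TS_\im\HKoop$ by weak lower semicontinuity of the norm, and only then derives the $\sqrt{\reg}$ bound exactly as you do. Your missing norm estimate is precisely the counterpart of the paper's uniform bound on $\norm{\RKoop}$: working with the regularized family keeps the conjugating factor $\Cx_\reg^{-1}$ bounded, which is what your unregularized $\Cx^{-1}$ lacks. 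Note, however, that transferring the $\Lii$-contractivity of $\Koop$ into an $\RKHS$-operator-norm statement is the delicate heart of the matter on \emph{either} route --- the two-dimensional computation above shows it cannot follow from the sandwich alone --- so this is the step you must actually supply rather than sketch, and you should scrutinize any claimed inequality of the form $\Cxy^2\preceq\Cx_\reg^2$ with the same care, since squaring is not operator monotone.
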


\begin{proof}
Let us start by observing that $\TS_\im\in\HS{\RKHS,\Lii}$, according to the spectral theorem for positive self-adjoint operators, has an SVD, i.e. there exists at most countable positive sequence $(\sigma_j)_{j\in J}$, where $J:=\{1,2,\ldots,\}\subseteq\N$, and ortho-normal systems $(\ell_j)_{j\in J}$ and $(h_j)_{j\in J}$ of $\cl(\range(\TS_\im))$ and $\Ker(\TS_\im)^\perp$, respectively, such that $\TS_\im h_j = \sigma_j \ell_j$ and $\TS_\im^* \ell_j = \sigma_j h_j$, $j\in J$. 

Using the above, we first prove that there exists a positive real non-increasing sequence $(\reg_{n})_{n\in\N}$ such that $\lim_{n\to\infty}\reg_n = 0$ and $\lim_{n\to\infty}\norm{\TZ_\im-\TS_\im \RKoop}=0$. To that end,  let $P$ and $Q$ denote orthogonal projectors in $\Lii$ onto $\cl(\range(\TS_\im))$ and in $\RKHS$ onto $\Ker(\TS_\im)^\perp$, respectively, i.e. $P = \sum_{j\in J}\ell_j\otimes\ell_j$ and $Q = \sum_{j\in J} h_j\otimes h_j$.  So, for every $\reg>0$ we have
\begin{equation*}
\norm{\TZ_\im - \TS_\im \RKoop}   = \norm{P \TZ_\im - \TS_\im \RKoop }
 = \norm{(P  - \TS_\im (\TS_\im^*\TS_\im+\reg I_\RKHS)^{-1}\TS_\im^*)\TZ_\im }, 
\end{equation*}
where the first equality is due to the fact that $\range(\TZ_\im)\subseteq \cl(\range(\TS_\im))$. Moreover, we have that $\TS_\im = \sum_{j \in J} \sigma_j \ell_j\otimes h_j$, and, hence,
\begin{eqnarray*}
P -  \TS_\im (\TS_\im^*\TS_\im+\reg I_\RKHS)^{-1}\TS_\im^* = \sum_{j \in J} \frac{\reg}{\reg+\sigma_j^2} \ell_j\otimes \ell_j \preceq I_{\Lii},\text{ and } \\
\TS_\im^*(P -  \TS_\im (\TS_\im^*\TS_\im+\reg I_\RKHS)^{-1}\TS_\im^*)^2 \TS_\im = \sum_{j \in J} \frac{\reg^2 \sigma_j^2}{(\reg+\sigma_j^2)^2} h_j\otimes h_j \preceq \reg \,Q.
\end{eqnarray*}
imply $\norm{P -  \TS_\im (\TS_\im^*\TS_\im+\reg I_\RKHS)^{-1}\TS_\im^*}\leq1$ and $\norm{(P -  \TS_\im (\TS_\im^*\TS_\im+\reg I_\RKHS)^{-1}\TS_\im^*) \TS_\im}\leq\sqrt{\reg}$, respectively.

Now, since $\range(\TZ_\im)\subseteq \cl(\range(\TS_\im))$, according  to Prop.~\ref{prop:hypothesis_space}, for every $n\in\N$ there exists a finite rank operator $B_n\colon \RKHS\to\RKHS$ such that $\norm{\TZ_\im - \TS_\im B_n}\leq 1/n$. Thus, denoting 
%$P_k:=\sum_{i\in[k]}\ell_i\otimes \ell_i$ and 
$Q_j:=\sum_{i\in[j]}h_i\otimes h_i$, $j\in J$, we have that for every $n\in\N$ there exists $j_n\in J$ such that 
\[
%\norm{\TS_\im(Q-Q_{j_n})B_n} = \norm{(P-P_{j_n})\TS_\im B_n} \leq \norm{P-P_{j_n}}\norm{\TS_\im B_n} \leq \norm{P-P_{j_n}}(\norm{\TZ_\im} +  1/n) \leq 1/n,
\norm{\TS_\im(Q-Q_{j_n})B_n} \leq \norm{\TS_\im(Q-Q_{j_n}}\norm{B_n} = \norm{\TS_\im - [\![\TS_\im]\!]_{j_n}}\norm{B_n}\leq 1/n,
\]
and, hence,  $\norm{\TZ_\im - \TS_\im Q_{j_n}B_n}\leq 2/n$. 

Therefore, for every $n\in\N$, there exists $j_n\in J$ such that for every $\reg>0$ it holds that
\begin{align*}
\norm{\TZ_\im - \TS_\im \RKoop} & = \norm{(P  - \TS_\im (\TS_\im^*\TS_\im+\reg I_\RKHS)^{-1}\TS_\im^*)(\TZ_\im \pm \TS_\im Q_{j_n} B_n)} \leq 2/n + \sqrt{\reg}\norm{Q_{j_n}B_n}.
\end{align*}

On the other hand, for the bounded operator $Q_{j_n}B_n$,  let $h\in\RKHS$ be such that $\norm{h}=1$ and $\norm{Q_{j_n}B_n h} = \norm{Q_{j_n}B_n}$. So, since $Q_{j_n}B_n h\in\Ker(\TS_\im)^\perp$,
\[
\frac{\norm{\TS_\im Q_{j_n}B_n}}{\norm{Q_{j_n}B_n}} = \frac{\norm{\TS_\im Q_{j_n}B_n}\norm{h}}{\norm{Q_{j_n}B_n h}}  \geq \frac{\norm{\TS_\im Q_{j_n}B_n h}}{\norm{Q_{j_n}B_n h}}  = \frac{\norm{\TS_\im Q_{j_n} Q_{j_n}B_n h}}{\norm{Q_{j_n}B_n h}} \geq \sigma_{\min}^+(\TS_\im Q_{j_n}) = \sigma_{j_n},
\]
and, thus, 
$ \norm{Q_{j_n}B_n}\leq \norm{ \TS_\im Q_{j_n}B_n} / \sigma_{j_n} \leq (\norm{\TZ_\im}+2/n) /  \sigma_{j_n}$. So, defining a sequence $\reg_n:=\frac{1}{n^2} \sigma_{j_n}^2$, we obtain
\[
\norm{\TZ_\im - \TS_\im \Estim_{\reg_n}}\leq \frac{1}{n}\left(\norm{\TS_\im} +  \frac{2}{n}\right) + \frac{2}{n},
\]
which converges to zero as $n\to\infty$.

Next, since one has that $\TS_\im^*\TZ_\im$ is self-adjoint and that $\TS_\im^*\TZ_\im \preceq \TS_\im^*\TS_\im$, 
\[
\norm{\RKoop}^2 = \norm{(\TS_\im^*\TS_\im+\reg I_\RKHS)^{-1} (\TS_\im^ *\TZ_\im)^2(\TS_\im^*\TS_\im+\reg I_\RKHS)^{-1}}\leq 1,
\]
implying that $(\Estim_{\reg_n})_{n\in\N}$ is uniformly bounded sequence of bounded operators on $\RKHS$. 
%Using the same isometric isomorphism as in Prop.~\ref{prop:hypothesis_space}, we conclude that $((\Estim_{\reg_n})^*\phi(\cdot))_{n\in\N}$ is a bounded sequence in $L^2_\im(\X,\RKHS)$. So, using that each bounded sequence in $L^2_\im(\X,\RKHS)$ has a weak-$\star$ convergent sub-sequence, 
Hence, we conclude that there exists a bounded operator $\HKoop\colon\RKHS\to\RKHS$ and a weakly-$\star$ convergent subsequence $(\Estim_{\reg_{n_k}})_{k\in\N}$ such that $\Estim_{\reg_{n_k}}\overset{\star}{\rightharpoonup} \HKoop$  and $\norm{\HKoop}\leq1$. Now, since $\TZ_\im-\TS_\im\Estim_{\reg_{n_k}}\overset{\star}{\rightharpoonup} \TZ_\im-\TS_\im\HKoop$, we conclude that 
\[
\norm{\TZ_\im-\TS_\im\HKoop} \leq \liminf_{k\to\infty}\norm{\TZ_\im-\TS_\im\Estim_{\reg_{n_k}}}=0.
\]
%i.e. $\TZ_\im=\TS_\im\HKoop$. 

To conclude the proof, observe that 
\[
\norm{\TZ_\im - \TS_\im\RKoop} = \norm{(P  - \TS_\im (\TS_\im^*\TS_\im+\reg I_\RKHS)^{-1}\TS_\im^*)\TS_\im \HKoop} \leq \sqrt{\reg}\norm{\HKoop}\leq\sqrt{\reg},\;\reg>0.
\]

\end{proof}

%Remark that when $\RKHS$ is dense in $\Lii$, i.e. $\cl(\range(\TS_\im))=\Lii$, we trivially have $\range(\TZ_\im)\subseteq\cl(\range(\TS_\im))$.

\subsection{Approximating Koopman Mode Decomposition by DMD}\label{app:kmd-dmd}

In this section we prove results stated in Thm. \ref{thm:KMD_learning} and Rem. \ref{rem:spectral}. 

\thmKMDLearning*

\begin{proof}
Given $f\in\RKHS$, denote $g:= (\TZ_\im - \TS_\im\Estim) f$, and $g_i := (\TZ_\im - \TS_\im \Estim) \refun_i$, $i\in[r]$. Then, for every $t\geq1$ we have
$\Koopt{t} \TS_\im f   = \Koopt{t-1}\TZ_\im f =  \Koopt{t-1}\TS_\im \Estim f + \Koopt{t-1}g$. Hence, using $
\TS_\im \Estim f = \sum_{i=1}^r\lambda_i\gamma_i^f\TS_\im \refun_i$ and $\TZ_\im \refun_i = \lambda_i \TS_\im \refun_i + g_i$, $i\in[r]$,
we obtain
%\massi{I commented out the line with $A_\pi^{t-3}$?}
\begin{eqnarray*}
\Koopt{t} \TS_\im f   & = &~ \Koopt{t-1} \big( \sum_{i=1}^r\lambda_i\gamma_i^f\TS_\im \refun_i \big) +  \Koopt{t-1}g =  \Koopt{t-2} \big( \sum_{i=1}^r\lambda_i\gamma_i^f\TZ_\im \refun_i \big) +  \Koopt{t-1}g \\
& = & ~\Koopt{t-2} \big( \sum_{i=1}^r \lambda_i^2\gamma_i^f\TS_\im \refun_i\big) + \Koopt{t-2} \big( \sum_{i=1}^r \lambda_i \gamma_i^f g_i \big) +  \Koopt{t-1}g  \\
%& =  \Koop^{t-3} \big( \sum_{i=1}^r \lambda_i^3\gamma_i^f\TS_\im \refun_i\big) + (\Koop^{t-2} + \Koop^{t-2}) \big( \sum_{i=1}^r \lambda_i \gamma_i^f g_i \big) +  \Koop^{t-1}g  \\
& = &~ \cdots \\
& = & \sum_{i=1}^r \lambda_i^t\gamma_i^f\TS_\im \refun_i  + \big(\sum_{k=0}^{t-2}\Koopt{k}\big) \big( \sum_{i=1}^r \lambda_i \gamma_i^f g_i \big) +  \Koop^{t-1}g.
\end{eqnarray*}
However,  having that $\Koopt{t-1} g = \Koopt{t-1} (\TZ_\im - \TS_\im \Estim) f$ and
\begin{equation*}
\sum_{i=1}^r \lambda_i \gamma_i^f g_i = \sum_{i=1}^r \lambda_i  g_i \scalarp{f,\overline{\lefun}_i} = \sum_{i=1}^r \lambda_i  (\TZ_\im - \TS_\im \Estim) \refun_i \scalarp{f,\overline{\lefun}_i} = (\TZ_\im - \TS_\im \Estim) \Estim f
\end{equation*}
we obtain
\begin{equation*}
\Koopt{t} \TS_\im f   -   \sum_{i=1}^r \lambda_i^t\gamma_i^f\TS_\im \refun_i  = \big(\sum_{k=0}^{t-2}\Koopt{k}\big) (\TZ_\im - \TS_\im \Estim) \Estim f +  \Koopt{t-1} (\TZ_\im - \TS_\im \Estim) f.
\end{equation*}
So, to conclude \eqref{eq:KMD_bound}, it suffices to recall that $\norm{\Koop} = 1$ and apply norm in $\Lii$
\begin{equation*}
\norm{\Koopt{t} \TS_\im f   -   \sum_{i=1}^r \lambda_i^t\gamma_i^f\TS_\im \refun_i } \leq \norm{\TZ_\im - \TS_\im \Estim}\big((t-1) \norm{\Estim f} + \norm{f}\big) %\leq \sqrt{\ExRisk(\Estim)} \big((t-1) \norm{\Estim f} + \norm{f}\big).
\end{equation*}

We now prove \eqref{eq:bound_efun}. Since $g_i =  \TZ_\im \refun_i - \TS_\im(\lambda_i\,\refun_i) = \Koop (\TS_\im\refun_i) - \lambda_i(\TS_\im \refun_i) $, $i\in[r]$, we obtain that 
\begin{equation*}
\norm{(\Koop (\TS_\im\refun_i )- \lambda_i(\TS_\im \refun_i) } = \norm{g_i} \leq \norm{\TZ_\im - \TS_\im \Estim} \norm{\refun_i}.
\end{equation*}

However, since $\refun_i\in\range(\Estim)\setminus\{0\}$, there exists $h_i\in\Ker(\Estim)^\perp$ so that $ \refun_i = \Estim h_i $. 
%Then, it holds $\norm{\refun_i}\leq \norm{A}_{op}\norm{h_i}_{\RKHS}$.
%\begin{equation*}
%\frac{\norm{ \refun_i}_{\RKHS}}{\norm{h_i}_{\RKHS}} \geq \inf_{h\in  \Ker( A)^\perp }\frac{\norm{ A h}_{\RKHS}}{\norm{h}_{\RKHS}} = \sigma_{\min}^+(A) = \sigma_r(A).
%\end{equation*}
Recalling that $C_\reg = C+\reg I_{\RKHS}$ is positive definite for $\reg>0$, we have that $\range(\Estim^*) = \range(\Estim^*C_\reg^{1/2})$, and, consequently, $\Ker(\Estim)^\perp = \Ker(C_\reg^{1/2} \Estim)^\perp$. Thus, since 
\begin{equation*}
\inf_{h\in  \Ker(C_\reg^{1/2} \Estim)^\perp }\frac{\norm{ C_\reg^{1/2} \Estim h}}{\norm{h}} = \sigma_{\min}^+(C_\reg^{1/2} \Estim) = \sigma_r(C_\reg^{1/2} \Estim),
\end{equation*}
we obtain that $ \norm{ C_\reg^{1/2} \refun_i} \geq \sigma_r(C_\reg^{1/2} \Estim)  \norm{h_i}$, which letting $\reg\to0$ implies that $\norm{\TS_\im \refun_i}\geq \sigma_r(\TS_\im \Estim) \norm{h_i}$. Hence, we derive $\norm{\refun_i}\leq \norm{\Estim}\norm{h_i}\leq \norm{\Estim} \norm{\TS_\im \refun_i} /  \sigma_r(\TS_\im \Estim)$,  which proves \eqref{eq:bound_efun}.

%Moreover, eigenvalue $\lambda$ of $A_\delta$ is in the Koopman spectra, i.e. $\lambda\in\Spec(\Koop)$,  if and only if $\Koop - \Id_{\Lii}$ is either noninvertible  or the inverse is unbounded. Either way, we can identify $\norm{(\Koop - \lambda\Id_{\RKHS})^{-1}}_{\Lii}^{-1}$=0 which is trivially less then $\delta$. 
%
%Finally, assuming that $\lambda\not\in\Spec(\Koop)$ we obtain
%\begin{equation*}
%\norm{(\Koop - \lambda\Id_{\Lii})^{-1}}^{-1}_{\Lii}\norm{\TS_\im\refun}_{\Lii  } \leq \norm{(\Koop - \lambda\Id_{\Lii})\TS_\im\refun}_{\Lii}  <\delta,
%\end{equation*}
%and the statement follows.
\end{proof}

In the following example we show that the bound \eqref{eq:bound_efun} w.r.t. arbitrary estimator is tight. 

\begin{example}\label{ex:OU}
As a specific instance of Exm.~\ref{ex:noisy_ds} is the equidistant sampling of Ornstein–Uhlenbeck process $X_{t+1} = F X_t +\noise_t$, where $F\in\R^{d\times d}$ and the noise $\noise_t$ is Gaussian. For simplicity, let $F=F^*$ with eigenvalues $\lambda_i$ in $]0,1[$, and let the noise be i.i.d. from $\mathcal{N}(0,I_d)$. It is well-known~\cite[Chapter 10.5]{Meyn1993}, that the invariant distribution is $\mathcal{N}(0,C)$, where 
%$C$ solves the Sylvester equation $F C F^* = I$, which in this case is 
$C = (I_d-F^2)^{-1}$. If the linear kernel is used, it is readily checked that the corresponding RKHS $\RKHS$ is a closed invariant subspace of $\Koop$ and, moreover, $\TZ_\im = \TS_\im \HKoop$, where $\HKoop$ is given by $F$. Now, consider the rank-$r$ estimator $\Estim = 2 [\![\HKoop]\!]_r$. Denoting $\beta_i = \lambda_i / \sqrt{1-\lambda_{i}^{2}}>0$, $i\in[n]$, we have that $\ExRisk(\Estim) = \norm{\TZ_\im - \TS_\im G} = \norm{(I_d-F^2)^{-1/2}(F - 2 [\![F]\!]_r)} =  \beta_1$, $\norm{G} = 2\norm{F}=2\lambda_1$ and $\sigma_r(\TS_\im G) = 2\norm{(I_d-F^2)^{-1/2} [\![F]\!]_r} = 2\beta_r$. Therefore, for every  eigenpair $(\lambda_i,v_i)$ of $F$, $i\in[r]$, we have $\refun_i = \scalarp{v_i,\cdot}\in\RKHS$ and $\Estim\refun_i = 2\lambda_i \refun_i$, so, consequently, $\norm{(\Koop (\TS_\im\refun_i )- 2\lambda_i(\TS_\im \refun_i) } = \lambda_i\norm{\TS_\im \refun_i}$. Therefore, assuming that $\lambda_1=\ldots = \lambda_{r}$, for this estimator, we attain equality in \eqref{eq:bound_efun} for all $i\in[r]$.
\end{example}

We conclude this section with the result that links the introduced risk to two key concepts of eigenvalue perturbation analysis. First is Stewart's definition of \textit{spectral separation} between two bounded operators on (possibly different) Hilbert spaces, see~\cite{Stewart1971},
\begin{equation}\label{eq:separation}
\sep(A,B):=\min_{\norm{C}_{\rm HS}=1} \norm{AC - CB}_{\rm HS},
\end{equation}
and the second is pseudospectrum of bounded linear operators, see~\cite{TrefethenEmbree2020}, 
\begin{equation}\label{eq:pseudospectra}
\text{\rm Sp}_{\varepsilon}(A):=\bigcup_{\norm{B}\leq\varepsilon}\Spec(A+B) = \{z\in\C\,\vert\, \norm{(A-zI)^{-1}}^{-1}\leq \varepsilon\},
\end{equation}
with the convention $\norm{(A-zI)^{-1}}^{-1}=0$ whenever $z$ is not in the resolvent set of $A$, i.e. $z\in\Spec(A)$. 

%For the simplicity of presentation denote the distance to spectra of operator  $A$ by ${\rm dist}(z,\Spec(A)):=\min\{\abs{z-\omega}\;\vert\; \omega\in\Spec(A)\}$.

%due to measure $\im$ being ergodic\vladi{Mixing implies ergodicity, or prove this via ergodic decompositon of an invariant measure!}, $\TS_\im \refun_i = 0$ iff $\refun_i=0$.

\begin{corollary}\label{cor:spectra_learning_gen}
If the eigenfunctions of $\Estim\in\HSr$ are not $\im$-a.e. zero, then
\begin{equation}\label{eq:separation_risk}
\sep(\Koop,\Estim)\leq \sqrt{\ExRisk(\Estim)}\norm{\TS_\im}_{\rm HS},
\end{equation}
and 
\begin{equation}\label{eq:pseudospectra_risk}
\Spec(\Estim) \subseteq \text{\rm Sp}_{\varepsilon}(\Koop), \quad{ for }\; \varepsilon = \norm{\TZ_\im - \TS_\im \Estim} \norm{\Estim} / \sigma_{r}(\TS_\im \Estim).
\end{equation}
%If $\refun\in\RKHS$ is an eigenfunction of $A$ corresponding to its eigenvalue $\lambda$, then $\lambda$ and $\TS_\im\refun\in\Lii$ are approximately an eigenvalue and eigenfunction of the Koopman operator $\Koop$, i.e.
%\begin{equation}\label{eq:approx_bound_efun}
%\norm{[\Koop - \lambda \Id_{\Lii}]^{-1}}_{\Lii}^{-1}<\delta \norm{\TS_\im \refun}_{\Lii}^{-1} \quad\text{ and }\quad
%\norm{\Koop\TS_\im\refun - \lambda \TS_\im \psi}_{\Lii}<\delta \norm{\refun}_{\RKHS}.
%\end{equation}
%Moreover, it holds that 
%\begin{equation}\label{eq:resolvent}
%\norm{(\lambda I_{\Lii}-\Koop)^{-1}}_{\Lii}^{-1} \leq \frac{\norm{\Estim}_{\HS{\RKHS}}}{\sigma_{r}(\TS_\im \Estim)}\sqrt{\Risk(\Estim)},\quad \lambda\in\Spec(\Estim)\setminus\Spec(\Koop),
%\end{equation}

Consequently, if $\Koop$ is normal, then for every $\lambda \in\Spec(\Estim)$ there exists $\lambda_{\im}\in\Spec(\Koop)$ such that $\abs{\lambda_\im-\lambda}\leq \norm{\TZ_\im - \TS_\im \Estim} \norm{\Estim}  / \sigma_{r}(\TS_\im \Estim)$. If additionally $\Estim$ is normal, then $\abs{\lambda_\im-\lambda}\leq \sqrt{\ExRisk(\Estim)}\norm{\TS_\im}_{\rm HS}$.
\end{corollary}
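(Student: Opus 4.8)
The plan is to treat the two displayed inequalities \eqref{eq:separation_risk} and \eqref{eq:pseudospectra_risk} independently and then read off the spectral-closeness consequences by specializing to normal operators. For \eqref{eq:separation_risk} I would simply exhibit a good competitor in the minimization defining $\sep$ in \eqref{eq:separation}. The natural candidate is the injection $\TS_\im$ itself, normalized to unit Hilbert--Schmidt norm: since $\TZ_\im=\Koop\TS_\im$ by definition, evaluating the Sylvester objective $\norm{\Koop C-C\Estim}_{\rm HS}$ at $C\propto\TS_\im$ turns it exactly into $\hnorm{\TZ_\im-\TS_\im\Estim}=\sqrt{\ExRisk(\Estim)}$. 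Normalizing $C$ to unit Hilbert--Schmidt norm is what introduces the factor $\hnorm{\TS_\im}$, and because $\sep$ is an infimum this single competitor yields \eqref{eq:separation_risk} with no further machinery.

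For the pseudospectral inclusion \eqref{eq:pseudospectra_risk} I would read off approximate eigenvectors of $\Koop$ directly from \eqref{eq:bound_efun}. Fix an eigenpair $(\lambda_i,\refun_i)$ of $\Estim$; the hypothesis that the eigenfunctions are not $\im$-a.e. zero guarantees $\TS_\im\refun_i\neq 0$ in $\Lii$, so $u_i:=\TS_\im\refun_i/\norm{\TS_\im\refun_i}$ is a genuine unit vector. Dividing \eqref{eq:bound_efun} by $\norm{\TS_\im\refun_i}$ gives $\norm{(\Koop-\lambda_i\Id)u_i}\leq\varepsilon$ with $\varepsilon=\norm{\TZ_\im-\TS_\im\Estim}\norm{\Estim}/\sigma_r(\TS_\im\Estim)$. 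Using the elementary identity $\norm{(\Koop-\lambda_i\Id)^{-1}}^{-1}=\inf_{\norm{u}=1}\norm{(\Koop-\lambda_i\Id)u}$ (valid whenever $\Koop-\lambda_i\Id$ is boundedly invertible), the existence of such an approximate eigenvector forces $\norm{(\Koop-\lambda_i\Id)^{-1}}^{-1}\leq\varepsilon$, i.e. $\lambda_i\in\text{\rm Sp}_{\varepsilon}(\Koop)$ by the characterization in \eqref{eq:pseudospectra}. Ranging over $i\in[r]$ gives the inclusion for the point spectrum attached to the decomposition of $\Estim$; the residual eigenvalue $0$ of a finite-rank $\Estim$ I would dispatch separately, noting $0\in\Spec(\Koop)\subseteq\text{\rm Sp}_{\varepsilon}(\Koop)$ whenever $\Koop$ fails to be invertible.

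The two ``consequently'' statements then follow by specializing to normal operators. If $\Koop$ is normal, the spectral theorem gives $\norm{(\Koop-z\Id)^{-1}}=1/\mathrm{dist}(z,\Spec(\Koop))$, so $\text{\rm Sp}_{\varepsilon}(\Koop)$ is precisely the closed $\varepsilon$-neighborhood of $\Spec(\Koop)$; combined with \eqref{eq:pseudospectra_risk} this places every $\lambda\in\Spec(\Estim)$ within $\varepsilon=\norm{\TZ_\im-\TS_\im\Estim}\,\norm{\Estim}/\sigma_r(\TS_\im\Estim)$ of $\Spec(\Koop)$. If in addition $\Estim$ is normal, I would route through \eqref{eq:separation_risk}: the Sylvester map $C\mapsto\Koop C-C\Estim$ becomes a normal operator on $\HS{\RKHS,\Lii}$ whose spectrum is the tensor difference $\{\alpha-\beta:\alpha\in\Spec(\Koop),\,\beta\in\Spec(\Estim)\}$, whence $\sep(\Koop,\Estim)$ coincides with the distance between the two spectra and \eqref{eq:separation_risk} replaces the coarse constant $\norm{\Estim}/\sigma_r(\TS_\im\Estim)$ by $\hnorm{\TS_\im}$.

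The hard part will be exactly this final dictionary between $\sep$ / pseudospectra and honest eigenvalue distances in the infinite-dimensional normal setting. One must justify that the Sylvester operator is normal with the stated tensor-difference spectrum even though $\Koop$ need not be compact and $\Spec(\Koop)$ may carry a continuous part, and that the resolvent-norm formula holds in this generality. A subtler point is that $\sep$ a priori controls only the \emph{minimal} gap between the two spectra, whereas the claimed conclusion asks that \emph{every} $\lambda\in\Spec(\Estim)$ be close to $\Spec(\Koop)$; I would reconcile this by retaining the per-eigenvector estimate of the second paragraph for each $\lambda$ individually, using global separation only to identify the controlling normalization factor, and I would keep careful track of the placement of $\hnorm{\TS_\im}$ so that the normal refinement is genuinely tighter than the generic bound.
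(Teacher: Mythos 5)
Your proposal tracks the paper's proof essentially line for line. The paper derives \eqref{eq:separation_risk} exactly as you do, declaring it ``a direct consequence of the definition of the separation'' (the competitor being $\TS_\im$ normalized in HS norm); it obtains \eqref{eq:pseudospectra_risk} from \eqref{eq:bound_efun} together with the resolvent identity $\norm{(z I_{\Lii}-\Koop)^{-1}}^{-1}=\min_{f\in\Lii}\norm{\Koop f - z f}/\norm{f}$ applied with $f=\TS_\im\refun_i\neq 0$ (which is where the not-$\im$-a.e.-zero hypothesis enters, just as in your second paragraph); and it settles both ``consequently'' statements by citing two facts from Trefethen--Embree: that for normal $A$ every $z\in{\rm Sp}_{\varepsilon}(A)$ lies within $\varepsilon$ of $\Spec(A)$, and that for normal $A,B$ one has $\sep(A,B)=\min\{\abs{z-z'}\,:\,z\in\Spec(A),\,z'\in\Spec(B)\}$. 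The Sylvester-operator argument you sketch as the ``hard part'' is precisely the content of this second cited fact (it goes back to Stewart), so it is not a different route --- you are simply proposing to prove a lemma the paper black-boxes.

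The difficulties you flag are genuine, but they are loosenesses of the paper's own proof rather than defects of your attempt relative to it. First, the paper silently identifies $\Spec(\Estim)$ with the $r$ eigenvalues of its spectral decomposition and never addresses $0\in\Spec(\Estim)$; note that your patch ($0\in\Spec(\Koop)$ when $\Koop$ is not invertible) does not cover, e.g., unitary $\Koop$, for which the inclusion at $0$ genuinely fails for small $\varepsilon$, so the statement must be read for the nonzero eigenvalues only. Second, as you correctly observe, $\sep(A,B)=\min$-distance yields only \emph{one} close pair of spectral points, while the final sentence quantifies over \emph{every} $\lambda\in\Spec(\Estim)$; the paper's proof does not close this gap either, and your proposed per-eigenvector reconciliation would only recover the constant $\varepsilon$ of \eqref{eq:pseudospectra_risk}, not the refined constant: for normal $\Estim$ with orthonormal eigenfunctions the per-eigenvalue route gives ${\rm dist}(\lambda_i,\Spec(\Koop))\leq\sqrt{\ExRisk(\Estim)}/\norm{\TS_\im\refun_i}$, and there is no uniform lower bound on $\norm{\TS_\im\refun_i}$. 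Finally, a small algebraic point to tidy: normalizing the competitor yields $\sep(\Koop,\Estim)\leq\sqrt{\ExRisk(\Estim)}/\hnorm{\TS_\im}$, a \emph{quotient}; this implies the stated product bound only when $\hnorm{\TS_\im}\geq 1$, whereas under the paper's normalization $\norm{\phi(x)}\leq 1$ one has $\hnorm{\TS_\im}\leq 1$ --- so your remark that normalization ``introduces the factor $\hnorm{\TS_\im}$'' repeats a discrepancy already present in the paper's statement, and should be stated as the quotient.
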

\begin{proof}
Inequality \eqref{eq:separation_risk} is a direct consequence of the definition of the separation. On the other hand, \eqref{eq:pseudospectra_risk} follows immediately from  \eqref{eq:bound_efun} and the fact that 
\begin{equation}\label{eq:resolvent}
\norm{(z I_{\Lii}-\Koop)^{-1}}_{\Lii}^{-1}  = \min_{f\in\Lii}\frac{\norm{\Koop f - z f}}{\norm{f}},\quad z\in{\rm Res}(\Koop),
\end{equation}
by taking $\TS_\im \refun_i\neq0$ in place of $f$ and $\lambda_i$ in place of $z$.

Now, using that, see~\cite{TrefethenEmbree2020}, for any normal operator $A$
\begin{equation}\label{eq:pseudospectra_1}
\min_{z'\in\Spec(A)}\abs{z-z'}\leq\varepsilon, \quad z\in {\rm Sp}_\varepsilon(A),
\end{equation}
and that for any two normal operators $A$ and $B$ 
\begin{equation}\label{eq:separation_1}
\sep(A,B) = \min\{\abs{z-z'} \;\vert\; z\in\Spec(A),\, z'\in\Spec(B)\},
\end{equation}
the last two statements follow.
%Finally, the last statement follows from that fact that for very two normal operators $A$ and $B$ $\sep(A,B) = {\rm dist}(\Spec(A), \Spec(B))$, cf. \cite{Stuwart1071}. 
\end{proof}

\begin{remark}\label{rem:normal_koopman}
In App.~\ref{app:background} we discussed two important cases in which Koopman operator on $\Lii$ is normal, namely the case of deterministic dynamical systems when $\Koop$ is unitary (cf. Ex.~\ref{ex:deterministic}) and the case of time reversible Markov chains, i.e. when $\Koop$ is self-adjoint (cf. Rem.~\ref{rem:reversible_dynamics}). In such cases, the previous result motivates one to consider normal estimators of the Koopman operator.
\end{remark}

\subsection{Duality Between KOR
%Koopman operator regression 
and CME}\label{app:kor-cme}

%In this section we consider a slightly more general setting of conditional mean embeddings and its relationship to to Koopman operators. Namely, let $X_1$ be distributed according to $\mu$ (not necessarily invariant measure), and, as before, denote by $\rho(dx,dy):=\mu(dx)p(x,dy)$ the joint distribution of $X_1$ and $X_2$. Furthermore, let denote by $\nu$ the distribution of $X_2$. 

In this section we clarify Rem. \ref{rem:kor_cme} on the relationship between conditional mean embeddings (CME) and Koopman operator regression (KOR). 
%oopman operator regression. 
Recalling the definition of CME in \eqref{eq:CME} and the restriction of the Koopman operator on $\RKHS$, $\TZ_\im := \Koop\TS_\im$, it is easy to see that for every $f\in\RKHS$ it holds 
\begin{eqnarray}\label{eq:duality_1}
(\TZ_\im f)(x)=\EE[f(X_{t+1})\,\vert\, X_t=x] = \EE[\scalarp{f,\phi(X_{t+1})},\vert\, X_t=x] = \scalarp{f,\CME(x)}.
\end{eqnarray}
%Hence, the action of the Koopman operator on a function from RKHS is represented by CME.\vladi{Could I say that CME is Riesz representation of the Koopman operator on RKHS?} 
The CME of the Markov transition kernel $\transitionkernel$ is therefore just the Riesz representation of the functional evaluating the Koopman operator restricted to $\RKHS$.

We now prove the identity \eqref{eq:true_risk_cme} in the main text.
\begin{proposition}\label{prop:risk_cme}
%Let $\rho$ be the probability measure on $\X \times \X$, defined as $\rho(dx,dy) = p(x,dy) \im(dx)$. 
For every $\Estim\in\HS{\RKHS}$ the risk \eqref{eq:true_risk} can be equivalently written as
%it be can be equivalently written  as
\begin{equation}\label{eq:risk_cme}
\underbrace{ \EE_{(x,y)\sim \rho} \norm{\phi(y) - \Estim^*\phi(x)}^2}_{\Risk(\Estim)} =  \underbrace{\EE_{(x,y)\sim\rho}\norm{\CME(x) - \phi(y) }^2}_{\IrRisk} +  \underbrace{\EE_{x\sim \im} \norm{\CME(x) - \Estim^*\phi(x)}^2}_{\ExRisk(\Estim)}.
\end{equation}
\end{proposition}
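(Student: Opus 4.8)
The plan is to establish the identity in two stages: first rewrite $\Risk(\Estim)$ in the feature-space form appearing under the first brace, and then perform a Pythagorean bias--variance split around the conditional mean embedding $\CME$. The conceptual engine is exactly the orthogonality of the conditional-expectation residual against any $x$-measurable vector; everything else is bookkeeping.

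For the first stage, I would fix $(x,y)$ and use the reproducing property to turn each scalar residual into an inner product in $\RKHS$. Since $h_i(y)=\scalarp{h_i,\phi(y)}$ and $(\Estim h_i)(x)=\scalarp{\Estim h_i,\phi(x)}=\scalarp{h_i,\Estim^*\phi(x)}$, we obtain
\[
h_i(y)-(\Estim h_i)(x)=\scalarp{h_i,\,\phi(y)-\Estim^*\phi(x)}.
\]
Summing the squares over the complete orthonormal system $(h_i)_{i\in\N}$ and invoking Parseval's identity gives $\sum_{i}[h_i(y)-(\Estim h_i)(x)]^2=\norm{\phi(y)-\Estim^*\phi(x)}^2$. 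Exchanging the sum with $\EE_{x\sim\im}\EE[\,\cdot\mid X_t=x]$ by Tonelli's theorem (exactly as in the proof of Prop.~\ref{prop:risk}) then yields $\Risk(\Estim)=\EE_{(x,y)\sim\rho}\norm{\phi(y)-\Estim^*\phi(x)}^2$, i.e.\ the quantity under the first brace.

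For the second stage, I would insert and subtract $\CME(x)$, writing
\[
\phi(y)-\Estim^*\phi(x)=\bigl(\phi(y)-\CME(x)\bigr)+\bigl(\CME(x)-\Estim^*\phi(x)\bigr),
\]
and expand the squared norm into three summands. Conditioning on $X_t=x$ and using that by definition $\CME(x)=\EE[\phi(X_{t+1})\mid X_t=x]$, the term $\phi(y)-\CME(x)$ has vanishing conditional expectation; since the term $\CME(x)-\Estim^*\phi(x)$ is constant in $y$, the cross term vanishes once the inner product is pulled inside the conditional expectation. Applying the tower property $\EE_{(x,y)\sim\rho}[\cdot]=\EE_{x\sim\im}\EE[\,\cdot\mid X_t=x]$ to the two surviving terms assembles them into $\IrRisk=\EE_{(x,y)\sim\rho}\norm{\CME(x)-\phi(y)}^2$ and $\ExRisk(\Estim)=\EE_{x\sim\im}\norm{\CME(x)-\Estim^*\phi(x)}^2$, which is precisely \eqref{eq:risk_cme}.

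The delicate points are purely measure-theoretic rather than conceptual, and I expect the main obstacle to be making the Bochner-integral exchanges rigorous. Specifically, one must check that $\phi(y)$ is Bochner-integrable so that $\CME(x)$ is well defined in $\RKHS$ (this follows from $\EE_{y\sim\im}\norm{\phi(y)}\leq(\EE_{y\sim\im}k(y,y))^{1/2}<\infty$ under the standing assumption that $\TS_\im$ is Hilbert--Schmidt), that the continuous functional $\scalarp{\CME(x)-\Estim^*\phi(x),\,\cdot}$ commutes with the conditional Bochner integral so that the cross term genuinely cancels, and that the interchange of the infinite sum with the expectation is licensed by nonnegativity via Tonelli. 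Once these are in place, the identity is immediate from the orthogonality of $\phi(y)-\CME(x)$.
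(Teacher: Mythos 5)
Your proof is correct, and its first half (reproducing property, Parseval over the orthonormal system $(h_i)_{i\in\N}$, and Tonelli to exchange sum and expectation, giving $\Risk(\Estim)=\EE_{(x,y)\sim\rho}\norm{\phi(y)-\Estim^*\phi(x)}^2$) is essentially identical to the paper's. The second half takes a genuinely different route. You run the classical Pythagorean bias--variance split: insert and subtract $\CME(x)$ and kill the cross term by conditional orthogonality, since $\phi(y)-\CME(x)$ has vanishing conditional mean while $\CME(x)-\Estim^*\phi(x)$ is $x$-measurable. The paper never expands around $\CME$ in this way; instead it identifies each term with the operator-theoretic quantities from the earlier decomposition $\Risk(\Estim)=\IrRisk+\ExRisk(\Estim)$ of Prop.~\ref{prop:risk}: it shows $\ExRisk(\Estim)=\hnorm{\TZ_\im-\TS_\im\Estim}^2=\EE_{x\sim\im}\norm{\CME(x)-\Estim^*\phi(x)}^2$ via Parseval and the duality $(\TZ_\im h)(x)=\scalarp{h,\CME(x)}$, and recovers $\IrRisk$ by expanding $\EE_{(x,y)\sim\rho}\norm{\phi(y)-\CME(x)}^2=\tr(\TS_\im^*\TS_\im)-2\,\EE_{(x,y)\sim\rho}\scalarp{\phi(y),\CME(x)}+\tr(\TZ_\im^*\TZ_\im)$ together with $\EE_{(x,y)\sim\rho}\scalarp{\phi(y),\CME(x)}=\tr(\TZ_\im^*\TZ_\im)$ --- which is your tower-property cancellation in disguise. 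Your route is shorter and more conceptual; the paper's route buys the term-by-term identification with the definitions in \eqref{eq:ex_ir_risk} for free. That identification is the one point your argument leaves implicit: the underbraces assert that the second and third terms equal $\IrRisk=\hnorm{\TS_\im}^2-\hnorm{\TZ_\im}^2$ and $\ExRisk(\Estim)=\hnorm{\TZ_\im-\TS_\im\Estim}^2$, whereas your Pythagorean identity alone only shows that the three displayed expectations satisfy the equation. This closes with one extra line via the duality relation,
\[
\EE_{x\sim \im} \norm{\CME(x) - \Estim^*\phi(x)}^2=\sum_{i\in\N}\EE_{x\sim\im}\scalarp{h_i,\,\CME(x)-\Estim^*\phi(x)}^2=\hnorm{\TZ_\im-\TS_\im\Estim}^2,
\]
after which Prop.~\ref{prop:risk} forces the middle term to equal $\IrRisk$ as well. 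Your measure-theoretic checks (Bochner integrability of $\phi$ from $\EE_{y\sim\im}k(y,y)<\infty$, commuting bounded functionals with the conditional Bochner integral, nonnegativity for Tonelli) are exactly the right ones and match the care taken in the paper's Prop.~\ref{prop:risk}.
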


\begin{proof}
Starting from \eqref{eq:true_risk} and using the reproducing property we obtain
\begin{align*} 
\Risk(\Estim) & = \sum_{i\in\N} \EE_{(x,y)\sim \rho}[h_i (y)- (\Estim h_i)(x)]^2 =  \sum_{i\in\N} \EE_{(x,y)\sim \rho} [\scalarp{h_i, \phi(y)}- \scalarp{\Estim h_i, \phi(x)}]^2 \\
& =  \sum_{i\in\N}  \EE_{(x,y)\sim \rho} [\scalarp{h_i, \phi(y)}- \scalarp{h_i, \Estim^*\phi(x)}]^2 =  \sum_{i\in\N}  \EE_{(x,y)\sim \rho} \scalarp{h_i, \phi(y)-\Estim^* \phi(x)}^2 \\
& =  \EE_{(x,y)\sim \rho}\sum_{i\in\N} \scalarp{h_i, \phi(y)-\Estim^* \phi(x)}^2 = \EE_{(x,y)\sim \rho}\norm{\phi(y)-\Estim^* \phi(x)}_{\RKHS}^2.
\end{align*}
%where the last row follows from Tonelli's theorem. Furthermore, this true risk is clearly well defined since the kernel is bounded, i.e. for every $\Estim \in \HS{\RKHS}$, $\Risk(\Estim)<+\infty$. 
By the reproducing properties of
%Now, using the reproducing properties 
$(\TZ_\im h_i)(x) = \scalarp{h_i,\CME(x)}$ and $(\TS_\im h_i)(x) = \scalarp{h_i,\phi(x)}$ and Proposition~\ref{prop:risk} we have that
%we get
\begin{align*} 
\ExRisk(\Estim) & =  \hnorm{\TZ_\im - \TS_\im\Estim}^2 = \sum_{i\in\N} \norm{\TZ_\im h_i- \TS_\im \Estim h_i}^2 =  \sum_{i\in\N} \EE_{x\sim\im}[ \abs{(\TZ_\im h_i)(x)- (\TS_\im \Estim h_i)(x)}^2] \\
& =  
%\EE_{x\sim\im}\Big[ \sum_{i\in\N} \scalarp{f_i,\CME(x)} - \scalarp{f_i,\phi(x),\Estim}^2\Big]  =  
\EE_{x\sim\im}\Big[ \sum_{i\in\N} \scalarp{h_i,\CME(x) - \Estim^* \phi(x)}^2\Big] = \EE_{x\sim\im}\Big[ \norm{\CME(x)-\Estim^*\phi(x)}^2\Big].
\end{align*} 
Moreover, since $\TS_\im^*\TS_\im = \EE_{y\sim\im}[\phi(y)\otimes\phi(y)]$ and $\TZ_\im^*\TZ_\im = \EE_{x\sim\im}[\CME(x)\otimes\CME(x)]$, we have
\[ 
\EE_{(x,y)\sim\rho}\norm{\phi(y)-\CME(x)}^2  = \tr(\TS_\im^*\TS_\im) - 2 \EE_{(x,y)\sim\rho} \scalarp{\phi(y),\CME(x)} + \tr(\TZ_\im^*\TZ_\im),
\]
which, along with Prop.~\ref{prop:risk} and the identity $\EE_{(x,y)\sim\rho} \scalarp{\phi(y),\CME(x)} = \EE_{x\sim\im} \EE [ \scalarp{\phi(Y),\CME(x)}\,\vert\, X = x] = \EE_{x\sim\im} \scalarp{\CME(x),\CME(x)} = \tr(\TZ_\im^*\TZ_\im)$ completes the proof.
\end{proof}
Prop.~\ref{prop:risk_cme} implies that $\Estim_\star$ is a solution of the KOR problem \eqref{eq:KRP} if and only if $\Estim_\star^*$ is a solution of the CME regression problem $\min_{\Estim} \EE_{(x,y)\sim \rho} \norm{\phi(y) - \Estim\, \phi(x)}^2$. In this sense the Koopman regression problem is \textit{dual} to learning CME of the Markov transition kernel $p$. 

Moreover, from the perspective of CME, well-specified case is identified by $\CME(\cdot) = \HKoop^*\phi(\cdot)$, i.e. it is the case when regression operator $\CME$ belongs to the vector-valued RKHS $\mathcal{G}$ defined by the operator-valued kernel $g(x,x'):=k(x,x')\Id_{\RKHS}$. This vector-valued RKHS is isometrically isomorphic to $\HS{\RKHS}$, where the isomorphism is given by $\HS{\RKHS}\ni A\longleftrightarrow A\phi(\cdot)\in\mathcal{G}$, see~\cite[Ex.3.6(i)]{CDTU2010}. On the other hand, the misspecified case is simply when $\CME\not\in\mathcal{G}$.

\section{Empirical Risk Minimization}\label{app:erm}

In this section we provide details on computing the estimators of the Koopman operator. For convenience, we denote the regularized risk  by 
\begin{equation}\label{eq:empirical_risk_reg}
\widehat{\Risk}^\reg(\Estim) :
    = \hnorm{ \EZ - \ES\Estim }^{2} + \reg\hnorm{\Estim}^{2}, \qquad \Estim \in \HS{\mathcal{H}}.
\end{equation}

\subsection{Computation of the Estimators}\label{app:comp_estim}

In Theorem~\ref{thm:R3_1} we derive the closed form solution of \eqref{eq:HSnorm_RRR_1} and in Theorem~\ref{thm:R3_2} we formulate it in a numerically computable representation. In Theorem~\ref{thm:PCR} we show the same for the PCR estimator, highlighting its equivalence to the kernel DMD algorithm~\cite{Kutz2016}.

\begin{theorem}\label{thm:R3_1}
The optimal solution of problem \eqref{eq:HSnorm_RRR_1} is given by $\EEstim_{r, \reg} = \ECx_\reg^{-\frac{1}{2}} [\![ \ECx_\reg^{-\frac{1}{2}}\ECxy ]\!]_r $. Moreover, $\widehat{\Risk}^\reg(\EEstim_{r,\reg}) = \tr(\ECy)  - \sum_{i=1}^{r} \sigma_i^2$,
%\begin{equation}\label{eq:R3_1}
%\widehat{\Risk}^\reg(A_{r,\reg}) = \frac{1}{n}\Big(\tr(\ECy)  - \sum_{i=1}^{r} \sigma_i^2\Big),
%\end{equation}
where $\sigma_1\geq\cdots\geq\sigma_r$ are leading singular values of $\ECx_\reg^{-\frac{1}{2}}\ECxy$.
\end{theorem}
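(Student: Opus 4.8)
The plan is to expand the regularized empirical risk into a quadratic form in $\Estim$, absorb the Tikhonov penalty into the quadratic term, and then perform a change of variables turning the problem into a plain best rank-$r$ approximation, solvable by the Eckart--Young--Mirsky theorem. Concretely, I would first expand $\widehat{\Risk}^\reg(\Estim) = \hnorm{\EZ - \ES\Estim}^2 + \reg\hnorm{\Estim}^2$ using $\hnorm{M}^2 = \tr(M^*M)$ together with the definitions $\ECx = \ES^*\ES$, $\ECy = \EZ^*\EZ$, $\ECxy = \ES^*\EZ$, obtaining
\[
\widehat{\Risk}^\reg(\Estim) = \tr(\ECy) - 2\tr(\Estim^*\ECxy) + \tr(\Estim^*\ECx\Estim) + \reg\,\tr(\Estim^*\Estim).
\]
Merging the last two traces produces $\tr(\Estim^*\ECx_\reg\Estim)$, where $\ECx_\reg = \ECx + \reg\Id_\RKHS$ is positive definite for $\reg>0$ and hence admits a bounded, boundedly invertible square root $\ECx_\reg^{1/2}$.

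Second, I would substitute $B := \ECx_\reg^{1/2}\Estim$. Since $\ECx_\reg^{1/2}$ is a bijection of $\HS{\RKHS}$ onto itself, this substitution preserves rank, so the constraint $\Estim\in\HSr$ is equivalent to $\rank(B)\le r$. A direct computation gives $\tr(\Estim^*\ECx_\reg\Estim) = \hnorm{B}^2$ and $\tr(\Estim^*\ECxy) = \tr(B^*A)$ with $A := \ECx_\reg^{-1/2}\ECxy$, and completing the square yields
\[
\widehat{\Risk}^\reg(\Estim) = \tr(\ECy) - \hnorm{A}^2 + \hnorm{B - A}^2,
\]
so that minimizing over $\Estim\in\HSr$ amounts to minimizing $\hnorm{B - A}^2$ over operators $B$ of rank at most $r$.

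Third, observing that $A$ is finite rank (since $\ECxy = \ES^*\EZ$ factors through $\R^n$) and hence Hilbert--Schmidt, I would invoke the Eckart--Young--Mirsky theorem in the Hilbert--Schmidt setting: the minimizer is the $r$-truncated SVD $B^\star = [\![A]\!]_r$, with residual $\hnorm{B^\star - A}^2 = \sum_{i>r}\sigma_i^2$. Undoing the substitution gives $\EEstim_{r,\reg} = \ECx_\reg^{-1/2}[\![\ECx_\reg^{-1/2}\ECxy]\!]_r$, and, using $\hnorm{A}^2 = \sum_i\sigma_i^2$, the optimal value is $\tr(\ECy) - \sum_i\sigma_i^2 + \sum_{i>r}\sigma_i^2 = \tr(\ECy) - \sum_{i=1}^r\sigma_i^2$, as claimed.

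The main subtlety is that the feasible set $\HSr$ is not convex, so no first-order or convexity argument applies directly; the resolution is that the change of variables isolates all the non-convexity in the best rank-$r$ approximation subproblem, which despite being non-convex is solved explicitly by Eckart--Young--Mirsky. The two points requiring care are that $B = \ECx_\reg^{1/2}\Estim$ preserves the rank constraint exactly (true because $\ECx_\reg^{1/2}$ is invertible) and that $A$ is genuinely Hilbert--Schmidt, so that its truncated SVD is well defined and Eckart--Young--Mirsky optimality holds in the operator setting rather than only for matrices.
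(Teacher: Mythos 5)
Your proof is correct and follows essentially the same route as the paper's: expand the regularized empirical risk, complete the square via the substitution $B = \ECx_\reg^{1/2}\Estim$ with $A = \ECx_\reg^{-1/2}\ECxy$, and apply Eckart--Young--Mirsky to the resulting best rank-$r$ approximation problem. If anything, your version is slightly tidier, since you explicitly note that the invertibility of $\ECx_\reg^{1/2}$ makes the rank constraint \emph{equivalent} (not just preserved in one direction) and that $A$ is finite rank, hence Hilbert--Schmidt, so the truncated SVD and its optimality are well defined in the operator setting.
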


\begin{proof} Start by observing that, according to  \eqref{eq:empirical_risk},
\begin{align*}
\widehat{\Risk}^\reg(\Estim) & = \frac{1}{n}\sum_{i=1}^n \left \Vert \phi(y_{i}) - \Estim^{*}\phi(x_{i})\right \Vert^{2} + \reg\hnorm{\Estim}^{2}  \\
& =  \frac{1}{n}\sum_{i=1}^n \tr( \phi(y_i)\otimes \phi(y_i))  - 2\scalarp{\phi(y_i),\Estim^*\phi(x_i)} + \tr(\Estim\Estim^*\phi(x_i)\otimes \phi(x_i)) + \reg \tr(\Estim\Estim^*) \\
& =  \tr(\ECy) + \tr(\Estim\Estim^* \ECx_{\reg}) - 2\tr(\Estim^* \ECxy) =
\tr(\ECy) - \hnorm{\ECx_\reg^{-\frac{1}{2}}\ECxy}^2 + \hnorm{\ECx_\reg^{\frac{1}{2}}\Estim  - \ECx_\reg^{-\frac{1}{2}}\ECxy}^2.
\end{align*}
The last equality follows from simple algebra after adding and subtracting the term $\hnorm{\ECx_\reg^{-\frac{1}{2}}\ECxy}^2$.
 
We now focus on the last term of the previous equation, the only one entering the minimization. We have
\begin{equation}\label{eq:RRR_minimization}
    \hnorm{ [\![\ECx_\reg^{-\frac{1}{2}}\ECxy]\!]_r  - \ECx_\reg^{-\frac{1}{2}}\ECxy}^2 = \min_{B \in \HSr} \hnorm{B  - \ECx_\reg^{-\frac{1}{2}}\ECxy}^2 \leq \min_{G \in \HSr} \hnorm{\ECx_\reg^{\frac{1}{2}}\Estim  - \ECx_\reg^{-\frac{1}{2}}\ECxy}^2.
\end{equation}
The equality above comes from the Eckart–Young–Mirsky theorem, while the inequality from the fact that $\Estim \in \HSr \implies B := \ECx_\reg^{\frac{1}{2}}\Estim \in \HSr$. From~\eqref{eq:RRR_minimization} we conclude that $\EEstim_{r, \reg} = \ECx_\reg^{-\frac{1}{2}} [\![ \ECx_\reg^{-\frac{1}{2}}\ECxy ]\!]_r$ minimizes $\widehat{\Risk}^\reg$. The same theorem also guarantees that $\norm{\ECx_\reg^{\frac{1}{2}} \EEstim_{r,\reg}  - \ECx_\reg^{-\frac{1}{2}}\ECxy} = \sum_{i=r+1}^{\infty}\sigma_i^2$, hence 
\begin{equation*}
 \widehat{\Risk}^\reg(\EEstim_{r,\reg}) = \tr(\ECy) - \sum_{i=1}^{\infty}\sigma_i^2 + \sum_{i=r+1}^{\infty}\sigma_i^2 =  \tr(\ECy) - \sum_{i=1}^{r}\sigma_i^2.
\end{equation*}
\end{proof}
While the previous theorem provides a method to compute the RRR estimator when the RKHS is finite-dimensional (in fact, an efficient one when the number of features is smaller than the number of samples), the following result shows how one can compute RRR for infinite-dimensional RKHSs.

\begin{theorem}\label{thm:R3_2} If $U_r = [u_1\, \vert \ldots \,\vert u_r]\in\R^{n\times r} $ is such that $(\sigma_i^2,u_i)$ are the solutions of the generalized eigenvalue problem 
\begin{equation}\label{eq:GEV} \Ky \Kx  u_i = \sigma_i^2 K_\reg u_i\quad\text{ normalized such that }\quad u_i^\top \Kx  K_\reg u_i = 1, \quad i \in[r] 
\end{equation} 
and $V_r = \Kx  U_r$, then the optimal solution of  \eqref{eq:HSnorm_RRR_1}  is given by $\EEstim_{r,\reg} =  \ES ^* U_r V_r^\top \EZ $. Moreover, we have
%it holds that
\begin{equation}\label{eq:error} \widehat{\Risk}^\reg(\EEstim_{r,\reg}) = \tr(\Ky)  - \sum_{i=1}^{r} \sigma_i^2\;  \text{ and }\; \widehat{\Risk}(\EEstim_{r,\reg}) = \tr\Big( \big( I - \Kx U_r V_r^\top - \reg \Kx (U_r V_r^\top)^2 \big)  \Ky\Big). 
\end{equation} 
\end{theorem}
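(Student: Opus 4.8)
The plan is to reduce the infinite-dimensional truncated SVD defining $\EEstim_{r,\reg}$ in Theorem~\ref{thm:R3_1} to the finite generalized eigenvalue problem \eqref{eq:GEV} on $\R^n$, using the \emph{push-through identity} $\ES\ECx_\reg^{-1} = \Kx_\reg^{-1}\ES$ (equivalently $\ECx_\reg^{-1}\ES^* = \ES^*\Kx_\reg^{-1}$), which holds because $\ES\ECx_\reg = \ES\ES^*\ES + \reg\ES = \Kx_\reg\ES$. Throughout I abbreviate $W := U_r V_r^\top$ and $\Sigma^2 := \Diag(\sigma_1^2,\dots,\sigma_r^2)$, and I use the elementary factorization $\ECxy\ECxy^* = \ES^*\EZ\EZ^*\ES = \ES^*\Ky\ES$. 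Applying the push-through identity twice gives the workhorse relation $\ES\ECx_\reg^{-1}\ES^* = \Kx\Kx_\reg^{-1}$.

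For the representation, recall from Theorem~\ref{thm:R3_1} that $\EEstim_{r,\reg} = \ECx_\reg^{-1/2}[\![\ECx_\reg^{-1/2}\ECxy]\!]_r = \ECx_\reg^{-1/2}P_r\,\ECx_\reg^{-1/2}\ECxy$, where $P_r=\sum_{i\in[r]}w_i\otimes w_i$ projects onto the top-$r$ left singular vectors $w_i$ of $\ECx_\reg^{-1/2}\ECxy$, i.e. the top eigenvectors of $\ECx_\reg^{-1/2}\ES^*\Ky\ES\,\ECx_\reg^{-1/2}$. I would posit $w_i = \ECx_\reg^{-1/2}\ES^*\Kx_\reg u_i$ with $u_i$ solving \eqref{eq:GEV}, and verify three things. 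First, $\langle w_i,w_j\rangle = (\Kx_\reg u_i)^\top\Kx\Kx_\reg^{-1}(\Kx_\reg u_j) = u_i^\top\Kx\Kx_\reg u_j$, so the normalization $u_i^\top\Kx\Kx_\reg u_j=\delta_{ij}$ is \emph{exactly} orthonormality of the $w_i$. Second, substituting the posited $w_i$ into the eigenproblem and using $\ES\ECx_\reg^{-1}\ES^*=\Kx\Kx_\reg^{-1}$ collapses the left-hand side to $\ECx_\reg^{-1/2}\ES^*\Ky\Kx u_i$ and the right-hand side to $\sigma_i^2\ECx_\reg^{-1/2}\ES^*\Kx_\reg u_i$; these agree precisely because \eqref{eq:GEV} holds, so $w_i$ is an eigenvector with eigenvalue $\sigma_i^2$ and $\sigma_i$ is a singular value of $\ECx_\reg^{-1/2}\ECxy$. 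Third, $\ECx_\reg^{-1/2}w_i = \ECx_\reg^{-1}\ES^*\Kx_\reg u_i = \ES^*\Kx_\reg^{-1}\Kx_\reg u_i = \ES^* u_i$, and likewise $w_i^*\ECx_\reg^{-1/2}\ECxy = u_i^\top\ES\ES^*\EZ = (\Kx u_i)^\top\EZ = v_i^\top\EZ$. Summing $\EEstim_{r,\reg}=\sum_{i\in[r]}(\ECx_\reg^{-1/2}w_i)\otimes(\ECxy^*\ECx_\reg^{-1/2}w_i)$ then yields $\EEstim_{r,\reg}=\ES^*\big(\sum_{i\in[r]}u_i v_i^\top\big)\EZ = \ES^* U_r V_r^\top\EZ$.

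For the error identities, the regularized one is immediate: Theorem~\ref{thm:R3_1} gives $\widehat{\Risk}^\reg(\EEstim_{r,\reg})=\tr(\ECy)-\sum_{i\in[r]}\sigma_i^2$ with the $\sigma_i$ the singular values of $\ECx_\reg^{-1/2}\ECxy$, which by the previous step are those of \eqref{eq:GEV}; since $\tr(\ECy)=\tr(\EZ^*\EZ)=\tr(\EZ\EZ^*)=\tr(\Ky)$, the first identity in \eqref{eq:error} follows. For the unregularized risk I substitute $\ES\EEstim_{r,\reg}=\Kx W\EZ$ into $\widehat{\Risk}(\EEstim_{r,\reg})=\hnorm{\EZ-\ES\EEstim_{r,\reg}}^2$ and expand, using $\EZ\EZ^*=\Ky$ and symmetry of $\Kx,\Ky$, to get
\[
\widehat{\Risk}(\EEstim_{r,\reg}) = \tr(\Ky) - 2\tr(\Kx W\Ky) + \tr(\Kx W\Ky W^\top\Kx).
\]
The crux is the quadratic term. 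With $V_r=\Kx U_r$ one has $\Kx W = V_r V_r^\top = W^\top\Kx$; left-multiplying \eqref{eq:GEV} by $U_r^\top\Kx$ and using $U_r^\top\Kx\Kx_\reg U_r=I_r$ yields $V_r^\top\Ky V_r=\Sigma^2$, while the normalization itself gives $V_r^\top V_r = U_r^\top\Kx^2 U_r = I_r-\reg U_r^\top\Kx U_r$. Hence $\tr(\Kx W\Ky W^\top\Kx)=\tr(\Sigma^2 V_r^\top V_r)=\tr(\Sigma^2)-\reg\tr(\Sigma^2 U_r^\top\Kx U_r)$, and parallel cyclic-trace computations give $\tr(\Kx W\Ky)=\tr(\Sigma^2)$ and $\tr(\Kx W^2\Ky)=\tr(\Sigma^2 U_r^\top\Kx U_r)$. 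Collecting terms produces $\widehat{\Risk}(\EEstim_{r,\reg})=\tr(\Ky)-\tr(\Kx W\Ky)-\reg\tr(\Kx W^2\Ky)=\tr\big((I-\Kx W-\reg\Kx W^2)\Ky\big)$, the second identity in \eqref{eq:error}.

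The main obstacle is the spectral correspondence in the representation step: making rigorous that the posited $w_i$ are \emph{all} of, and exactly, the top-$r$ orthonormal left singular vectors with matching singular values. The forward direction (every solution of \eqref{eq:GEV} yields an eigenvector) is clean, but the converse and the ordering require care when $\Kx$ is singular, since then the eigenproblem for $w_i$ only forces $\Kx(\Ky\Kx u_i-\sigma_i^2\Kx_\reg u_i)=0$, i.e. \eqref{eq:GEV} is recovered on $\range(\Kx)$ with vectors in $\Ker\Kx$ contributing trivially to $\EEstim_{r,\reg}$. Once this correspondence and the two identities $V_r^\top\Ky V_r=\Sigma^2$ and $V_r^\top V_r=I_r-\reg U_r^\top\Kx U_r$ are in hand, the remaining trace manipulations are routine.
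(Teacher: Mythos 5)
Your proposal is correct and follows essentially the same route as the paper's proof: both reduce the truncated SVD of $\ECx_\reg^{-\frac{1}{2}}\ECxy$ from Thm.~\ref{thm:R3_1} to the generalized eigenproblem \eqref{eq:GEV} by representing the left singular vectors through $\ES^*$ (your ansatz $w_i=\ECx_\reg^{-\frac{1}{2}}\ES^*\Kx_\reg u_i$ coincides with the paper's $g_i = \ES^*\Kx_\reg^{\frac{1}{2}}u_i$ via the push-through identity), and your trace computations for \eqref{eq:error} match the paper's, which likewise rests on $V_r^\top V_r + \reg V_r^\top U_r = I$. The $\Ker(\Kx)$ subtlety you flag is real but benign --- exactly as you indicate, any residual lies in $\Ker(\Kx)=\Ker(\ES^*)$ and can be absorbed into $u_i$ (shift $u_i$ by $\delta\in\Ker(\Kx)$, which leaves $w_i$ unchanged since $\ES^*\Kx_\reg\delta=\reg\,\ES^*\delta=0$) --- and the paper's own proof silently skips the same point when it cancels $\ES^*$ to pass from the operator eigenvalue equation to \eqref{eq:GEV}.
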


\begin{proof} Start by observing that, according to Thm~ \ref{thm:R3_1}, $\EEstim_{r,\reg}$ is obtained from the truncated SVD of the operator $(\ECx+ \reg I_{\RKHS})^{-\frac{1}{2}}\ECxy = (\ES ^*\ES + \reg I_{\RKHS})^{-\frac{1}{2}}\ES ^*\EZ  = \ES ^*(\ES \ES ^*+ \reg I_{\RKHS})^{-\frac{1}{2}}\EZ  =  \ES ^*K_\reg^{-\frac{1}{2}}\EZ  $. Its leading singular values $\sigma_1\geq\ldots\geq\sigma_r$ and the corresponding {\em left} singular vectors $g_1,\ldots,g_r\in\RKHS$ are obtained by solving the eigenvalue problem
\begin{equation}\label{eq:evp_1}
 \left(\ES ^*K_\reg^{-\frac{1}{2}}\EZ\right)\left(\ES ^*K_\reg^{-\frac{1}{2}}\EZ\right)^{*} g_i  = \sigma_i^2 g_i,\; i\in[r].
\end{equation}
From the above equation, clearly $g_i\in\range(\ES ^* K_\reg^{-\frac{1}{2}}) = \range(\ES ^* K_\reg^{\frac{1}{2}})$, and we can represent the singular vectors as $g_i = \ES ^* K_\reg^{\frac{1}{2}} u_i$ for some $u_i\in\R^n$, $i\in[r]$. Therefore, substituting $g_i = \ES ^* K_\reg^{\frac{1}{2}} u_i$ in~\eqref{eq:evp_1} and simplifying one has the finite-dimensional eigenvalue equation
\begin{equation}\label{eq:evp_2}
\Ky\Kx  u_i = \sigma_i^2 K_\reg u_i,\; i\in[r].
\end{equation}
Solving~\eqref{eq:evp_2} and using that $g_i = \ES ^* K_\reg^{\frac{1}{2}} u_i$, one obtains  $(\sigma_i^2,g_i)$, $i\in[r]$, the solutions of the eigenvalue problem \eqref{eq:evp_1}. In order to have properly normalized $g_i$, it must hold for all $i\in[r]$ that
\begin{equation}\label{eq:eigenvector_normalization}
1 =g_i^*g_i = u_i^\top K_\reg^{\frac{1}{2}} \ES  \ES ^* K_\reg^{\frac{1}{2}}  u_i = u_i^\top  \Kx  K_\reg u_i.
\end{equation}

Now, the subspace of the leading left singular vectors is $\range( \ES ^* K_\reg^{\frac{1}{2}} U_r )$. As the columns of $U_r$ are properly normalized according to~\eqref{eq:eigenvector_normalization}, the orthogonal projector onto the range of $\ES ^* K_\reg^{\frac{1}{2}} U_r$ is given by $\Pi_r := \ES ^* K_\reg^{\frac{1}{2}} U_r U_r^\top K_\reg^{\frac{1}{2}} \ES $. We therefore have that $[\![ \ECx_\reg^{-\frac{1}{2}}\ECxy ]\!]_r =  \Pi_r \ECx_\reg^{-\frac{1}{2}}\ECxy = %\ES ^* K_\reg^{\frac{1}{2}} U_r U_r^\top K_\reg^{\frac{1}{2}} \ES   \ES ^*K_\reg^{-\frac{1}{2}}\EZ  =  
\ES ^* K_\reg^{\frac{1}{2}} U_r U_r^\top \Kx \EZ $. Thus, defining $V_r := \Kx  U_r$, we conclude that 
\begin{equation*}
    \EEstim_{r,\reg} = \ECx_\reg^{-\frac{1}{2}}  \ES ^* K_\reg^{\frac{1}{2}} U_r V_r^\top \EZ  = 
%\ES ^* K_\reg^{-\frac{1}{2}}  K_\reg^{\frac{1}{2}} U_r V_r^\top \EZ  = 
\ES ^*U_r V_r^\top \EZ. 
\end{equation*} 

\medskip

To conclude the proof we have to evaluate the error $\widehat{\Risk}(\EEstim_{r,\reg})$. We notice that $\tr(\ECy)=\tr(\Ky )$ and that
\begin{align*}
\widehat{\Risk}(\EEstim_{r,\reg}) 
%& =  \tr(\ECy) - 2 \tr(\EEstim_{r,\reg}^* \ECxy  ) + \tr(\EEstim_{r,\reg} \EEstim_{r,\reg}^* \ECx)  \\
%& = \tr(\Ky) - 2\tr(V_r U_r^\top \Kx  \Ky ) + \tr(U_r V_r^\top  \Ky  V_r U_r^\top \Kx ^2) \\
= \tr(\Ky) - 2\tr(V_r V_r^\top \Ky ) + \tr(V_r V_r^\top V_r V_r^\top  \Ky) = \tr\Big( \big( I - \Kx U_r V_r^\top - \reg \Kx (U_r V_r^\top)^2 \big)  \Ky\Big).
\end{align*} 
Here, along with some simple algebric manipulations, we have used $U_r^\top \Kx (\Kx+\reg I) U_r = I $, i.e. $V_r^\top V_r + \reg V_r^\top U_r = I$. %we obtain
%\begin{align*}
%\widehat{\Risk}(\EEstim_{r,\reg}) & = \tr(\Ky) - \tr(V_r V_r^\top \Ky )  - \reg \tr(V_r V_r^\top U_r V_r^\top  \Ky) \\
%& = \tr(\Ky) - \tr(\Kx U_r V_r^\top \Ky ) - \reg \tr(\Kx U_r V_r^\top U_r V_r^\top  \Ky) = \tr\Big( \big( I - \Kx W_r - \reg \Kx W_r^2 \big)  \Ky\Big).
%\end{align*} 
\end{proof}

\begin{remark}\label{rem:empirical_risk_krr}
Since for $r=n$ the estimators RRR and KRR coincide, the previous result implies that the empirical risk for the KRR estimator can be written as 
\begin{equation}\label{eq:krr_empirical_risk}
\widehat{\Risk}(\EEstim_{\reg}) = \tr\Big( \big( I - \Kx \Kx_\reg^{-1} + \reg \Kx \Kx_\reg^{-2} \big)  \Ky\Big) = \reg^2\tr\big( \Kx_\reg^{-2} \Ky \big).
\end{equation}
\end{remark}

As discussed above, see also \cite{Kutz2016}, for the choice of linear kernel PCR estimator $\EEstim^{{\rm PCR}}_r =  [\![ \ECx]\!]_r^{\dagger}\ECxy$ is known as DMD, while for the finite-dimensional (nonlinear) kernels it is known as extended EDMD. In these cases previous formula gives also a practical way to compute it. On the other hand, for infinite-dimensional kernels, PCR is known as kernel DMD, and in this case its practical computation can be done using the following result. 

\begin{theorem}\label{thm:PCR} The PCR estimator $\EEstim^{{\rm PCR}}_r =  [\![ \ECx]\!]_r^{\dagger}\ECxy$ can be equivalently written as $\EEstim^{{\rm PCR}}_r = \ES^{*} U_r V_r^\top\EZ$,
%\begin{equation}\label{eq:pcr}
%\EEstim^{{\rm PCR}}_r = \ES^{*}[\![\Kx]\!]_r^{\dagger}\EZ = \ES^{*} U_r V_r^\top\EZ,
%\end{equation}
where $[\![\Kx]\!]_r^{\dagger} = V_r \Sigma_r V_r^\top$ is $r$-trunacted SVD and $U_r = V_r \Sigma_r^\dagger$.  Moreover, it holds that 
\begin{equation}\label{eq:error_pcr}  
\widehat{\Risk}(\EEstim_r^{\rm PCR}) = \tr(( I_n - \Kx U_r V_r^\top )\Ky). 
\end{equation} 
\end{theorem}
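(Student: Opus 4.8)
\emph{Plan.} The strategy is to mirror the $\ES^{*}$-representation derived for RRR in Theorem~\ref{thm:R3_2}, specializing it to the unregularized, rank-truncated PCR case. My starting point would be the identity $\EEstim^{\rm PCR}_r = \ES^{*}[\![\Kx]\!]_r^{\dagger}\EZ$ already recorded in the main text, which reflects the shared spectral structure of $\ECx=\ES^{*}\ES$ and $\Kx=\ES\ES^{*}$. Concretely, writing an SVD $\ES=\sum_j\sqrt{s_j}\,w_j\otimes g_j$ with $g_j\in\RKHS$ and $w_j\in\R^n$, one has $\ECx=\sum_j s_j\,g_j\otimes g_j$ and $\Kx=\sum_j s_j\,w_j w_j^{\top}$, with the correspondence $g_j=s_j^{-1/2}\ES^{*}w_j$. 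Applying the truncated pseudoinverse on each side and using this correspondence gives $[\![\ECx]\!]_r^{\dagger}\ES^{*}=\ES^{*}[\![\Kx]\!]_r^{\dagger}$, hence $\EEstim^{\rm PCR}_r=\ES^{*}[\![\Kx]\!]_r^{\dagger}\EZ$.

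\emph{Representation via $U_r,V_r$.} Since $\Kx$ is symmetric positive semidefinite, its $r$-truncated SVD is $[\![\Kx]\!]_r=V_r\Sigma_r V_r^{\top}$ with $V_r^{\top}V_r=I_r$ and $\Sigma_r$ diagonal and positive (assuming $\rank(\Kx)\ge r$). Then $[\![\Kx]\!]_r^{\dagger}=V_r\Sigma_r^{\dagger}V_r^{\top}=U_rV_r^{\top}$ with $U_r:=V_r\Sigma_r^{\dagger}$, and substituting into the previous expression yields the first claim $\EEstim^{\rm PCR}_r=\ES^{*}U_rV_r^{\top}\EZ$.

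\emph{Empirical risk.} For the second claim I would substitute $\EEstim^{\rm PCR}_r$ into $\widehat{\Risk}(\Estim)=\hnorm{\EZ-\ES\Estim}^2$ and use $\ES\ES^{*}=\Kx$ to obtain $\ES\EEstim^{\rm PCR}_r=\Kx U_rV_r^{\top}\EZ$. The key observation is that $\Kx U_rV_r^{\top}=\Kx V_r\Sigma_r^{\dagger}V_r^{\top}=V_r\Sigma_r\Sigma_r^{\dagger}V_r^{\top}=V_rV_r^{\top}=:P_r$ is the orthogonal projector onto the top-$r$ eigenspace of $\Kx$, so that $\widehat{\Risk}(\EEstim^{\rm PCR}_r)=\hnorm{(I_n-P_r)\EZ}^2=\tr\!\big((I_n-P_r)\Ky(I_n-P_r)\big)$. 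Since $P_r$ is symmetric and idempotent, the cyclic property of the trace collapses this to $\tr\!\big((I_n-P_r)\Ky\big)=\tr\!\big((I_n-\Kx U_rV_r^{\top})\Ky\big)$, which is exactly the asserted formula.

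\emph{Main obstacle.} The only genuinely delicate point is the spectral bookkeeping of the first step: verifying that rank-truncation and pseudoinversion interact correctly and that the eigenvector correspondence between $\ECx$ and $\Kx$ is exact in the infinite-dimensional RKHS. This is, however, identical in spirit to the computation already carried out for RRR in Theorem~\ref{thm:R3_2}, so it introduces no new difficulty; the projector identity $\Kx U_rV_r^{\top}=P_r$ and the ensuing trace collapse are then entirely routine.
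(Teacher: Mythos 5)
Your proposal is correct and takes essentially the same route as the paper's proof: both transfer the truncated pseudoinverse from $\ECx=\ES^*\ES$ to $\Kx=\ES\ES^*$ via the shared singular structure of $\ES$ (the paper writes the leading singular vectors as $g_i=\ES^*v_i/\sqrt{\sigma_i}$), identify $[\![\Kx]\!]_r^{\dagger}=U_rV_r^\top$, and evaluate the empirical risk using that $\Kx U_rV_r^\top=V_rV_r^\top$ is an orthogonal projector. The only cosmetic difference is that you collapse $\hnorm{(I_n-V_rV_r^\top)\EZ}^2$ directly by idempotence and cyclicity of the trace, whereas the paper first expands the square into three trace terms before invoking the same projector property.
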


\begin{proof}
Without loss of generality assume that $\rank(\ECx)\geq r$. As for the proof of Theorem~\ref{thm:R3_2}, the leading $r$ singular vectors $(g_i)_{i\in[r]}$ of $\ECx$ can be written in the form $g_i = \ES^*v_i / \sqrt{\sigma_i}$, where $v_i$ are the eigenvectors corresponding to the $r$ leading eigenvalues $\sigma_i$ of $\Kx$%, i.e. $[\![\Kx]\!]_r^{\dagger} = V_r \Sigma_r V_r^\top$
. Then, it readily follows that $[\![\ECx]\!]_r^{\dagger} \ECxy = \ES^* V_r \Sigma_r^{-2} V_r^\top \ES \ES^*\EZ$. We therefore conclude that
\begin{equation*}
    \EEstim^{{\rm PCR}}_r = \ES^* V_r \Sigma_r^{-2} (\Kx V_r)^\top \EZ = \ES^* V_r \Sigma_r^{-1} V_r^\top \EZ = \ES^* U_r V_r^\top \EZ
\end{equation*}
Finally, since $\widehat{\Risk}( \EEstim^{{\rm PCR}}_r ) = \tr(\Ky) - 2\tr(V_r V_r^\top \Ky ) + \tr(V_r V_r^\top V_r V_r^\top  \Ky)$, using that $V_rV_r^\top$ is orthogonal projector and $\Kx U_r = V_r$ we obtain \eqref{eq:error_pcr}.
\end{proof}

\subsection{Mode Decomposition and Prediction}\label{app:comp_dmd}

In this section we show how an estimator of the Koopman operator can be used to predict future states of the system and how its mode decomposition can be evaluated. We will address a slightly more general setting than the one presented in Sec.~\ref{sec:erm}, that is we allow for {\em vectorial} observables $f = (f_\ell)_{\ell=1}^m \in \RKHS^m$ for which the action of the Koopman operator is naturally extended as $\Koop f = (\Koop f_\ell)_{\ell\in[m]}$. To that end, given the data $\Data$ and a vector valued observable $f = (f_\ell)_{\ell=1}^m \in \RKHS^m$ we denote the observable evaluated along the data points as $\Gamma^f = \big[f(y_{1})\,\vert\, \ldots\,\vert\, f(y_{n}) \big]\in \R^{m\times n}$. 
%\massi{the remark below is a bit too wordy}
\begin{remark}\label{rem:states_and_non_rkhs_obesrvables}
%In the following we rely on Thm.~\ref{thm:KMD_learning} in order to predict and forecast arbitrary observables from a given RKHS, which can be applied to predict the states themselves. Namely, 
If $\X\subseteq\R^d$, we will argue that an important observable of the system is given by the identity function ${\rm Id}:\X \to \X$. If the projection onto the $i$-th component is a function belonging to $\RKHS$ for all $i \in [d]$, then ${\rm Id} \in \RKHS^{d}$ and Thm.~\ref{thm:KMD_learning} holds for this specific observable. While this may not hold in general, note that for every kernel we can take the sum with a linear kernel to obtain RKHS that contains ${\rm Id}$.
%On the other hand, if the coordinate projections are not in $\RKHS$ but the kernel is sufficiently high dimensional\vladi{here was written universal, like this is not correct}, given finite data we can always build perfect interpolants for the coordinate projections evaluated on $\Data$, see e.g. \cite{???}\vladi{Massi/Carlo/Lorenzo - do you know a good reference for this? MP: are you sure this is true? I would instead say that we add these function to he RKHS, or more preides we take the sum of the universal kernel (e.g. Gaussian) and the linear kernel} and apply the previous result on them. In such a case, the estimates will be the same but we pay the price of having a possibly large RKHS-norm of the interpolants impacting on the prediction error. This last issue is out of the scope of this paper and is left for future work.
\end{remark}

\paragraph{Prediction.} Each empirical estimator $\EEstim = \ES^* W \EZ$ allows one to estimate a future state given a starting point.
%and forecasting of the measurements of Markov chain states. 
 According to the bound~\eqref{eq:KMD_bound} in Thm.~\ref{thm:KMD_learning} we obtain that given $f = (f_\ell)_{\ell=1}^m \in \RKHS^m$, if $x$ is the current state of the Markov process, the expected value of $f$ at the next iteration
%one-step prediction ($t=1$) and forecasting ($t\geq2$) 
is approximated as $[\Koop f](x) = [\EEstim f](x) + {\rm err}^f(x)$, i.e.
\begin{equation}
\EE[ f(X_{t + 1})\, \vert\, X_t=x] = \sum_{j=1}^n \beta_j^f k(x_j,x) + \sqrt{\ExRisk(\EEstim)}\;{\rm err^f}(x),
\end{equation}
where $\beta^f = \tfrac{1}{n}\Gamma^f W^\top\in\R^{m\times n}$ and ${\rm err^f}\in(\Lii)^{m}$ such that $\norm{({\rm err^f})_\ell}\leq \norm{f_\ell}$, $\ell\in[m]$. If $f = {\rm Id}$, we obtain a prediction of the future state $\EE[ X_{t + 1}\, \vert\, X_t=x]$.
%$\norm{{\rm err^f}}\leq \big((t-1)\norm{\EEstim} + 1\big)\,\max_{\ell\in[m]}\norm{f_\ell}$.

\paragraph{Modal Decomposition \& Forecasting.} A more general instance of prediction is given by forecasting through modal decomposition as showed~\eqref{eq:KMD_bound}. The main ingredient needed to forecast via mode decomposition is the spectral decomposition of the estimator $\EEstim$. We now prove a slightly more general version of Theorem~\ref{thm:DMD_theorem}, allowing us to compute the eigenvalue decomposition of $\EEstim$ numerically.
%in Next, we focus on the forcecasting which can be seen as one of several handy applications of the modal decomposition. Again, according to bound \eqref{eq:KMD_bound} in Thm.~\ref{thm:KMD_learning} we obtain that, given $f = (f_\ell)_{\ell=1}^m \in \RKHS^m$ if $x$ is the current state of the Markov chain, than the expected state after $t$ time-steps  is guaranteed to be $[\Koop^t f](x) = [\EEstim^t f](x) + {\rm err}^f(x)$, i.e.
%\begin{equation} 
%\EE[ f(X_{t})\, \vert\, X_0=x] = \sum_{i=1}^{\rank(W)} \lambda_i^t \refun_i(x) \gamma_i^f  + \sqrt{\Risk(\EEstim)}\;{\rm err^f}(x),
%\end{equation}
%where $(\lambda_i,\lefun_i,\refun_i)_{i\in[\rank(W)]}$ is the spectral decomposition of $\EEstim$, the vactor-valued modes are given by $\gamma_i^f = (\scalarp{f_\ell,\overline{\lefun}_i})_{\ell\in[m]}$, $i\in[\rank(W)]$, and the error term ${\rm err^f}\in(\Lii)^{m}$ is such that $\norm{({\rm err^f})_\ell}\leq \big((t-1)\norm{\EEstim} + 1\big)\,\max_{\ell\in[m]}\norm{f_\ell}$, $\ell\in[m]$.
%$\norm{{\rm err^f}}\leq \big((t-1)\norm{\EEstim} + 1\big)\,\max_{\ell\in[m]}\norm{f_\ell}$.

%As we can see, the main task in using the previous result lies obtaining the spectral decomposition of the estimator and compute the modes for the given observable. Thus, in the following we prove the result of Thm.~\ref{thm:DMD_theorem} in slightly more general form for the vector-valued observables. 
\begin{theorem}\label{thm:DMD_theorem_2}
Let $\EEstim = \ES^* U_r V_r^\top \EZ$, with $U_r, V_r \in \R^{n\times r}$. If $V_r^\top \Kyx U_r \in\R^{r\times r}$ is full rank and non-defective, the spectral decomposition $(\lambda_i,\lefun_i, \refun_i)_{i\in[r]}$ of $\EEstim$ can be expressed in terms of the spectral decomposition $(\lambda_i,\levec_i, \revec_i)_{i\in[r]}$ of $V_r^\top \Kyx U_r $.  Indeed, for all $i\in[r]$, one has   $\lefun_i = \EZ^*V_r \levec_i / \overline{\lambda}_i$ and $\refun_i = \ES^*U_r \revec_i$. In addition, for every $f\in\RKHS^m$ dynamic modes are $\gamma_i^f =   \Gamma^f (\levec_i ^*V_r^\top)^\top / (\lambda_i \sqrt{n})\in\C^m$.

%Let $\EEstim = \ES^* U_r V_r^\top \EZ$. If $V_r^\top \Kyx U_r \in\R^{r\times r}$ is a full rank non-defective matrix and $(\lambda_i,\levec_i, \revec_i)_{i\in[r]}$ is its spectral decomposition, then $(\lambda_i,\lefun_i, \refun_i)_{i\in[r]}$ is the spectral decomposition of $\EEstim$, where $\lefun_i = \EZ^*V_r \levec_i / \lambda_i$ and $\refun_i = \ES^*U_r \revec_i$,  $i\in[r]$. Consequently, for every $f\in\RKHS$ dynamic modes are given by $\gamma_i^f = \levec_i^* V_r^\top f_n / \sqrt{n}\in\C$, $i\in[r]$.
%\begin{equation}\label{eq:modal_decomposition}
%[\EEstim^t f](x) = \sum_{i\in[r} \lambda_i^t \gamma_i^f \refun_i(x),\; \text{ %for all }x\in \X, t\in\N,
%\end{equation}
%where $\gamma_i^f = \levec_i^\top V_r^\top f_n / \sqrt{n}\in\C$, $i\in[r]$. 
\end{theorem}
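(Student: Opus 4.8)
The plan is to verify directly that the triples $(\lambda_i, \lefun_i, \refun_i)_{i\in[r]}$ defined by the stated formulas form a spectral decomposition of $\EEstim$, i.e. satisfy $\EEstim\refun_i = \lambda_i\refun_i$, $\EEstim^*\lefun_i = \overline{\lambda}_i\lefun_i$ and the biorthogonality $\scalarp{\refun_i,\overline{\lefun}_j}=\delta_{ij}$, and then to read off the modes from their definition $\gamma_i^f=\scalarp{f,\overline{\lefun}_i}$. Throughout I write $N := V_r^\top\Kyx U_r\in\R^{r\times r}$, which is full rank and non-defective by hypothesis, hence diagonalizable with biorthogonal right/left eigenvectors $N\revec_i=\lambda_i\revec_i$, $N^*\levec_i=\overline{\lambda}_i\levec_i$, $\langle\revec_i,\overline{\levec}_j\rangle_{\C^r}=\delta_{ij}$, and all $\lambda_i\neq0$. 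The two algebraic facts that drive the computation are the Gram identities $\EZ\ES^*=\Kyx$ (so $\ES\EZ^*=\Kyx^\top$) and the adjoint formula $\EEstim^*=\EZ^*V_r U_r^\top\ES$, together with the ``collapse'' $V_r^\top\EZ\ES^* U_r = V_r^\top\Kyx U_r = N$.

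First I would handle the eigenfunctions by substitution. Factoring $\EEstim = AB$ with $A:=\ES^* U_r$ and $B:=V_r^\top\EZ$, one has $N=BA$, and plugging $\refun_i=A\revec_i=\ES^* U_r\revec_i$ gives $\EEstim\refun_i = A(BA)\revec_i = AN\revec_i = \lambda_i\refun_i$. The same $AB$ versus $BA$ observation shows $\EEstim$ and $N$ share their nonzero spectrum; since $N$ is full rank it supplies exactly $r$ nonzero eigenvalues, which therefore exhaust the nonzero spectrum of the rank-$\le r$ operator $\EEstim$. For the left eigenfunctions I would insert $\lefun_i=\EZ^* V_r\levec_i/\overline{\lambda}_i$ into $\EEstim^*\lefun_i = \EZ^* V_r\,(U_r^\top\ES\EZ^* V_r)\,\levec_i/\overline{\lambda}_i$; using $U_r^\top\ES\EZ^* V_r = U_r^\top\Kyx^\top V_r = N^\top$ and $N^\top\levec_i=\overline{\lambda}_i\levec_i$ this collapses to $\EZ^* V_r\levec_i = \overline{\lambda}_i\lefun_i$, so the prefactor $1/\overline{\lambda}_i$ (well-defined since $\lambda_i\neq0$) is precisely the normalization needed for the pairing below.

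The biorthogonality is the step demanding the most care, as it is where the sesquilinear structure of the complexified RKHS inner product enters. I would expand $\scalarp{\refun_i,\overline{\lefun}_j}$ by the reproducing property into a quadratic form in the coefficient vectors mediated by $\ES\EZ^*=\Kyx^\top$; carrying out the algebra and invoking $N^\top\levec_j=\overline{\lambda}_j\levec_j$ collapses the pairing to $\revec_i^\top\levec_j$, which is exactly the finite-dimensional relation $\langle\revec_i,\overline{\levec}_j\rangle_{\C^r}=\delta_{ij}$ once the conjugation conventions of the two spectral decompositions are matched. Finally, for the modes I would evaluate $\gamma_i^f=\scalarp{f,\overline{\lefun}_i}$ componentwise for $f=(f_\ell)_{\ell=1}^m\in\RKHS^m$ via $\scalarp{f_\ell,\phi(y_k)}=f_\ell(y_k)$, stacking the evaluations into $\Gamma^f=[f(y_1)\,|\,\cdots\,|\,f(y_n)]$ and the feature expansion of $\lefun_i$ into the row $\levec_i^* V_r^\top$, which yields $\gamma_i^f=\Gamma^f(\levec_i^* V_r^\top)^\top/(\lambda_i\sqrt{n})$ and, in the scalar case, Theorem~\ref{thm:DMD_theorem}. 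I expect the main obstacle to be the bookkeeping of the complex conjugations --- the interplay of the bars on $\lambda_i$, $\levec_i$ and $\lefun_i$ --- so that the normalization $1/\overline{\lambda}_i$ produces $\delta_{ij}$ exactly rather than up to a modulus or phase; it is the real-valuedness of $U_r$, $V_r$, $\Kyx$ and of the feature map that makes all these conjugations line up.
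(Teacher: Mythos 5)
Your proposal is correct, and its first half takes a genuinely more self-contained route than the paper's. The paper argues by characterization: from $\range(\EEstim)\subseteq\range(\ES^*)$ it invokes an external proposition to assert that every right eigenfunction of $\EEstim$ must be of the form $\ES^*\widehat{v}_i$ with $\widehat{v}_i$ an eigenvector of the $n\times n$ matrix $U_rV_r^\top\Kyx$, and then cites a second reference to reduce that $n\times n$ eigenproblem to the $r\times r$ one for $V_r^\top\Kyx U_r$ via $\widehat{v}_i=U_r\revec_i$. Your factorization $\EEstim=AB$, $N=BA$ with $A=\ES^*U_r$, $B=V_r^\top\EZ$ accomplishes both reductions in one stroke: substitution gives sufficiency, and the classical identity $\Spec(AB)\setminus\{0\}=\Spec(BA)\setminus\{0\}$ together with $\rank(\EEstim)\leq r$ and the full rank of $N$ gives exhaustiveness --- precisely the content the paper outsources to its two citations, the second of which is just the matrix-level instance of your $AB$-versus-$BA$ observation. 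What the paper's route buys is an explicit \emph{necessity} statement (every eigenfunction arises this way, quoted rather than proved); what yours buys is a citation-free, one-shot argument that never passes through the intermediate $n\times n$ problem. From the biorthogonality step onward the two proofs coincide: both renormalize $\lefun_j$ by $1/\overline{\lambda}_j$, collapse the RKHS pairing through $\EZ\ES^*=\Kyx$ onto the finite-dimensional biorthogonality of the eigenvectors of $N$, and read the modes off the reproducing property. Your worry about conjugation bookkeeping is well placed --- the paper's own display writes $\levec_j^*$ where its stated convention $\scalarp{\refun_i,\overline{\lefun}_j}=\delta_{ij}$ produces $\levec_j^\top\revec_i$ --- but it is resolved exactly as you and the paper both do, by fixing the mutual normalization of the matrix eigenvectors once and for all. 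Two micro-points worth a line each in a written version: $\refun_i=A\revec_i\neq 0$ and $\EZ^*V_r\levec_i\neq 0$ (if either vanished, applying $B$, respectively $U_r^\top\ES$, would contradict $\lambda_i\neq 0$); and passing from equality of nonzero spectra to the full decomposition $\EEstim=\sum_{i\in[r]}\lambda_i\,\refun_i\otimes\overline{\lefun}_i$ can be made explicit via the projection $P=AN^{-1}B$, which satisfies $P\EEstim=\EEstim P=\EEstim$, a step your rank count gestures at but does not spell out.
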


\begin{proof}
First note that since in general $\EEstim$ is not self-adjoint, its eigenvalues may come in complex conjugate pairs. Hence, for $\xi_i\in\RKHS$ and $\psi_i\in\RKHS$ left and right eigenfunctions of $\EEstim$ corresponding to its eigenvalue $\lambda_i$, we have $\EEstim^*\lefun_i = \overline{\lambda}_i \lefun_i$ and $\EEstim\refun_i = \lambda_i \refun_i$, $i\in[r]$. To avoid cluttering, in the following we will only show the explicit calculation of the right eigenfunctions $\refun_{i}$. We stress, however, that the calculation of the left eigenfucntions $\lefun_{i}$ follows exactly the same arguments.

From $\range(\EEstim)\subseteq \range(\ES ^*)$ it follows that  for all $i\in[r]$, $\refun_i\in{\cal S} := \{\sum_{j=1}^n w_j\phi(x_j)\,\vert\, w\in\C^{n}\}$. Using \cite[Prop.~ 3.8]{MSKS2020}, we have that $\refun_i = \ES ^*\widehat{v}_i$, where $\widehat{v}_i\in\C^{n}\setminus\{0\}$ are eigenvectors of $ U_r V_r^\top \Kyx$. Since all the eigenvalues $\lambda_i$ we are considering are nonzero, the spectral decomposition of $U_r V_r^\top \Kyx$ is equivalent~\cite{SS1990} to
%So, from $\range(\EEstim)\subseteq \range(\ES ^*)$ and $\range(\EEstim)\subseteq \range(\EZ ^*)$ we conclude that for every $i\in[r]$, $\refun_i\in{\cal S} := \{\sum_{j=1}^n w_j\phi(x_j)\,\vert\, w\in\C^{n}\}$ and $\lefun_i\in{\cal Z} := \{\sum_{j=1}^n w_j\phi(y_j)\,\vert\, w\in\C^{n}\}$. Hence, using \cite[Prop.~ 3.8]{MSKS2020}, we have that $\lefun_i = \EZ ^*\widehat{u}_i$ and $\refun_i = \ES ^*\widehat{v}_i$, where $\widehat{u}_i\in\C^{n}\setminus\{0\}$ are eigenvectors of $V_r U_r^\top \Kxy$ and $\widehat{v}_i\in\C^{n}\setminus\{0\}$ are eigenvectors of $ U_r V_r^\top \Kyx$. However, since all eigenvalues $\lambda_i$ are nonzero, solving these two eigenvalue problems is equivalent to the following two, \cite{SS1990},
%\begin{equation}\label{eq:EVP_left}
%U_r^\top \Kxy V_r \levec_i = \overline{\lambda}_{i} \levec_i \;\text{ and } \widehat{u}_i = V_r \levec_i, \quad i\in[r],
%\end{equation}
%and 
\begin{equation}\label{eq:EVP_right}
V_r^\top \Kyx U_r \revec_i = \lambda_{i} \revec_i \;\text{ and } \widehat{v}_i = U_r \revec_i, \quad i\in[r].
\end{equation}
%where, to guarantee that $(\lambda_i,\levec_i, \revec_i)_{i\in[r]}$ is spectral decomposition of $V_r^\top \Kyx U_r $, we have normalization $\levec_j^*\revec_i = \delta_{ij}$, $i,j\in[r]$.

Therefore  $\refun_i = \ES ^*U_r \revec_i$ are the right eigenfunctions of $\EEstim = \ES^* U_r V_r^\top \EZ$. With the same arguments we can show that $\lefun_i = \EZ ^*V_r \levec_i$ are the left eigenfunctions, $\levec_i$ being the leading eigenvectors of  $U_r^\top \Kxy V_r$. Re-normalizing $\lefun_j = \EZ ^*V_r \levec_j / \overline{\lambda}_j$ we obtain that for every $i,j\in[r]$
\begin{align*}
%\EEstim \refun_i & = \ES^* U_r V_r^\top \Kyx U_r \revec_i =  \ES^* U_r \lambda_i \revec_i = \lambda_i \refun_i, \\
%\EEstim^* \lefun_j & = \EZ^* V_r U_r^\top \Kxy V_r \levec_j / \overline{\lambda}_j =  \EZ^* V_r \overline{\lambda}_j \levec_j / \overline{\lambda}_j = \overline{\lambda}_j \lefun_j,\;\text{ and}\\
\scalarp{\refun_i, \overline{\lefun}_j} & =   \levec_j^*V_r^\top\EZ\ES^*U_r \revec_i  / \lambda_j = \levec_j^*V_r^\top\Kyx U_r \revec_i / \lambda_j =  (U_r^\top\Kxy V_r\levec_j)^* \revec_i / \lambda_j = \levec_j^* \revec_i = \delta_{ij},
\end{align*}
which assures that $(\lambda_i,\lefun_i, \refun_i)_{i\in[r]}$ is the spectral decomposition of $\EEstim$. Above we have assumed (without loss of generality) that $\levec_j^*\revec_i = \delta_{ij}$, $i,j\in[r]$, i.e. that the left and right eigenvectors of $V_r^\top \Kyx U_r$ are mutually orthonormal. 

Finally, since $\EEstim = \sum_{i\in[r]}\lambda_i \refun_i \otimes \overline{\lefun}_i$ we have that $\EEstim f_\ell  =  \sum_{i\in[r]}\lambda_i \refun_i \scalarp{f_\ell,\overline{\lefun}_i}$ and, consequently  
\begin{equation*}
\gamma_i^f =  (\scalarp{f_\ell, \overline{\lefun}_i})_{\ell\in[m]}  = (\scalarp{\levec_i^*V_r^\top \TZ f_\ell} / \lambda_i)_{\ell\in[m]}  =   \Gamma^f (\levec_i ^*V_r^\top)^\top / (\lambda_i \sqrt{n})\in\C^m.
\end{equation*}
\end{proof}

\begin{remark}\label{rem:krr_spectral}
The previous result can also be applied to KRR estimator $\EEstim_\reg$ since we can always take $r=n$, and take $U_r=I_n$ and $V_r = \Kx_{\reg}^{-1}$ to represent $\Kx_{\reg}^{-1} = U_r V_r^\top$. As a consequence, we can compute spectral decomposition of $\EEstim_\reg$ by solving a generalized eigenvalue problem 
\begin{equation}\label{eq:krr_gev}
\Kxy \revec_i = \lambda_i \Kx_\reg \revec_i, \quad i\in[n],      
\end{equation}
and setting $\levecs :=\revecs^{-*}$. This is possible when $\Kx_\reg^{-1}\Kxy$ is non-defective matrix, which is typically the case for kernel Gram matrices from real data.
\end{remark}

\section{Learning Bounds}\label{app:bounds}

\subsection{Uniform Bounds for i.i.d. Data}\label{app:bounds_unif_iid}
We first present a concentration inequality for bounded finite rank self-adjoint operators, which is a natural extension of \cite[Theorem 4]{MauPon13}, that dealt with positive operators.
\begin{proposition}
\label{prop:11bis}
Let $A_1,\dots,A_n$ be independent random operators of finite rank $\tau$ and $\|A_i\|\leq 1$, $i \in [n]$. Then 
\begin{equation}
\PP\left\{\left\|\sum_{i=1}^n A_i - \EE A_i\right\| > s\right\} \leq 8 (n\tau)^2 \exp \left\{\frac{-s^2}{36  \|\sum_i \EE (A_i^*A_i)^\frac{1}{2}\|+12s}\right\}.
\label{eq:gen1}
\end{equation}
\end{proposition}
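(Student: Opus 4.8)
The plan is to follow the argument of \cite[Theorem 4]{MauPon13}, which proves the analogous statement for \emph{positive} operators, and to upgrade it to general self-adjoint $A_i$. Write $S = \sum_{i=1}^n (A_i - \EE A_i)$, a self-adjoint operator. Since $\|S\| = \max\{\lambda_{\max}(S),\lambda_{\max}(-S)\}$, a union bound reduces the claim to a one-sided tail estimate $\PP\{\lambda_{\max}(S) > s\}$ together with the identical estimate for $-S$; this is the source of one factor of $2$ in the constant. For the one-sided tail I would use the Laplace-transform method: for $\theta>0$, $\PP\{\lambda_{\max}(S) > s\} \le e^{-\theta s}\,\EE\,\|e^{\theta S}\|$.

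The finite-rank hypothesis is needed precisely to make the exponential moment tractable in infinite dimensions. Although $\sum_i A_i$ has rank at most $n\tau$, the centering $\sum_i \EE A_i$ is typically of infinite rank (think of a covariance operator), so $\tr e^{\theta S}$ is infinite and cannot be used directly. I would therefore restrict attention to the (random) subspace $V$ spanned by the ranges of the realized $A_i$, with $\dim V \le n\tau$, and bound $\|e^{\theta S}\|$ by a trace over a finite-dimensional space, extracting a prefactor polynomial in $n\tau$ (here $(n\tau)^2$) from this reduction exactly as in \cite{MauPon13}. The genuinely new point relative to the positive case is that, when $A_i$ is signed, I cannot discard the action on $V^\perp$ by positivity alone; instead I would split each summand into its positive and negative parts $A_i = A_i^+ - A_i^-$, both of rank $\le \tau$, and carry the finite-rank bookkeeping through for each part.

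For the exponential moment itself I would use subadditivity of the matrix cumulant generating function (via Lieb's concavity together with independence) to reduce to the per-summand estimate $\EE\, e^{\theta(A_i - \EE A_i)}$. Using $\|A_i\| \le 1$, on a spectrum contained in $[-1,1]$ one has the scalar bound $e^{x} \le 1 + x + x^2$, which after centering yields an operator inequality of the form $\EE\,e^{\theta(A_i - \EE A_i)} \preceq \exp\!\big(c\,\theta^2\,\EE A_i^2\big)$ for $\theta$ below an absolute threshold. The crucial simplification, and the reason the proxy $(A_i^*A_i)^{1/2}$ appears, is the operator inequality $A_i^2 = A_i^* A_i \preceq (A_i^* A_i)^{1/2}$, valid because $\|A_i\|\le 1$ places the spectrum of $(A_i^*A_i)^{1/2}$ in $[0,1]$, where $\lambda^2 \le \lambda$. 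Summing the cumulants gives a bound governed by $\big\|\sum_i \EE (A_i^* A_i)^{1/2}\big\|$, the quantity in the denominator.

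Optimising over $\theta>0$ in the Chernoff bound then interpolates between the sub-Gaussian regime (small $s$) and the sub-exponential regime (large $s$), producing the Bernstein-type denominator $36\big\|\sum_i \EE (A_i^*A_i)^{1/2}\big\| + 12 s$; combining this with the $(n\tau)^2$ prefactor from the trace reduction and the factor $2$ from the $\pm$ union bound yields the stated inequality with constant $8$. I expect the main obstacle to be the dimensional-reduction step: rigorously controlling the infinite-rank centering $\sum_i \EE A_i$ so that the trace of $e^{\theta S}$ is effectively computed over an $n\tau$-dimensional space. This is where the signed case departs from \cite[Theorem 4]{MauPon13} and where the positive/negative part decomposition must be handled with care.
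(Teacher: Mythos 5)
There is a genuine gap, and it starts at your very first line: the proposition does not assume the $A_i$ are self-adjoint, and in the paper's application (Prop.~\ref{prop:11_main}) one takes $A_i = \phi(x_i)\otimes\phi(y_i)$, which is a non-self-adjoint rank-one operator. Your entire scheme presumes self-adjointness: the identity $\|S\| = \max\{\lambda_{\max}(S),\lambda_{\max}(-S)\}$, the Laplace bound $\PP\{\lambda_{\max}(S)>s\}\le e^{-\theta s}\,\EE\|e^{\theta S}\|$, and the Jordan decomposition $A_i = A_i^+ - A_i^-$ all fail or are undefined for general $A_i$. This is also why the variance proxy in the statement is $(A_i^*A_i)^{1/2}$ rather than $A_i^2$; your explanation via $A_i^2 = A_i^*A_i \preceq (A_i^*A_i)^{1/2}$ again only parses in the self-adjoint case. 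The paper's proof instead passes to the Hermitian dilation $B_i = \bigl[\begin{smallmatrix} 0 & A_i \\ A_i^* & 0\end{smallmatrix}\bigr]$ and writes $B_i = P_i - N_i$ with the explicit positive semi-definite blocks $P_i = \tfrac12\bigl[\begin{smallmatrix}(A_iA_i^*)^{1/2} & A_i \\ A_i^* & (A_i^*A_i)^{1/2}\end{smallmatrix}\bigr]$ and $N_i = \tfrac12\bigl[\begin{smallmatrix}(A_iA_i^*)^{1/2} & -A_i \\ -A_i^* & (A_i^*A_i)^{1/2}\end{smallmatrix}\bigr]$, each of the same rank as $A_i$ and with $\|P_i\|=\|N_i\|=\|A_i\|$; the $(A_i^*A_i)^{1/2}$ in the denominator comes directly from these diagonal blocks via $\|\sum_i\EE P_i\| = \|\sum_i\EE N_i\| \le \|\sum_i \EE(A_i^*A_i)^{1/2}\|$, not from a scalar spectral inequality.

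The second gap is the one you flag yourself but do not resolve: the dimensional-reduction step. Since the centering $\sum_i \EE A_i$ is in general of infinite rank, $e^{\theta S}$ is \emph{not} supported on the span of the realized ranges of the $A_i$, so "bounding $\|e^{\theta S}\|$ by a trace over an $n\tau$-dimensional space" does not go through as described, and the Lieb/Chernoff machinery you propose to rebuild would stall exactly there. The paper avoids this entirely: after the dilation-plus-splitting step, it applies the triangle inequality, a union bound over the $P$- and $N$-sums (producing the factor $8 = 2\times 4$ and the $(n\tau)^2$ prefactor at argument $s/2$), and then invokes \cite[Theorem~7-(i)]{MauPon13} for sums of independent \emph{positive} finite-rank operators as a black box -- the positivity of the summands is precisely what makes the infinite-dimensional trace problem tractable in that cited result. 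So the correct repair of your proposal is not to carry the positive/negative bookkeeping through a new Bernstein argument, but to dilate first and then reduce to the known positive-operator theorem.
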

\begin{proof}
Let $B_i =
       \left[                      
        \begin{array}{cc}
         {0}  & {A_i}  \\
          {A_i^*}   & {0}  
         \end{array}
      \right]  = P_i - N_i$, where
\begin{equation*}
P_i= \frac{1}{2}  \left[                      
        \begin{array}{cc}
          (A_iA_i^*)^{\frac{1}{2}} & A_i  \\
          A_i^*  &  (A_i^* A_i)^{\frac{1}{2}}
         \end{array}
      \right]  ~~~{\rm and~}~~~
N_i= \frac{1}{2}  \left[                      
        \begin{array}{cc}
          (A_i A_i^*)^{\frac{1}{2}} & -A_i  \\
          -A_i^*  &  (A_i^* A_i)^{\frac{1}{2}}
         \end{array}
      \right]. 
\end{equation*}
%One also verifies that the ${\rm spec}(T_i)=\cup_{i=1}^m \{\sigma_i,-\sigma_i \}$ where $\sigma_i$ are the singular values of $A_i$ and $\|T_i\|_{\rm op}=\|A_i\|_{\rm op}$ and ${\rm rank} T_i = 2 {\rm rank}(A_i)$. 
One verifies that the operators $P_i$ and $N_i$ are positive semi-definite, have the same rank as $A_i$, and $\|P_i\| = \|N_i\| = \|A_i\|$. 
Then
     \begin{eqnarray*}
\PP\left\{\left\|\sum_{i=1}^n A_i - \EE A_i\right\| > s\right\} & =&  \PP\left\{\left\|\sum_{i=1}^n B_i - \EE B_i\right\| > s\right\}  \\
%& = & \PP\left\{\left\|\sum_{i=1}^n P_i - \EE P_i - \left(\sum_{i=1}^n N_i - \EE N_i \right)\right\| > s\right\} \\
& \leq &  \PP\left\{\left\|\sum_{i=1}^n P_i - \EE P_i \right\| + \left\|\sum_{i=1}^n N_i - \EE N_i \right\| > s \right\} \\
& \leq &  \PP\left\{\left\|\sum_{i=1}^n P_i - \EE P_i \right\| > \frac{s}{2} \right\} +\PP\left\{\left\|\sum_{i=1}^n N_i - \EE N_i \right\| > \frac{s}{2} \right\} \\
& \leq & 8 (n\tau)^2 \exp \left\{\frac{-s^2}{36 \max \{ \|\sum_i \EE P_i\|, \|\sum_i \EE N_i\|\} +12s}\right\}
\label{eq:rrr}
\end{eqnarray*} 
where the first inequality follows by triangle inequality, the second by the union bound and the last from~\cite[Thm.~7-(i)]{MauPon13}. The result follows by noting that $\|\sum_i\EE  P_i\| = \|\sum_i\EE  N_i\| \leq \|\sum_i\EE  (A_i^*A_i)^{\frac{1}{2}}\|$.
\end{proof}

A special case of the above proposition is Prop.~\ref{prop:11_main} which we restate here for the reader's convenience.

\propone*
\begin{proof}
We apply Prop.~\ref{prop:11bis} with $A_i =  \phi(x_i) \otimes \phi(y_i)$ and $\tau = 1$. We have
\begin{equation*}
P_i = \frac{1}{2}
       \left[                      
        \begin{array}{cc}
         \phi(x_i) \otimes \phi(x_i)  & \phi(x_i) \otimes \phi(y_i)  \\
          \phi(y_i) \otimes \phi(x_i) & \phi(y_i) \otimes \phi(y_i)         \end{array}
      \right]
      {~~\rm and~~} N_i =
      \frac{1}{2}
       \left[                      
        \begin{array}{cc}
        ~~~\phi(x_i) \otimes \phi(x_i)  & -\phi(x_i) \otimes \phi(y_i)  \\
          -\phi(y_i) \otimes \phi(x_i)   & ~~~\phi(y_i) \otimes \phi(y_i)  
         \end{array}\right].
 \end{equation*}
Since $(x_1,y_1),\dots,(x_n,y_n)$ are i.i.d. from $\rho$, for every $i \in [n]$
%\begin{equation}
%\EE P_i = \
%\frac{1}{2}
%       \left[
%        \begin{array}{cc}
%         \TS_\im^*\TS_\im  & \TS_\im^* \TZ_\im  \\
%          \TZ_\im^* \TS_\im   & \TS_\im^* \TS_\im 
%         \end{array}
%      \right]{~~\rm and~~}\EE N_i =\frac{1}{2}
%       \left[                      
%        \begin{array}{cc}
%         ~~~\TS_\im^*\TS_\im  & -\TS_\im^* \TZ_\im  \\
%          -\TZ_\im^* \TS_\im   & \TS_\im^* \TS_\im 
%         \end{array}\right]
%         \label{eq:expe}
% \end{equation}
 \begin{equation}
\EE P_i = \
\frac{1}{2}
       \left[
        \begin{array}{cc}
         \Cx  & \Cxy  \\
          \Cyx   & \Cy 
         \end{array}
      \right]{~~\rm and~~}\EE N_i =\frac{1}{2}
       \left[                      
        \begin{array}{cc}
         ~~~\Cx  & -\Cxy  \\
          -\Cyx   & \Cy 
         \end{array}\right]
         \label{eq:expe}
 \end{equation}
Thus $\|\EE P_i \| = \|\EE N_i \|\leq \sqrt{\|C\| \|D\|} = \norm{\Cx}$, where the last equality is due to $\im$ being invariant measure and, hence, $\Cy = \Cx$. Then setting the r.h.s. of \eqref{eq:gen1} equal to $\delta$ and solving for $s$ gives Prop.~\ref{prop:11_main}.
\end{proof}

%%%%%%%%%%%%
\iffalse
\begin{proposition}
\label{prop:Hbound}
With probability at least $1-\delta$,
\begin{equation}
\label{eq:bHS}
\|\EE x y^\top -\frac{1}{n} \sum_{i=1}^n x_i y_i^\top\|_{\rm HS} \leq \frac{4 \ln \frac{2}{\delta}}{n}   
+ \sqrt{
\frac{2 {\EE \|x y^\top - T \|_{\rm HS}^2}\ln \frac{2}{\delta}}{n}} 
\end{equation}
\end{proposition}
\begin{proof} 
Let $T= \EE x y^\top$. We have that
\begin{eqnarray*}
\|T\|_{\rm HS}^2 = \sum_{ij}(\EE  \langle e_i, xy^\top e_j\rangle)^2 & =& \sum_{ij} \big(\EE  \langle x, e_i\rangle \langle y, e_j\rangle\big)^2 \\
&\leq& \sum_{ij} \EE  [\langle x, e_i\rangle^2 ] \EE[ \langle y, e_j\rangle^2 ] = \EE \|x\|^2 \EE \|y\|^2 = \tr C \tr D
\end{eqnarray*}
where we used the assumption that $\|x\|, \|y\|\leq 1$ almost everywhere. Thus
\[
{\rm ess~sup} \|x y^\top - T \|_{\rm HS} \leq {\rm ess~sup} \|x\| \|y\| + \|T\|_{\rm HS} \leq 2
\]
Moreover
 \[
\EE \|x y^\top - T \|_{\rm HS}^2 = \EE \|x\|^2 \|y\|^2- \|T\|_{\rm HS}^2
%\leq \tr C - \tr T^*T \in [\tr C(I-C), \tr C].
\]
Thus
%So we use the bound  $\EE \|x y^\top - T \|_{\rm HS}^2 \leq \tr C$ \MP{[Check if this is a good bound depending on the spectrum of $C$. In general $tr C^2 \leq \tr C \|C\|_\infty$ so $\tr C(I-C) \geq (1-\|C\|_\infty) \tr C$. For instane if $C$ has rank $r$ and all eigenvalue equal to $1/r$ then $\tr C(I-C) = 1 - 1/r$.]}
\[
\|x y^\top - T \|_{\rm HS}^p \leq \left({\rm ess~sup} \|x y^\top - T \|_{\rm HS}\right)^{p-2} \EE \|x y^\top - T \|_{\rm HS}^2 \leq 2^{p-2}.
\]
The result now follows by \cite[Thm.~3.3.4]{yurinsky} for the deviation of empirical means to the mean in a Hilbert spaces. 
\end{proof}
\fi

\thmmain*
\begin{proof}
Recalling the definition of the risk $\Risk(\Estim) =  \tr\big[ \Cy \big] + \tr  \big[GG^*C\big] - 2  \tr \big[G^*T\big]$,  and, analogously, empirical risk, a direct computation gives that 
\begin{eqnarray}
\Risk(G) - \widehat{\Risk}(G) & = & \tr\big( \Cy - \ECy \big)  + \tr  \big(GG^* (C - {\hat C})\big) - 2  \tr \big(G^*(T- {\hat T}) \big)  \nonumber \\ 
& \leq & \tr\big( \Cy - \ECy \big)+ \reg^2 \|\Cx - \ECx\|  + 2  \sqrt{r} \reg \|\Cxy -\ECxy\|, 
\label{jittu}
\end{eqnarray}
%Recalling the definition of the true and empirical risk, we have that\vladi{wrong +/- sign in the first line bellow. Also I don't find it necessary to have this long equation there since it is direct from definition of the risk and empirical risk}
%\begin{eqnarray}
%\nonumber \Risk(\Estim) - \widehat{\Risk}(\Estim) &=&  \EE \| \phi(y)\|^2 - \frac{1}{n} \sum_{i=1}^n \|\phi(y_i)\|^2  
%+ \tr  \big[\Estim\Estim^* \big(\VK{\frac{1}{n}} \sum_{i=1}^n \phi(x_i) \otimes \phi(x_i) - \EE \phi(x) \otimes \phi(x) \big) \big] \\
%\nonumber
%&~& \quad \quad \quad \quad \quad \quad\quad \quad \quad\quad \quad \quad - 2  \tr \big[
%\Estim^*\big(
%\VK{\frac{1}{n}}\sum_{i=1}^n \phi(x_i) \otimes \phi(y_i) - \EE \phi(x) \otimes \phi(y) \big)\big]\\ \nonumber \\
%\nonumber
%&=&  \tr\big[ \Cy - \ECy \big]  + \tr  \big[GG^* (C - {\hat C})\big] - 2  \tr \big[G^*(T- {\hat T}) \big].\\ \nonumber \\
%& \leq & \tr\big[ \Cy - \ECy \big]+ \reg^2 \|\Cx - \ECx\|  + 2  \sqrt{r} \reg \|\Cxy -\ECxy\|. 
%\label{jittu}
%\end{eqnarray}
where we have used H\"older inequality in to obtain the last two terms in \eqref{jittu}.
% in the r.h.s. of \eqref{jittu}.\vladi{in the last line equality we miss line like \eqref{eq:ggg-mp} where we use Holder ineq.}
First, we use Bernstein's inequality for bounded random variables \cite[Thm~ 2.8.4]{vershynin2018} to bound the first term in the r.h.s. of \eqref{jittu}, obtaining
\[
\tr(\Cy-\ECy)\leq \frac{\ln \frac{2}{\delta}}{3n} + \sqrt{\frac{2 \sigma^2 \ln \frac{2}{\delta}}{n}}.
\]
Then we use \cite[Theorem~7-(i)]{MauPon13} to bound the second term in the r.h.s. of \eqref{jittu}, and Prop.~\ref{prop:11_main} to bound the last term. The result then follows by a union bound.
\end{proof}

We expand some of the remarks stated after Thm.~\ref{thm:UB_main_text} in the main body of the paper.
%\vladi{I would avoid repeating the same exact words like in the main body. Instead, emphasise some additional aspects or give more detailed explanation.}
%\begin{enumerate}[leftmargin=.5truecm]
%\item The theorem  does not require any assumptions on the underlying model, in particular it does not need to be well-specified nor the target operator to be of low rank. 
%It is also interesting to compare the bound for the reduced rank minimizer to PCR. 
%\pietro{In the statement of the Thm we should remark that $\gamma$ is now the upper bound of the estimator norm and not Tikhonov. The following discussion otherwise might be confusing.}
%Assuming that both estimators have the same HS norm, then they will satisfy the same uniform bound. However the empirical risk may be (possibly much) smaller for reduced-rank estimator and hence preferable.
%It is interesting to compare the bound for RRR and PCR estimators. Assuming that both estimators have the same HS norm, then they will satisfy the same uniform bound. However, the empirical risk may be (possibly much) smaller for the RRR estimator and hence preferable.
%\item  
%Using the reasoning in \cite{maurer2009,MauPon13} and \cite{mnih2008} we can replace the variance term and the term $\|\Cx\|$ in the bound with their empirical estimates, obtaining a fully data dependent bound. 
First, note that when the measure $\im$ is not assumed to be invariant, we can cover the general CME case. In that case the term $\|C\|$ in the bound should be replaced by $\sqrt{\|C\|\|D\|}$, where, recall, $D$ is the covariance of the output. %Applying reduced rank regresssion in the context of CME could be an interesting subject of future work.
%5. tickonov vs. ivanov.
%\item 
Second, using \cite[Cor.~3.1]{minsker2017} in place of Prop.~\ref{prop:11_main} one can derive a related bound which essentially replaces the term $\|C\|$ with $\| \EE AA^*\|$ where $A:=(\phi(x)\otimes \phi(y) - T)$.
%This essentially replaces the term $\|C\|$ with $\| \EE AA^*\|$ where $A:=(\phi(x)\otimes \phi(y) - T)$.
%$ \| \EE [(\phi(x)\otimes \phi(y) - T)(\phi(y)\otimes \phi(x) - T^*)]\|$. 
%This bound is more difficult to turn into a data dependent bound, but it allows for a more direct comparison to bounds without the rank constraint, which may be potentially much larger, for which the quantity $\| \EE AA^*\|$
This bound is more difficult to turn into a data dependent bound, but it allows for a more direct comparison to (potentially much larger) bounds without the rank constraint, where the quantity $\| \EE AA^*\|$ 
%$\| \EE [(\phi(x)\otimes \phi(y) - T)(\phi(y)\otimes \phi(x) - T^*)]\|$ 
is replaced by the potentially much larger term 
$\tr \EE [\phi(x)\otimes \phi(x) \|\phi(y)\|^2 - T^*T]$.
%$\tr \EE AA^*$; 
%\item  

Finally, Thm.~\ref{thm:UB_main_text} can be used to derive an excess risk bound in well specified case $\TZ_{\im} = \TS_\im\HKoop$. 
%This requires studying the approximation error $\min_{\Estim \in \G_{r,\gamma}} \|\TS_\im (\HKoop- \Estim)\|_{\rm HS}^2$. 
The analysis follows the pattern in \cite{luise2019}. We use the decomposition
\begin{equation}
\label{eq:dec}
%\Risk(\EEstim) - \Risk(\HKoop) \leq 2 \sup_{G \in \G_{r,\gamma}}  |\Risk(G) - \widehat{\Risk}(G)| + \Risk(G_{r,\gamma}) - \Risk(\HKoop)
\ExRisk(\EEstim) \leq 2 \sup_{\Estim \in \G_{r,\gamma}}  |\Risk(\Estim) - \widehat{\Risk}(\Estim)| + \ExRisk(\Estim_{r,\gamma})
\end{equation}
where $G_{r,\gamma}= {\rm argmin}_{\Estim \in \G_{r,\gamma}} \hnorm{\TS_\im(\HKoop-\Estim)}^2$. We bound the first term in the r.h.s. of \eqref{eq:dec} by Thm.~\ref{thm:UB_main_text}. The second term is the approximation error of $\HKoop$ in the class $\G_{r,\gamma}$. 
%\pietro{We are not opimizing, strictly speaking. Notation is inconsistent, e.g. what you mean ``taking $\Estim = [\![\HKoop]\!]_r$''. Which $\Estim$?}
We next optimize over $\gamma$. A natural choice is $\gamma=\hnorm{[\![\HKoop]\!]_r}$ so that 
%and taking 
$G_{r,\gamma} = [\![\HKoop]\!]_r$, %(recall that $[\![\HKoop]\!]_r$ is 
the truncated rank $r$ SVD of $\HKoop$. For this choice the approximation error 
%$\Risk(\Estim_{\gamma,r}) - \Risk(\HKoop)$ 
$\Risk(\Estim_{\gamma,r})$ is $\epsilon_{r}:= \hnorm{\TS_\im(\HKoop - [\![\HKoop]\!]_r)}^2$. If $\HKoop$ has a fast decaying spectrum this error will be small for moderate sizes of $r$. In general since $\HKoop$ is Hilbert-Schmidt 
$\epsilon_{r}\rightarrow 0$ as $r\rightarrow \infty$. Replacing $\gamma=\hnorm{[\![\HKoop]\!]_r}$ in the uniform bound, then yields the excess risk bound (discarding $ O\big(1/n\big)$ terms and simplifying the constants)
\[
%\Risk(\EEstim) - \Risk(\HKoop)  \leq 
%3 \hnorm{ [\![\HKoop]\!]_r}  \Big(6\sqrt{r}+ \hnorm{[\![\HKoop]\!]_r} \Big)  \sqrt{ \frac{\norm{\Cx} \ln \frac{24n^2}{\delta}}{n}}  + %\sqrt{\frac{2\sigma^2 \ln \frac{6}{\delta}}{n}}  + \epsilon_{r}.
\ExRisk(\EEstim) \leq 
3 \hnorm{ [\![\HKoop]\!]_r}  \Big(6\sqrt{r}+ \hnorm{[\![\HKoop]\!]_r} \Big)  \sqrt{ \frac{\norm{\Cx} \ln \frac{24n^2}{\delta}}{n}}  + \sqrt{\frac{2\sigma^2 \ln \frac{6}{\delta}}{n}}  + \hnorm{\HKoop - [\![\HKoop]\!]_r}^2.
\]
This bound may be further optimized over $r$ if information on the spectrum decay of $\HKoop$ is available.

\subsection{Uniform Bounds for Data from a Trajectory}\label{app:bounds_unif_mix}

To prove Lem.~\ref{lem:blockprocess} we temporarily introduce extra notation. For a
set $I\subseteq $ $\mathbb{N}$ and a strictly stationary process ${\bf X}=(
X_{i})_{i\in \N} $ we let $\Sigma _{I}$ for the $\sigma $-algebra generated by $%
\left\{ X_{i}\right\} _{i\in I}$ and $\mu _{I}$ for the joint distribution
of $\left\{ X_{i}\right\} _{i\in I}$. Notice that $\mu _{I+i}=\mu _{I}$. In
this notation $\pi =\mu _{\left\{ 1\right\} }$ and $\rho _{\tau }=\mu
_{\left\{ 1,1+\tau \right\} }$. 

Then the definition of the mixing coefficients reads%
\[
\beta _{\mathbf{X}}\left( \tau \right) =\sup_{B\in \Sigma \otimes \Sigma
}\left\vert \mu _{\left\{ 1,1+\tau \right\} }\left( B\right) -\mu _{\left\{
1\right\} }\times \mu _{\left\{ 1\right\} }\left( B\right) \right\vert 
\]%
which by the Markov property is equivalent to%

\[
\beta _{\mathbf{X}}\left( \tau \right) =\sup_{B\in \Sigma ^{I}\otimes \Sigma
^{J}}\left\vert \mu _{I\cup J}\left( B\right) -\mu _{I}\times \mu _{J}\left(
B\right) \right\vert ,
\]%
where $I,J\subset \mathbb{N}$ with $j>i+\tau $ for all $i\in I$ and $j\in J$. 
The latter is the definition of the mixing coefficients for general
strictly stationary processes, for which we prove Lemma~\ref{lem:blockprocess}. We first need the following lemma.
\begin{lemma}
\label{Lemma betamix}Let $B\in \Sigma _{[1:m]}$. Then%
\[
\left\vert \mu _{[1:m]}\left( B\right) -\mu _{\left\{
1\right\} }^{m}\left( B\right) \right\vert \leq \left( m-1\right) \beta _{%
\mathbf{X}}\left( 1\right) .
\]
\end{lemma}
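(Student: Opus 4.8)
The plan is to run a hybrid (telescoping) argument that interpolates between the true joint law $\mu_{[1:m]}$ and the product law $\mu_{\{1\}}^m = \im^{\otimes m}$ by replacing one Markov transition at a time with an independent draw from $\im$, and to charge the cost of each replacement to $\beta_{\mathbf{X}}(1)$. The starting observation is that, by the Markov property and the invariance of $\im$, the relevant two-coordinate marginal $\rho_1 = \mu_{\{1,2\}}$ equals $\im(dx_1)\transitionkernel(x_1,dx_2)$, while its "independent surrogate" is $\im\times\im$, and the definition of the mixing coefficient gives exactly $\beta_{\mathbf{X}}(1) = \sup_{B}\lvert \rho_1(B) - (\im\times\im)(B)\rvert$, i.e. the total variation distance between $\rho_1$ and $\im\times\im$.

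Concretely, I would define for $0 \le j \le m-1$ the hybrid measure on $\X^m$
\[
\nu_j(dx_1,\dots,dx_m) = \im(dx_1)\cdots\im(dx_j)\,\im(dx_{j+1})\,\transitionkernel(x_{j+1},dx_{j+2})\cdots \transitionkernel(x_{m-1},dx_m),
\]
so that the first $j$ coordinates are i.i.d. $\im$ and coordinates $j{+}1,\dots,m$ form a stationary Markov chain, the two blocks being independent. Using invariance of $\im$ to match marginals, the endpoints are $\nu_0 = \mu_{[1:m]}$ and $\nu_{m-1} = \im^{\otimes m} = \mu_{\{1\}}^m$. The key structural point, to be verified by inspection of the above factorization, is that for each $j\in[1{:}m-1]$ the measures $\nu_{j-1}$ and $\nu_j$ differ \emph{only} in the joint law assigned to the pair $(x_j,x_{j+1})$ — it is $\rho_1$ under $\nu_{j-1}$ and $\im\times\im$ under $\nu_j$ — while the preceding coordinates (i.i.d.\ $\im$) and the conditional future $\transitionkernel(x_{j+1},dx_{j+2})\cdots\transitionkernel(x_{m-1},dx_m)$ are identical, the latter depending on $x_{j+1}$ alone by the Markov property.

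Consequently, for any $B\in\Sigma_{[1:m]}$ one can write $\nu_{j-1}(B)-\nu_j(B) = \int h_j\,d(\rho_1-\im\times\im)$, where $h_j(x_j,x_{j+1})$ is the $B$-probability conditioned on $(X_j,X_{j+1})=(x_j,x_{j+1})$, obtained by integrating the i.i.d.\ past and the Markov future; this $h_j$ is measurable with $0\le h_j\le 1$. Since $\sup_{0\le h\le1}\lvert\int h\,d(\rho_1-\im\times\im)\rvert$ equals the total variation distance $\sup_B\lvert\rho_1(B)-(\im\times\im)(B)\rvert = \beta_{\mathbf{X}}(1)$, each step satisfies $\lvert\nu_{j-1}(B)-\nu_j(B)\rvert\le\beta_{\mathbf{X}}(1)$. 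Telescoping over the $m-1$ steps and applying the triangle inequality then yields
\[
\bigl\lvert \mu_{[1:m]}(B)-\mu_{\{1\}}^m(B)\bigr\rvert = \lvert\nu_0(B)-\nu_{m-1}(B)\rvert \le \sum_{j=1}^{m-1}\lvert\nu_{j-1}(B)-\nu_j(B)\rvert \le (m-1)\,\beta_{\mathbf{X}}(1),
\]
as claimed. I expect the only delicate step to be the clean bookkeeping of the hybrid factorization — establishing that consecutive hybrids differ precisely in the $(x_j,x_{j+1})$-marginal and reducing that difference to the $\beta_{\mathbf{X}}(1)$ total-variation gap via the Markov property; the telescoping and the total-variation identity are then routine.
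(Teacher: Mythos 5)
Your proof is correct for a Markov chain, and its skeleton is identical to the paper's: your hybrid $\nu_j$ is exactly the paper's interpolating measure $\mu_{\{1\}}^{j}\times \mu_{[j+1:m]}$, and both arguments finish with the same telescoping and triangle inequality over the $m-1$ steps. The difference is in how each step is charged to $\beta_{\mathbf{X}}(1)$. You use the Markov property twice: first to factor $\mu_{[j:m]}$ through the transition kernel, so that consecutive hybrids differ only in the law of the pair $(x_j,x_{j+1})$ ($\rho_1$ versus $\pi\times\pi$) while sharing the same conditional future given $x_{j+1}$, and then to bound $\int h_j\, d(\rho_1 - \pi\times\pi)$ by the two-point total-variation form of $\beta_{\mathbf{X}}(1)$ (your passage from sets to $[0,1]$-valued integrands is valid because both measures are probabilities). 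The paper instead freezes the i.i.d.\ past via Fubini and bounds the section-wise difference between $\mu_{[k:m]}$ and $\mu_{\{k\}}\times\mu_{[k+1:m]}$ directly by the general-index-set form of the coefficient, taking $I=\{k\}$ and $J=[k{+}1{:}m]$, which uses no Markov structure at all.

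This difference matters downstream, and is the one genuine limitation of your write-up. The paper proves the lemma for arbitrary strictly stationary processes precisely because it is then applied, in the proof of Lem.~\ref{lem:blockprocess}, to the interlaced block process $\mathbf{Y}$, which is a function of the Markov chain but is in general not itself Markov; your kernel-factorization step is unavailable there. Moreover, the stronger-looking statement you actually establish --- the bound with the two-coordinate coefficient $\sup_B\lvert\rho_1(B)-(\pi\times\pi)(B)\rvert$ --- is false for general stationary processes, since pairwise (near-)independence of consecutive coordinates does not control joint dependence of the whole block. The repair is one line: in your per-step bound, decouple coordinate $j$ from the entire future block $[j{+}1{:}m]$ in one shot, invoking the general-index-set definition of $\beta(1)$ in place of the pair reduction; the hybrids, the telescoping, and the rest of your argument then go through verbatim without any Markov assumption.
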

\begin{proof}
By stationarity, Fubini's Theorem and the definition of the mixing
coefficients, we have for $k \in [m]$, that%

\[
\left\vert \mu^{k-1} _{\left\{ 1\right\} } \times \mu _{[k:m]}\left( B\right) -\mu^{k-1} _{\left\{ 1\right\} }\times
\mu _{\left\{ 1\right\} }\times \mu _{[k+1:m]}\left(
B\right) \right\vert \leq \beta _{\mathbf{X}}\left( 1\right) .
\]%
Then, again with stationarity and a telescopic expansion,%
\begin{eqnarray*}
\left\vert \mu _{[1:m]}\left( B\right) -\mu _{\left\{
1\right\} }^{m}\left( B\right) \right\vert  
&=&\left\vert \sum_{k=1}^{m-1}\left( \mu^{k-1} _{\left\{ 1\right\} }\times \mu _{[k:m]}\left( B\right) -\mu^{k-1} _{\left\{
1\right\} } \times \mu _{\left\{ 1\right\} }\times \mu _{[
k+1:m]}\left( B\right) \right) \right\vert  \\
&\leq &\sum_{k=1}^{m-1}\left\vert \mu^{k-1} _{\left\{ 1\right\} }\times
\mu _{[k:m]}\left( B\right) -\mu^{k-1} _{\left\{ 1\right\}
}\times \mu _{\left\{ 1\right\} }\times \mu _{[k+1:m]}\left( B\right) \right\vert  \\
&\leq &\left( m-1\right) \beta _{\mathbf{X}}\left( 1\right) .
\end{eqnarray*}
\end{proof}

Now recall the definition of the blocked variables%
\[
Y_{j}=\sum_{i=2\left( j-1\right) \tau +1}^{\left( 2j-1\right) \tau }X_{i}%
\quad \quad  \text{ and }\quad Y_{j}^{\prime }=\sum_{i=\left( 2j-1\right) \tau +1}^{2j\tau
}X_{i},~~\text{ for }j\in \mathbb{N}.
\]%
Since the blocked variables are separated by $\tau $ we have $\beta _{\mathbf{Y}}(1) =\beta_{\mathbf{Y}^{\prime }}(1)=\beta _{\mathbf{X}}(\tau)$. 

\lemkey*
\begin{proof}
We can write%
\[
\left\Vert \sum_{i=1}^{n}X_{i}\right\Vert =\left\Vert
\sum_{j=1}^{m}Y_{j}+\sum_{j=1}^{m}Y_{j}^{\prime }\right\Vert \leq \left\Vert
\sum_{j=1}^{m}Y_{j}\right\Vert +\left\Vert \sum_{j=1}^{m}Y_{j}^{\prime
}\right\Vert.
\]%
Thus%
\[
\Pr \left\{ \left\Vert \sum_{i=1}^{n}X_{i}\right\Vert >s\right\}  \leq \Pr
\left\{ \left\Vert \sum_{j=1}^{m}Y_{j}\right\Vert +\left\Vert
\sum_{j=1}^{m}Y_{j}^{\prime }\right\Vert >s\right\}  \leq 2\Pr \left\{ \left\Vert \sum_{j=1}^{m}Y_{j}\right\Vert >\frac{s}{2}%
\right\} ,
\]
where the last inequality follows from identical distribution of $Y_{j}$ and 
$Y_{j+1}$. The conclusion then follows from applying Lem.~\ref{Lemma
betamix} to the event $B=\left\Vert \sum_{j=1}^{m}Y_{j}\right\Vert >\frac{s}{%
2}$. 
\end{proof}
Any available bound on the probability in the right hand side of Lem.~\ref{lem:blockprocess} can then be substituted to give a bound on the trajectory. 
%Using \cite[Theorem 7-(i)]{MauPon13} for example immediately gives a bound of the second term of \eqref{eq:ggg-mp} on the trajectory. To bound the last term of \eqref{eq:ggg-mp} and obtain
To illustrate this we give a proof of Prop.~\ref{prop:11_main_bis} which we restate here for convenience.
\proponebis*
\begin{proof}
We use this Lem.~\ref{lem:blockprocess} 
with $X_{i}=\phi(x_{i})\otimes \phi(x_{i+1})- T$. We have
\[
\nonumber\mathbb{P}\left\{ 
\left\| {\hat T} - T \right\| >s \right\}  = \mathbb{P}\left\{ 
\left\| \sum_{i=1}^n X_i \right\| > ns\right\} \\
 \leq  2
\mathbb{P}\left\{ \left\| 
\sum_{j=1}^{m} Z_j \right\| >\frac{ns}{2}\right\} +2\left( m-1\right) \beta_{\mathbf{%
X}}\left( \tau-1 \right). 
\]
To bound the rightmost 
probability we then use Prop.~\ref{prop:11bis} with $A_i$ i.i.d. operator $\frac{1}{\tau} \sum_{i=1}^\tau \phi(x_i) \otimes \phi(x_{i+1})$ and $\EE (A_i^*A_i)^\frac{1}{2} \leq \|C\|$. We then solve for $\delta >( m-1) \beta ( \tau-1)$.
\end{proof}

%\vladi{Massi Lemma1 has $Z_i$ defined by $X_i$, while bellow is not exactly like this.} 
%\[
%X_{i}=\phi(x_{i})\otimes \phi(x_{i+1})- T,
%E\left[\phi(x_{i})\otimes x_{i+1}\right] 
%\]
%$A=\left( 1/\tau \right) \sum_{i=1}^{\tau }\phi(x_{i})\otimes \phi(x_{i+1})$, and $A_{1},\dots,A_{m}$ independent copies of $A$ and $Z_{1}=A_{1}-\EE\left[ A_{1}\right]= A_{1} - T$ to obtain
%\[\mathbb{P}\left\{ \left\Vert %\frac{1}{n}\sum_{i=1}^{n}\left( x_{i}\otimes x_{i+1}{\hat T} - T
%-E\left[ x_{i}\otimes x_{i+1}\right] \right)
%\right\Vert >s\right\} \leq 2%\mathbb{P}\left\{ \left\Vert \frac{1}{m}\sum_{j=1}^{m}A_{i}-E\left[ A_{i}%\right] \right\Vert >\frac{s}{2}\right\} +2\left( m-1\right) \beta _{\mathbf{%X}}\left( \tau-1 \right) \]%
%and then use Prop.~\ref{prop:11bis} to bound the probability on the right hand side. 
%In case $y_{t}=x_{t+1}$ replace $\beta _{\mathbf{X}}\left( \tau \right) $ by $\beta _{\mathbf{X}% }\left( \tau -1\right) $, because the block variables are then only separated by $\tau -1$.

%\VK{Proof of Proposition \ref{prop:11bis} from the main body is lacking. As we discussed before, it would be good if we could also have Theorem 3 bis.}

\section{Experiments}\label{app:exp}
We developed a Python module implementing different algorithms to perform KOR
%Koopman operator regression. 
Both CPUs and GPUs are supported. Code and experiments can be found at \href{https://github.com/CSML-IIT-UCL/kooplearn}{https://github.com/CSML-IIT-UCL/kooplearn}. The experiments have been conducted on a workstation equipped with an Intel(R) Core\textsuperscript{TM} i9-9900X CPU @ 3.50GHz, 48GB of RAM and a NVIDIA GeForce RTX 2080 Ti GPU.
\subsection{Noisy Logistic Map}
%Training details have been already discussed in the main text. 
We now show how the {\em trigonometric} noise introduced in~\cite{Ostruszka2000} allows the evaluation of the {\em true} invariant distribution, transition kernel and Koopman eigenvalues.

{\small
\begin{figure}[t!]
\begin{center}
\includegraphics[width=0.75\textwidth]{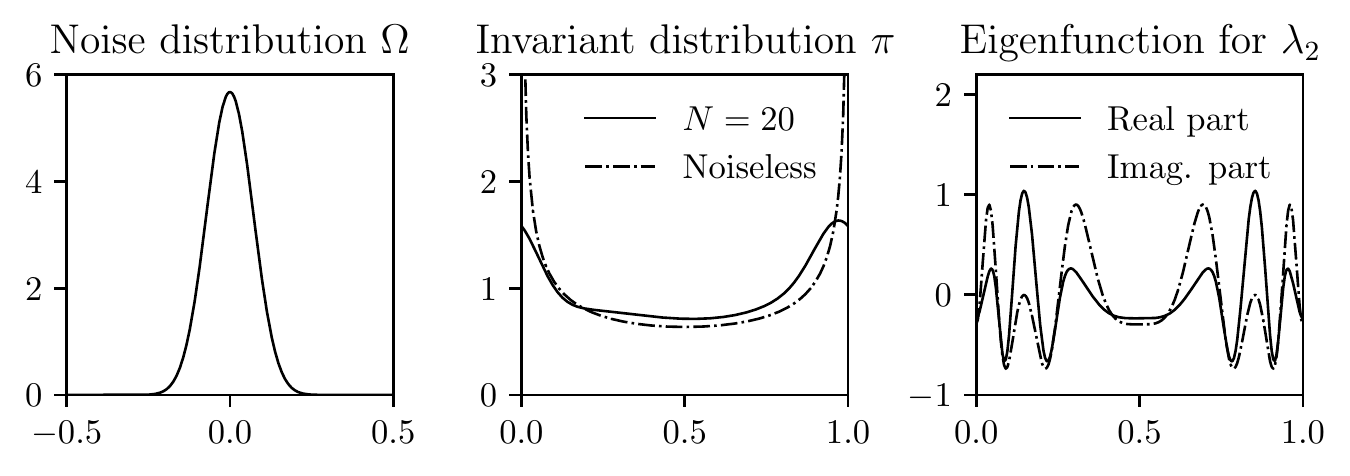}
\end{center}
    \caption{Noise distribution $\Omega$, invariant distribution $\pi$ and Koopman eigenfunction corresponding to the eigenvalue $\lambda_{2}$ for the case $N=20$. In the middle panel, the invariant distribution for the noiseless case ($N\to \infty$) is~\cite{Lasota1994} $\im_{{N \to \infty}}(dx) :=\left(\pi^{2}x(1-x)\right)^{-\nicefrac{1}{2}}dx$.}\label{fig:noisy_logistic_map_additonal_info}
\end{figure}
}
{\bf Trigonometric Noise.} We consider the {\em noisy logistic map}
\begin{equation*}
    x_{t + 1} = (4x_{t}(1 - x_{t}) + \xi_{t}) \mod 1 = (F(x_{t}) + \xi_{t}) \mod 1
\end{equation*}
over the state space $\X = [0 , 1]$. We have defined the logistic map $F(x) := 4x(1-x)$ for later convenience. Here, $\xi_{t}$ is i.i.d. additive noise with law ($N$ being an {\it even} integer) given by
\begin{equation*}
    \Omega(d\xi) := C_{N}\cos^{N}(\pi \xi)d\xi \qquad \xi \in [-0.5, 0.5].
\end{equation*}
The normalization constant is given by $C_{N}:= \pi/{\rm B}\left( \frac{N + 1}{2}, \frac{1}{2}\right)$, where ${\rm B}(\cdot, \cdot)$ is Euler's beta function. The noise is additive and as noted in Example 1 of Sec.~\ref{sec:koopman_theory}, the transition kernel is
\begin{equation}\label{eq:trigonometric_noise_transition_kernel}
    \transitionkernel(x, dy) = \Omega(dy - F(x)) = C_{N}\cos^{N}\left(\pi y - \pi F(x)\right)dy.
\end{equation}
We now show that the transition kernel~\eqref{eq:trigonometric_noise_transition_kernel} is {\em separable}. Indeed, for $i \in [0{:}N]$ let us define the functions
\begin{equation*}
        \beta_{i}(x) := \sqrt{C_{N}\binom{N}{i}} \cos^{i}(\pi  x)\sin^{N - i}(\pi x), \quad \quad {\rm and}\quad
        \alpha_{i}(x) := \left(\beta_{i} \circ F \right)(x).
\end{equation*}
By a simple application of the binomial theorem one has that (with a slight abuse of notation)
\begin{equation*}
    \transitionkernel(x, y) = \sum_{i = 0}^{N} \alpha_{i}(x)\beta_{i}(y),
\end{equation*}
implying that the transition kernel is separable and of finite rank $N + 1$. Therefore, the Koopman operator $\Koop$ is compact operator of  a finite rank operator at most $N+1$. Moreover, with the proper choice of the kernel $k$, we have that $\alpha_i\in\RKHS$ for all $i\in[0{:}N]$, implying that the Koopman operator regression problem is well-specified. 

Further, the eigenvalue equation for the Koopman operator requires to find $h:[0,1]\to[0,1]$ and $\lambda \in \C$ satisfying
\begin{equation}\label{eq:trigonometric_noise_eigenvalue_equation}
    \lambda h(x) = \int_{0}^{1} h(y)p(x, y)dy \qquad \text{for all } x \in[0,1].
\end{equation}
The solution of this {\em homogeneous Fredholm integral equation of the second kind}~\eqref{eq:trigonometric_noise_eigenvalue_equation} is easily obtained since the transition kernel is separable (see e.g. Section 23.4 of~\cite{Riley2006}). Indeed, let $P$ be the $(N + 1)\times(N + 1)$ matrix whose elements are $P_{ij} := \int_{0}^{1}\beta_{i}(x)\alpha_{j}(x)dx$. For any $\lambda$ eigenvalue of $P$ with corresponding eigenvector $(c_{i})_{i = 0}^{N}$, the function $h(x) := \sum_{i=0}^{N}\alpha_{i}(x)c_{i}$ 
%\begin{equation} h(x) := \sum_{i=0}^{N}\alpha_{i}(x)c_{i}
%\end{equation}
is an eigenfunction of the Koopman operator with eigenvalue $\lambda$. With a similar argument, let $(d_{i})_{i = 0}^{N}$ be the eigenvector of $P^{T}$ corresponding to the eigenvalue $\lambda = 1$. The invariant distribution (up to a normalization constant) $\im$ is given by
\begin{equation*}
    \im(x)dx = \left(\sum_{i=0}^{N}\beta_{i}(x)d_{i}\right)dx.
\end{equation*}

Every result presented in the main text concerned the case $N=20$. In Fig.~\ref{fig:noisy_logistic_map_additonal_info} we show the noise distribution $\Omega$, invariant distribution $\pi$ and Koopman eigenfunction corresponding to the eigenvalue $\lambda_{2}$ for the case $N=20$.
%\pietro{Add also a plot of the eigenvalue error as a function of the reg parameter?}\vladi{Maybe plot how estimators approximate eigenfunctions for $\lambda_{2/3}$}

\subsection{Additional experiment: the Lorenz63 Dynamical System}
The Lorenz63 system is given by the solution the differential equation $$\frac{d\bm{x}}{dt} = \begin{pmatrix}\sigma(x_{2} - x_{1}) \\ x_{1}(\mu - x_{3}) \\ x_{1}x_{2} - \beta x_{3} \end{pmatrix}.$$ In our experiments we have used the standard parameters $\sigma = 10$, $\mu = 28$ and $\beta = 8/3$. The solution to the ODE was obtained using the explicit Runge-Kutta method of order 5(4) as implemented by the function \verb|solve_ivp| of the Python library \verb|Scipy|~\cite{Virtanen2020}. We discarted any data before $t=100$ to give time to the solution to converge to the stable attractor~\cite{Tucker1999} and then sampled a data point every $\Delta t = 0.1$ (in natural time units). The Lorenz63 attractor is also known to be mixing~\cite{Luzzatto2005}.

\subsection{Additional details on the numerical verification of the uniform bounds}
In this section we discuss how we have obtained the results presented in Fig.~\ref{fig:uniform_bound_logistic} of the main main text for the logistic map and in Fig.~\ref{fig:uniform_bound_lorenz63} for the Lorenz63 dynamical system.

As remarked in the main text, the proposed RRR estimator and the classical PCR estimator satisfy the same uniform bound. However, the empirical risk may be (possibly much) smaller for the RRR estimator and hence preferable. To this end, we evaluated, as a function of the number of training points, the empirical risk of PCR and RRR estimators under the same HS-norm constraint, needed to satisfy the assumption of Theorem~\ref{thm:UB_main_text}.

To achieve the same HS norm for both estimators we first trained the PCR estimator and computed its HS norm. We then adjusted the Tikhonov regularization for the RRR estimator to a value yielding the same HS norm of the PCR estimator. We remark that this procedure in general does not yield the best (w.r.t. the regularization parameter) RRR estimator, but allows us to compare PCR and RRR within the Ivanov setting considered in Theorem~\ref{thm:UB_main_text}. Moreover, we have also verified that the upper bound on the scaling $\approx n^{-1/2}$ derived in Theorem~\ref{thm:UB_main_text} empirically holds. 

In both Logistic map and Lorenz63 each experiment was independently repeated 100 times, the number of test points is $5\times 10^4$ and RRR consistently attains smaller empirical risk than PCR. 

{\small
\begin{figure}[t!]
\begin{center}
\includegraphics[width=0.7\textwidth]{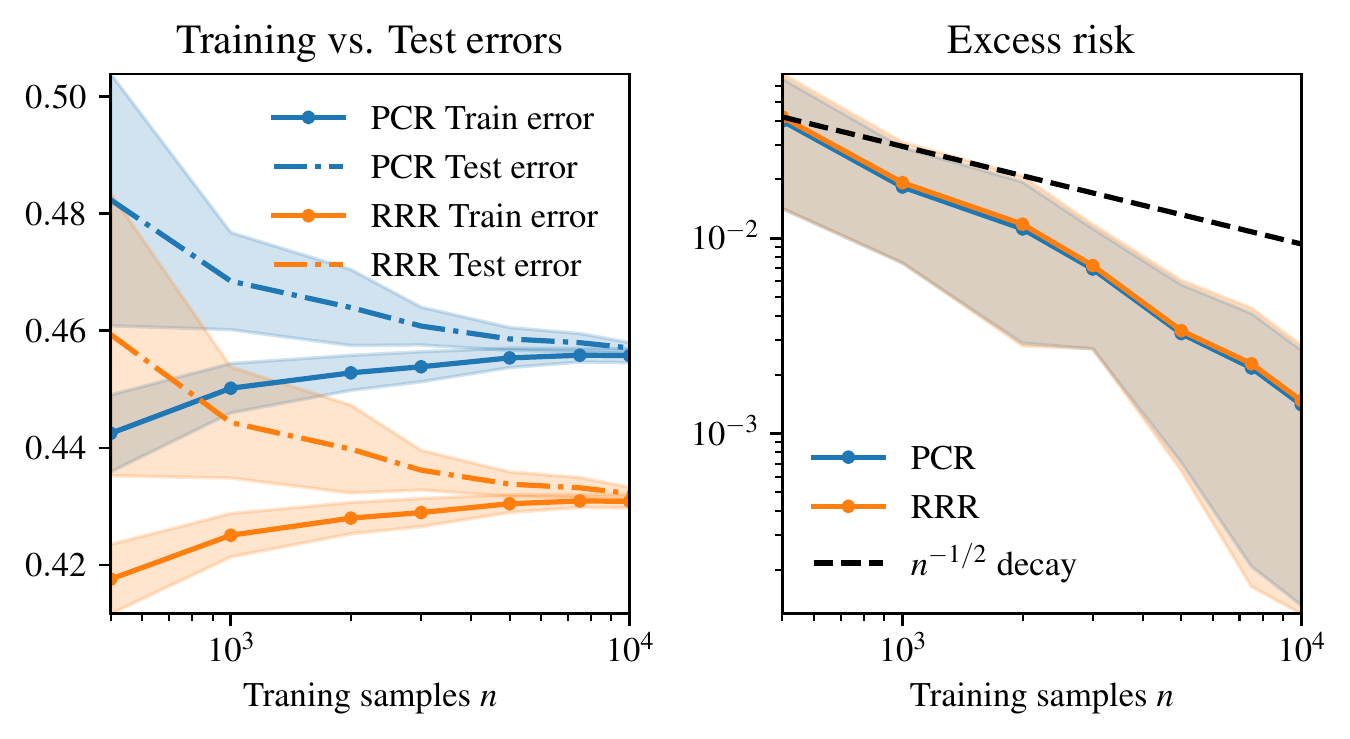}
\end{center}
\caption{Numerical verification of the uniform bound presented in Theorem~\ref{thm:UB_main_text} for the Lorenz63. Left panel: the training and test risk for RRR are consistently than PCR. Right panel: the deviation between training and test risk decreases faster than $n^{-1/2}$ as a function of the number of training samples.}  \label{fig:uniform_bound_lorenz63}
\end{figure}
}

\begin{table}[t!]
    \small
    \caption{Comparison of the estimators trained for the Beijing air quality experiment.
    }\label{tab:Beijing_training_errors}
    \centering
    \begin{tabular}{r|cc}
        \toprule
        Estimator & Training error & Test error  \\ 
        \midrule
        PCR & $  0.5809 $ & $ 0.5923 $  \\
        RRR & $ \bm{0.5780}$ & $\bm{0.5899}$  \\
        \bottomrule
    \end{tabular}
\end{table}

\begin{table}[ht]
    \small
    \caption{Delay between wind speed peaks and PM2.5 concentration peaks. Positive values correspond to peaks in wind speed occurring {\em after} peaks in PM2.5 concentration. Coupled modes correspond to complex conjugate pairs. Modes $1,6,9$ and $10$ correspond to real eigenvalues and delays can't be evaluated.
    }\label{tab:Beijing_delay_table}
    \centering
    \begin{tabular}{r|ccccccc}
        \toprule
        Station & Mode 1 & Modes 2-3 & Modes 4-5 & Mode 6 & Modes 7-8 & Mode 9 & Mode 10 \\ 
        \midrule
        Guanyuan &  - & 1.92 hrs. & 2.74 hrs. &  - & 1.69 hrs. &  - &  - \\
        Aotizhongxin &  - & 1.89 hrs. & 2.61 hrs. &  - & 1.64 hrs. &  - &  - \\
        Wanshouxigong &  - & 2.01 hrs. & 2.82 hrs. &  - & 1.87 hrs. &  - &  - \\
        Tiantan &  - & 2.0 hrs. & 2.92 hrs. &  - & 1.83 hrs. &  - &  - \\
        Nongzhanguan &  - & 2.01 hrs. & 2.96 hrs. &  - & 1.84 hrs. &  - &  - \\
        Gucheng &  - & 2.06 hrs. & 2.54 hrs. &  - & 1.77 hrs. &  - &  - \\
        Wanliu &  - & 2.01 hrs. & 3.08 hrs. &  - & 1.66 hrs. &  - &  - \\
        Changping &  - & 2.04 hrs. & 2.79 hrs. &  - & 1.51 hrs. &  - &  - \\
        Dingling &  - & 2.0 hrs. & 2.67 hrs. &  - & 1.31 hrs. &  - &  - \\
        Huairou &  - & 2.02 hrs. & 2.31 hrs. &  - & 1.45 hrs. &  - &  - \\
        Shunyi &  - & 1.93 hrs. & 2.56 hrs. &  - & 1.42 hrs. &  - &  - \\
        Dongsi &  - & 1.97 hrs. & 2.76 hrs. &  - & 1.8 hrs. &  - &  - \\
        \bottomrule
    \end{tabular}
\end{table}

{\small
\begin{figure}[t!]
\centering
\includegraphics[width=0.6\textwidth]{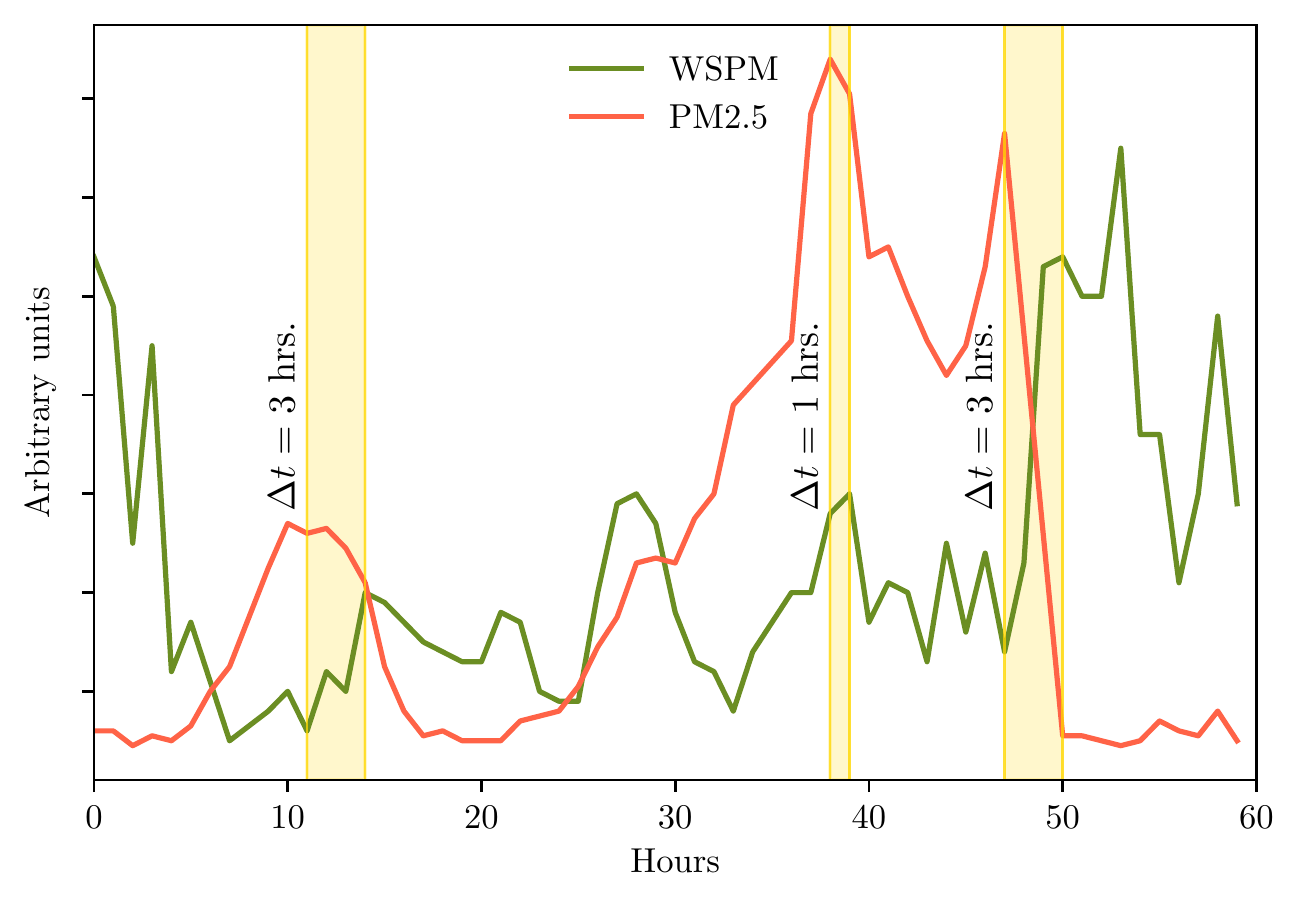}
    \caption{60 hours of data collected in the Gucheng station. Three peaks of PM2.5 concentration followed by peaks in wind speed. We have annotated the delay in hours between the two peaks.}\label{fig:Beijing_mode_delay} 
\end{figure}
}
\newpage
\subsection{Alanine dipeptide: additional plots}\label{app:ala2}

{\small
\begin{figure}[h!]
\begin{center}
\includegraphics[width=0.8\textwidth]{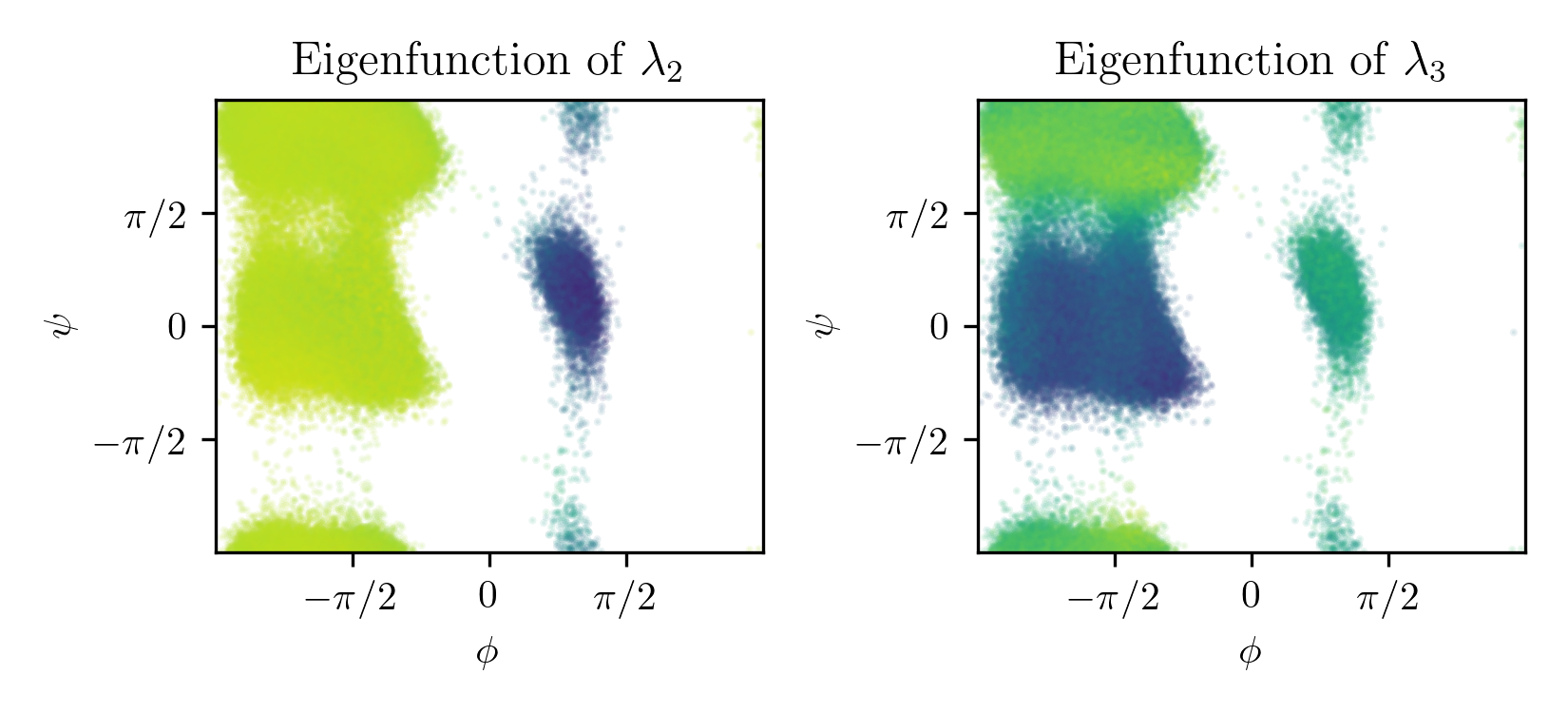}
\end{center}
\caption{Estimated eigenfunctions in the dihedral angle space of Alanine dipeptide. Each point correspond to a data point in the trajectory. The color encodes the value of the eigenfunction.}  \label{fig:ala2_lead_eigenfunctions}
\end{figure}
}

\subsection{Additional experiment: Beijing Air Quality Dataset}
This dataset~\cite{Zhang2017} consists of hourly measurements of six different air-pollutants along with relevant meteorological variables. Measurements were collected at twelve air-quality monitoring sites in Beijing, from March 1, 2013 to February 28, 2017. The analysis in~\cite{Zhang2017} showed that the presence of Particolate Matter smaller than $2.5~\mu m$ (PM2.5) is highly correlated to meteorological variables, like humidity and low Wind Speeds (WSPM). In this experiment we show how the modal decomposition of the Koopman operator can enrich the analysis in \cite{Zhang2017} with dynamical insights. Following this work, we analyse each season of the year separately for better meteorological homogeneity. 

We report data for RRR and PCR estimators ($r = 10$) over 7000 out of the 8564 hourly data points collected in winter using an Exponential kernel. Training and test errors are summarised in Tab.~\ref{tab:Beijing_training_errors}, RRR achieving slightly smaller test and training errors. The optimal regularization parameter for RRR was chosen by grid search, splitting the data via the \texttt{TimeSeriesSplit} as implemented in the \texttt{scikit-learn}~\cite{scikit-learn} package. Regularization $\reg = 10^{-4}$ turned out to be optimal.

As showed in~\cite{Proctor2015}, analysing the phase difference of modes corresponding to different observables allows us to infer whether variations of one observable are followed or anticipated by variations of another. Indeed, the modes corresponding to wind speed (WSPM) and PM2.5 concentration reported in Table~\ref{tab:Beijing_delay_table}, consistently point out that peaks in PM2.5 are {\it followed} by peaks in WSPM with a delay of $\approx 2$ hours. This is reasonable as high wind speeds favour the dispersion of PM2.5, and as wind ramps up toward a peak, PM2.5 concentration is reduced. Reference~\cite{Walcek2002}, indeed, argue that pollution concentration is fully readjusted on the basis of wind conditions already after $4$ hours. 

As a a final illustrative example, in Fig.~\ref{fig:Beijing_mode_delay} we show an excerpt spanning 60 hours of data collected in the Gucheng station. We have manually identified three peaks in the PM2.5 concentration followed only $\approx 2/3$ hours later by peaks in the wind speed.

\subsection{%KOR
Koopman Operator Regression 
with Deep Learning Embeddings}
We have used a Linear kernel $\ell^{-1}\left\langle x,x^{\prime}\right\rangle$ and a Gaussian kernel with length scale $\ell$. Here $\ell = 28\times 28 = 784$ is the number of pixels in each image. The regularization parameter was chosen by grid search, splitting the data via the \texttt{TimeSeriesSplit} as implemented in the \texttt{scikit-learn}~\cite{scikit-learn} package. The optimal regularization parameters are, respectively $\reg_{{\rm lin}} {=} 48.33$ and $\reg_{{\rm gauss}} {=} 7.85\cdot 10^{-3}$. 

The CNN kernel is $\left\langle \phi_{{\bm \theta}}(x), \phi_{{\bm \theta}}(x^{\prime})\right\rangle$, where the architecture of the network is given by $\phi_{{\bm \theta}} := {\rm Conv2d}(1,16;5)\rightarrow{\rm ReLU}\rightarrow{\rm MaxPool}(2)\rightarrow{\rm Conv2d}(16,32;5)\rightarrow{\rm ReLU}\rightarrow{\rm MaxPool}(2)\rightarrow{\rm Dense}(1568, 10)$. Here, the arguments of the convolutional layers are Conv2d(\verb|in_channels|, \verb|out_channels|; \verb|kernel_size|). The Tikhonov regularization parameter for the CNN kernel is $\reg_{{\rm CNN}} = 10^{-4}$. The network $\phi_{{\bm \theta}}$ has been pre-trained as a digit classifier using the cross entropy loss function. Training was performed with the Adam optimizer (learning rate $= 0.01$) for 20 epochs (batch size = 100). The training dataset corresponds to the {\em same} 1000 images used to train the Koopman estimators.

In Fig.~\ref{fig:CNN_kernel_full_seeds} we compare Linear, Gaussian, and CNN kernels for different initial seeds. As it can be noticed the CNN kernel remains strong across the board, while the forecasting ability of the linear and Gaussian kernels quickly deteriorate as $t$ increases.

\begin{figure}[th]
\caption{Comparison of different kernels in the generation of a series of digits. Starting from a seed image, the next ones are obtained by iteratively using a rank-10 RRR Koopman operator estimator. As in the main text, the first row of each panel corresponds to the Linear kernel, second to Gaussian kernel and last row to CNN kernel.}\label{fig:CNN_kernel_full_seeds}
\begin{tabular}{cc}
  \includegraphics[width=0.47\textwidth]{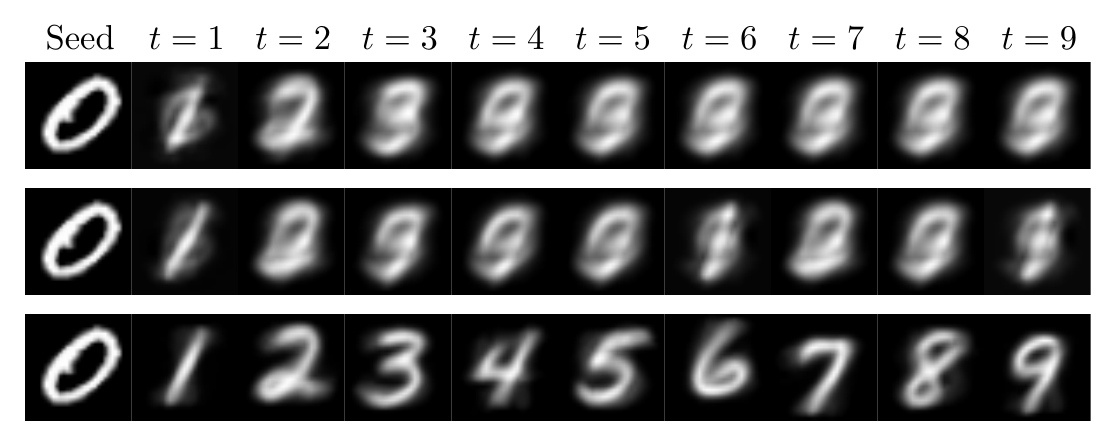} &   \includegraphics[width=0.47\textwidth]{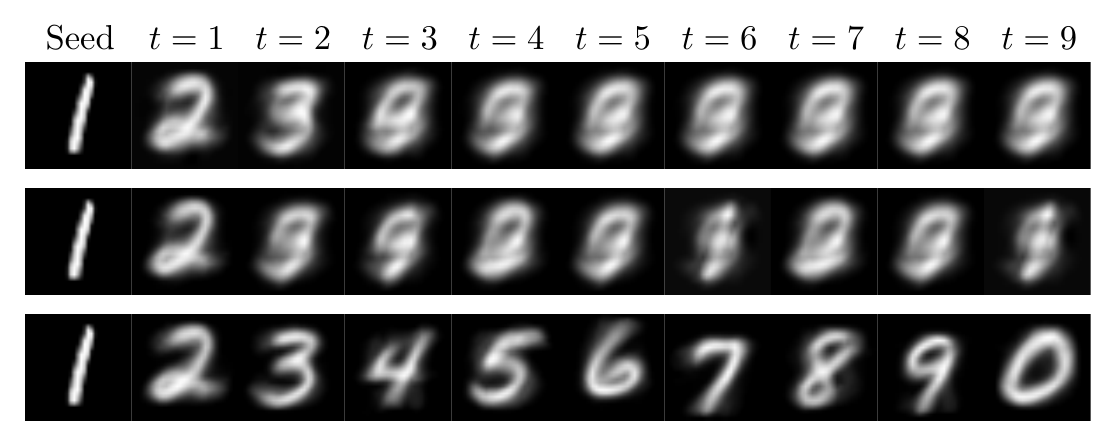} \\
 \includegraphics[width=0.47\textwidth]{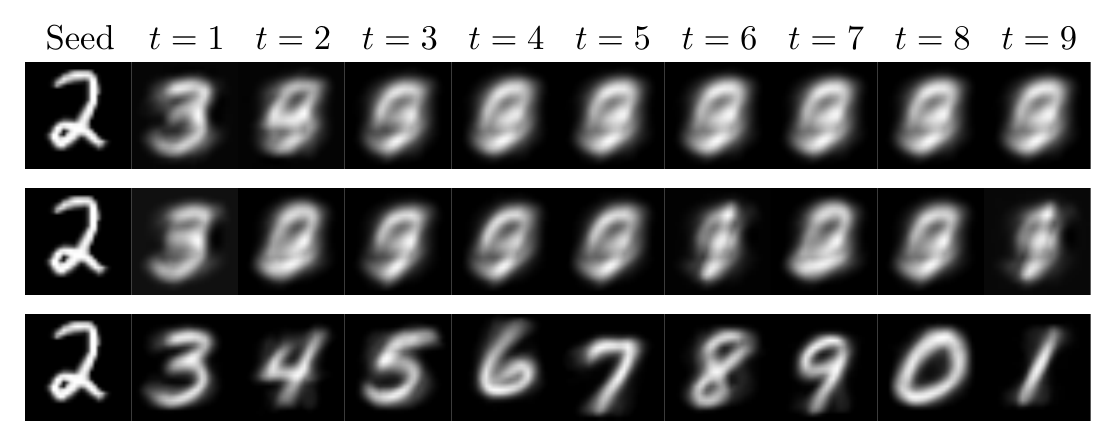} &   \includegraphics[width=0.47\textwidth]{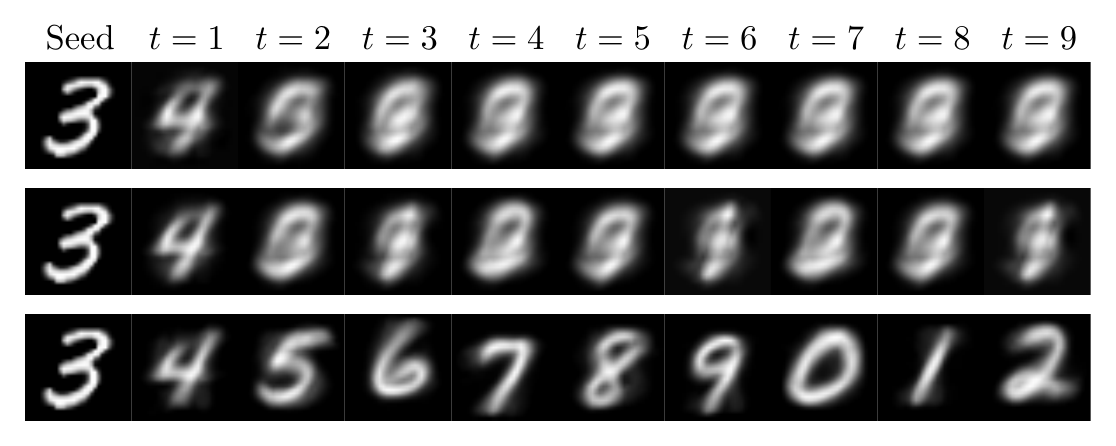} \\
 \includegraphics[width=0.47\textwidth]{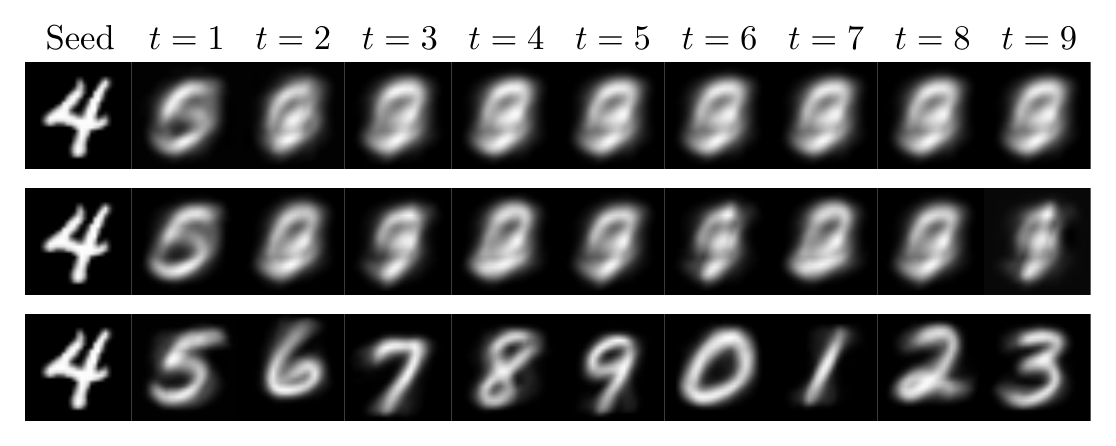} &   \includegraphics[width=0.47\textwidth]{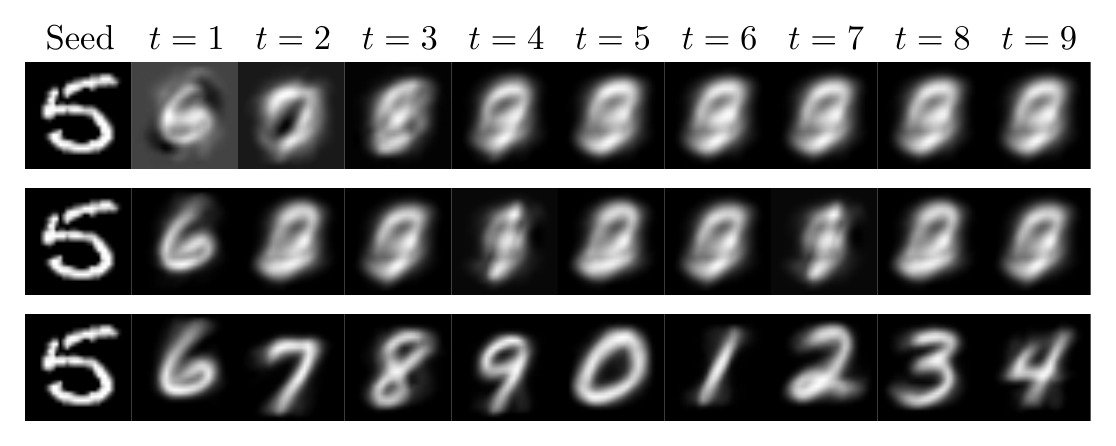} \\
 \includegraphics[width=0.47\textwidth]{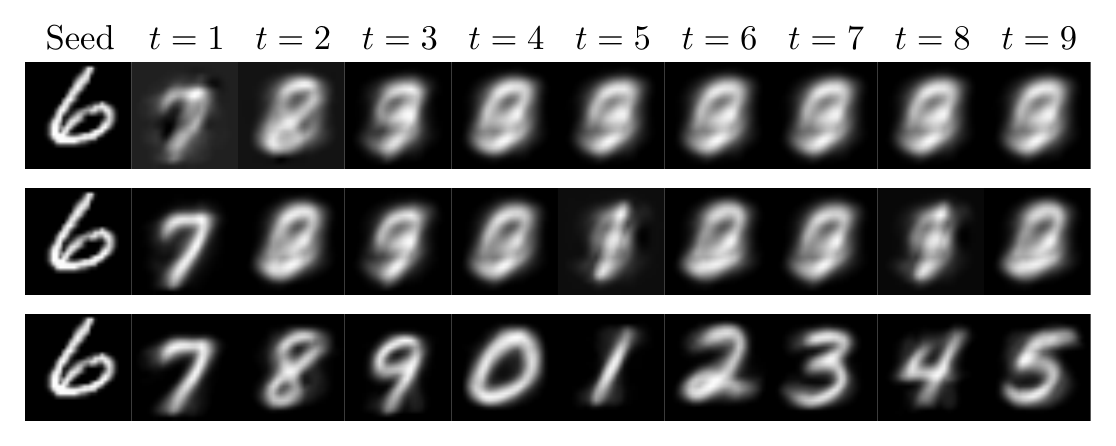} &   \includegraphics[width=0.47\textwidth]{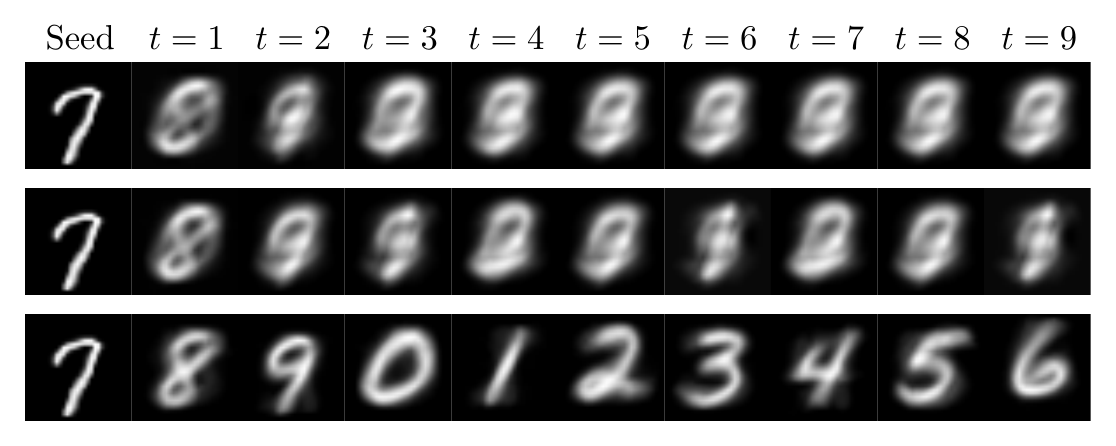} \\
 \includegraphics[width=0.47\textwidth]{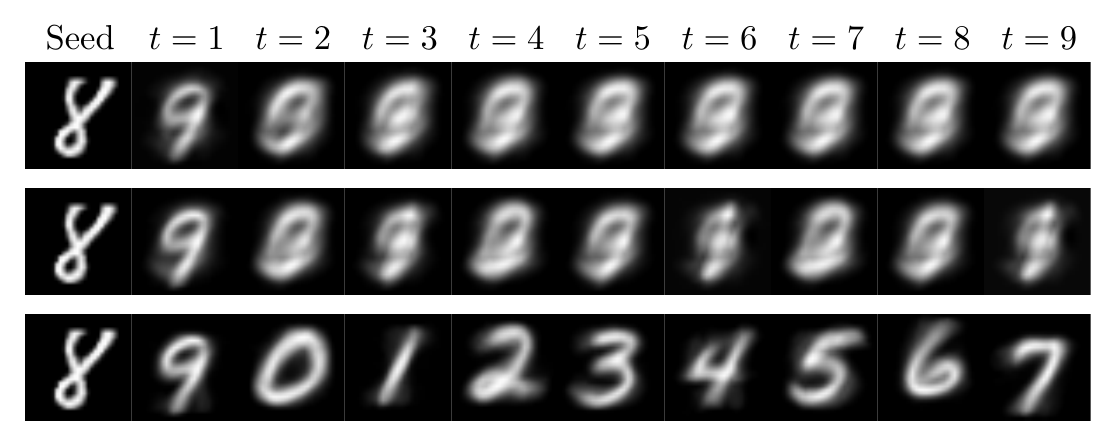} &   \includegraphics[width=0.47\textwidth]{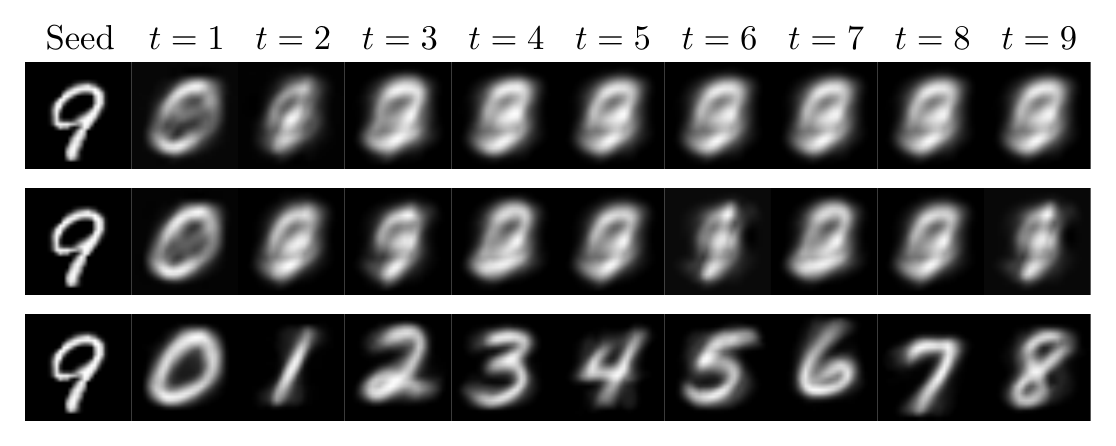} \\
\end{tabular}
\end{figure}

%\section{Learning Bounds}

%\massi{update opening if we have also eigenvalues/vector bounds}
%In this section, we present two complementary statistical learning analyses and discuss how to deal with training data collected along a sampled trajectory of the dynamical system.
%The first gives a uniform bound on the deviation between the true and the empirical risk over a class of bounded Hilbert Schmidt operators. The second gives an excess risk bound under additional assumptions. In both cases we discuss how to deal with training data collected along a sampled trajectory of the dynamical system.
%\massi{Notation: change $G^* \mapsto G$ or we could use $W$}

\end{document}